\documentclass[11pt]{article}

\usepackage[margin=1in]{geometry}

\usepackage{amsmath,amsfonts,bm}

\def\eqref#1{equation~\ref{#1}}

\def\plaineqref#1{\ref{#1}}

\def\1{\bm{1}}

\DeclareMathAlphabet{\mathsfit}{\encodingdefault}{\sfdefault}{m}{sl}
\SetMathAlphabet{\mathsfit}{bold}{\encodingdefault}{\sfdefault}{bx}{n}

\usepackage{amsmath,amssymb,amsthm,mathtools}
\usepackage{bm}
\usepackage{natbib}
\usepackage{booktabs}
\usepackage{array,tabularx}
\usepackage{placeins}
\usepackage{float}
\usepackage{algorithm}
\usepackage{algpseudocode}
\usepackage{graphicx}
\usepackage{microtype}
\usepackage{xcolor}
\usepackage{hyperref}
\hypersetup{
  hidelinks,
  pdftitle={Iterative Exact Discrete Guidance for Energy-Based Sampling},
  pdfauthor={Yuwen Qian, Yidong Ouyang, Zhengyan Wan, Hongyuan Zha},
  pdfsubject={Machine Learning},
  pdfkeywords={discrete sampling, energy-based models, discrete diffusion}
}
\usepackage{url}

\newcommand{\state}{\mathcal{X}}
\newcommand{\alphabet}{\mathcal{S}}
\newcommand{\refdist}{\pi_{\mathrm{ref}}}
\newcommand{\targetdist}{\pi}
\newcommand{\tiltop}{\mathcal{T}}
\newcommand{\tv}{\operatorname{TV}}
\newcommand{\kld}{\operatorname{KL}}
\newcommand{\osc}{\operatorname{osc}}
\newcommand{\ress}{\mathrm{rESS}}
\newcommand{\supp}{\operatorname{supp}}
\newcommand{\indc}[1]{\mathbf{1}\!\left\{#1\right\}}
\newcommand{\dd}{\mathop{}\!\mathrm{d}}

\newtheorem{theorem}{Theorem}
\newtheorem{proposition}{Proposition}
\newtheorem{lemma}{Lemma}
\newtheorem{corollary}{Corollary}
\theoremstyle{definition}

\theoremstyle{remark}

\numberwithin{equation}{section}

\newcommand{\authmark}[1]{\raisebox{0.65ex}{\scriptsize #1}}

\title{Iterative Exact Discrete Guidance for Energy-Based Sampling}

\author{%
  Yuwen Qian\authmark{1}, Yidong Ouyang\authmark{2},
  Zhengyan Wan\authmark{2}, and Hongyuan Zha\authmark{3,*}\\[0.5em]
  {\small \authmark{1}School of Science and Engineering,
    The Chinese University of Hong Kong, Shenzhen}\\
  {\small \authmark{2}Department of Statistics,
    University of California, Los Angeles}\\
  {\small \authmark{3}School of Data Science,
    The Chinese University of Hong Kong, Shenzhen}\\
  {\small \authmark{*}Corresponding author}%
}
\date{}

\begin{document}

\maketitle

\setlength{\parindent}{0pt}
\setlength{\parskip}{0.5pc}

\begin{abstract}
Sampling from unnormalized distributions over large discrete state spaces
becomes difficult when a multimodal target is far from a tractable reference.
We introduce \emph{Iterative Exact Discrete Guidance} (IEDG), a population-exact,
trajectory-wise guidance framework for unnormalized discrete targets. Rather than learn the full reference-to-target correction in one step, IEDG introduces a global Boltzmann tilt along an annealing trajectory. Each stage learns a stage-local posterior correction for an incremental Boltzmann tilt of the current source, while the resulting corrections are accumulated relative to a fixed analytic posterior. At the population optimum, exact stage posteriors recover the correct reverse dynamics, whose exact simulation reproduces the target distribution.

IEDG chooses stage increments by relative effective sample size (rESS), which
controls R\'enyi-$2$ displacement and
locally adapts the step size to the thermodynamic geometry of the annealing path. Our stagewise total-variation analysis shows that limited overlap amplifies Bregman fitting
error by $1/\sqrt{\mathrm{rESS}}$, while posterior, simulation, and truncation
errors enter additively. IEDG improves all distribution-level errors over the neural baselines on ordered, exactly enumerated Ising $4\times4$, while substantially
reducing one-shot errors on Ising/Potts $16\times16$ across thermodynamic
regimes and attaining the best neural-sampler result on several reported
local-statistic and phase-coverage metrics. On
Max-Cut, its best-of-$512$ and average-sample ratios exceed
all the baselines. Code and artifacts are available at
\url{https://github.com/StillFantasy123/iterative-exact-discrete-guidance}.
\end{abstract}

\section{Introduction}
\label{sec:introduction}

Many problems in statistical physics, probabilistic inference, and
combinatorial optimization specify a target only through an unnormalized
energy, $\targetdist(x)\propto\refdist(x)e^{-E(x)}$.
Unlike energy minimization, sampling must reproduce how probability mass is
distributed across configurations, including uncertainty, coexisting phases,
and diverse near-optimal solutions.

Discrete diffusion and flow models amortize transport from a simple reference
by learning reverse rates or endpoint posteriors
\citep{austin2021structured,sun2023score,campbell2024generative,lou2024discrete}.
Recent neural samplers train them directly from unnormalized energies
\citep{ou2025dnfs,zhu2025mdns}. A recurring obstacle is the
global source-to-target tilt: poor overlap concentrates its density ratio on rare
source states, destabilizing learning and mode coverage. Progressive methods
replace this global transport with a sequence of easier problems
\citep{guo2026pdns}.

Discrete Guidance Matching (DGM) shows that an endpoint reweighting can be
converted into an exact correction of a discrete flow's denoising posterior
\citep{wan2026discrete}. Specifically, the target posterior is obtained by
reweighting with a conditional expectation of the
terminal density ratio under the source process. However, when applied directly to energy-based sampling, it must learn the entire reference-to-target correction and becomes difficult under poor overlap.

We introduce \emph{Iterative Exact Discrete Guidance} (IEDG), a
trajectory-wise extension of guidance matching. We first build a global Boltzmann
tilt along an annealing trajectory. At each stage, IEDG learns a mild energy correction from the current sampler and rewrites the updated posterior relative to the original analytic reference. This construction folds all previous corrections into one guide, so terminal inference does not replay the annealing chain. Ideally, each stage uses its source's exact posterior, so the stagewise Bayes updates compose exactly; Section~\ref{sec:theory} bounds the error from instead using the previous learned posterior, together with fitting, simulation, and truncation errors.

IEDG chooses each annealing increment by relative effective sample size
(rESS), taking shorter steps when the current source overlaps poorly with the
next target. The same overlap also controls stability: for the same error in learning a stagewise guidance correction, the resulting sampling error scales as $1/\sqrt{\mathrm{rESS}}$. Thus rESS links path design directly to the propagation of learning error, and our stagewise stability recursion carries these local errors toward the terminal distribution.

\paragraph{Contributions.}
Our main contributions are:
\begin{itemize}
    \item We introduce IEDG, a trajectory-wise guidance framework that replaces a difficult one-shot DGM correction with a sequence of variance-controlled local tilts, each applied as a posterior correction relative to the current source. Its ideal population updates compose exactly, while terminal inference retains only one accumulated guide.

    \item We develop an ESS-controlled annealing path. Population rESS defines a
    finite-step R\'enyi-$2$ trust region, locally recovers equal
    thermodynamic-length increments and quantifies how limited overlap amplifies stagewise Bregman fitting error in multistage total-variation.

    \item Across exact-distribution recovery, lattice sampling, and graph optimization,
    IEDG substantially improves one-shot guidance on enumerated Ising $4\times4$ and
    Max-Cut, while remaining competitive across magnetization, correlation,
    and phase-coverage metrics on Ising and Potts $16\times16$, with improvements on several local-statistic and phase-coverage metrics.
\end{itemize}

\section{Related Work}
\label{sec:related-work}

\paragraph{Discrete diffusion, flow, and guidance.}
Discrete diffusion learns reverse kernels, rates, or density ratios, while
Discrete Flow Matching marginalizes endpoint-conditional rates using a learned
posterior
\citep{austin2021structured,sun2023score,lou2024discrete,campbell2024generative,gat2024discrete}.
Guidance can instead reweight discrete reverse dynamics through classifiers or
CTMC rate corrections \citep{schiff2025simple,nisonoff2025unlocking}. Our
closest foundation is DGM, whose Bayes identity gives an exact posterior
correction from a terminal density ratio \citep{wan2026discrete}.

\paragraph{Neural samplers for unnormalized discrete targets.}
Existing methods use variational objectives, path-space importance weights,
Kolmogorov residuals, or stochastic optimal control
\citep{wu2019solving,sanokowski2025sdds,holderrieth2025leaps,ou2025dnfs,zhu2025mdns,du2026metadns,guo2026dasbs}.
IEDG instead learns conditional incremental tilts and accumulates them relative
to a fixed analytic posterior base.

\paragraph{Progressive transport.}
AIS and SMC bridge targets by reweighting and Markov transitions, with ESS-based
temperature adaptation being well established
\citep{neal2001annealed,delmoral2006sequential,zhou2016automatic}.
Learned variants fit annealed maps or proximal path-space updates
\citep{arbel2021annealed,matthews2022continual,guo2026pdns}. IEDG uses stages to
construct posterior corrections rather than a composition of transport maps;
terminal sampling evaluates one accumulated guide. Appendix~\ref{app:related-work}
gives detailed comparisons.

\section{Preliminaries}
\label{sec:preliminaries}

\subsection{Discrete energy-based sampling}

Let $\alphabet$ be a finite alphabet and $\state=\alphabet^D$. For any
distribution $P$ on $\state$ and positive weight function
$w:\state\to(0,\infty)$, define the normalized tilt of $P$ by
\begin{equation*}
    \tiltop_w(P)(x)
    :=
    \frac{P(x)w(x)}
    {\sum_{x'\in\state}P(x')w(x')}.
\end{equation*}
Given a tractable reference distribution $\refdist$ and an energy function
$E:\state\to\mathbb R$, our goal is to sample from the energy-tilted
distribution
\begin{equation*}
    \targetdist(x)
    =
    \frac{\refdist(x)\exp\{-E(x)\}}
    {\sum_{x'\in\state}\refdist(x')\exp\{-E(x')\}}.
\end{equation*}
Thus $\targetdist=\tiltop_{\exp\{-E\}}(\refdist)$. We assume access to samples from $\refdist$ and pointwise evaluations of $E$,
but neither samples from $\targetdist$ nor its partition function. We write
$x^d$ for coordinate $d$ of $x$. In our experiments,
$\nu=\operatorname{Unif}(\alphabet)$ and
$\refdist=\nu^{\otimes D}$, where $\nu^{\otimes D}$ denotes the $D$-fold
product distribution.

\subsection{Discrete flow models}

Following discrete flow matching
\citep{campbell2024generative,wan2026discrete}, let $q_1$ be a terminal
distribution and let $q_{t\mid 1}(x_t\mid x_1)$ be a conditional probability path
from a noise law $q_0$ at $t=0$ to the endpoint $x_1$ at $t=1$. Its marginal and
posterior are
\begin{align}
    q_t(x_t)
    &=\sum_{x_1\in\state} q_{t\mid 1}(x_t\mid x_1)q_1(x_1),
    \qquad
    q_{1\mid t}(x_1\mid x_t)
    =\frac{q_{t\mid 1}(x_t\mid x_1)q_1(x_1)}{q_t(x_t)}.
    \notag
\end{align}
We write
$q_{1\mid t}^{d}(z\mid x_t):=\mathbb P(X_1^d=z\mid X_t=x_t)$ for the $d$th
coordinate posterior; the full posterior need not factorize. Marginalizing
endpoint-conditional transition rates under $q_{1\mid t}$ yields a CTMC following
$q_t$ and, for coordinate-factorized paths, requires only these coordinate
posteriors \citep{campbell2024generative}. Appendix~\ref{app:background} gives the
complete rate construction.

As in DGM, we use the uniform-replacement path. Let
$\kappa:[0,1]\to[0,1]$ be differentiable and nondecreasing with
$\kappa_0=0$ and $\kappa_1=1$, and let $\nu$ be a full-support categorical
distribution. The conditional path factorizes across coordinates:
\begin{equation*}
    q_{t\mid1}(x_t\mid x_1)
    =\prod_{d=1}^{D}q_{t\mid1}^{d}(x_t^d\mid x_1^d),
    \qquad
    q_{t\mid1}^{d}(x_t^d\mid x_1^d)
    =(1-\kappa_t)\nu(x_t^d)
    +\kappa_t\indc{x_t^d=x_1^d}.
\end{equation*}
Here $\indc{\cdot}$ denotes an indicator. For the reference endpoint law $\refdist=\nu^{\otimes D}$, the path is stationary and its coordinate posterior is available analytically:
\begin{equation}
    p_{\mathrm{ref},1\mid t}^{d}(z\mid x_t)
    =(1-\kappa_t)\nu(z)
      +\kappa_t\indc{z=x_t^d}.
    \label{eq:analytic-reference-posterior}
\end{equation}
This posterior has full categorical support for $t<1$ and serves as the fixed
posterior base of IEDG. We write
$a_t:=\dot\kappa_t/(1-\kappa_t)$ for the corresponding rate factor.

\subsection{Posterior-exact guidance and Bregman learning}

Let $p_1$ and $q_1$ be source and target terminal distributions with
$\supp(q_1)\subseteq\supp(p_1)$, where $\supp$ denotes support, and suppose that
they share the same conditional
path, $p_{t\mid 1}=q_{t\mid 1}$. If $r(x)=q_1(x)/p_1(x)$, then
the posterior guidance identity of DGM \citep{wan2026discrete} gives
\begin{align}
    q_{1\mid t}^{d}(z\mid x_t)
    &=\frac{
      p_{1\mid t}^{d}(z\mid x_t)h_t^d(z,x_t)
    }{
      \sum_{a\in\alphabet}
      p_{1\mid t}^{d}(a\mid x_t)h_t^d(a,x_t)
    }.
    \label{eq:dgm-identity}
\end{align}
where $h_t^d(z,x_t):=\mathbb E[
      r(X_1)\mid X_1^d=z,X_t=x_t]$.
The expectation is taken under $X_1\sim p_1$ and
$X_t\sim p_{t\mid 1}(\cdot\mid X_1)$.

DGM uses the positive Bregman loss
$\ell_{\mathrm{DGM}}(h,r):=h-r\log h$ for $h,r>0$.
For each $(t,d,z,x_t)$, its population optimum satisfies
\begin{equation}
\begin{aligned}
    \operatorname*{arg\,min}_{h>0}\;
    \mathbb E\!\left[
      \ell_{\mathrm{DGM}}\!\left(h,r(X_1)\right)
      \,\middle|\,
      X_1^d=z,X_t=x_t
    \right]
    &=
    \mathbb E\!\left[
      r(X_1)\mid X_1^d=z,X_t=x_t
    \right].
\end{aligned}
\label{eq:positive-bregman-losses}
\end{equation}
Hence, the population minimizer in
\eqref{eq:positive-bregman-losses} is exactly the DGM guidance term
$h_t^d(z,x_t)$ defined above. IEDG exploits the same conditional-mean
property, with the stagewise response introduced in
Section~\ref{sec:same-base-bridge}.
\section{Iterative Exact Discrete Guidance}
\label{sec:method}

Direct application of \eqref{eq:dgm-identity} learns the entire
reference-to-target correction in one step, even when the target has
poor overlap with the reference. IEDG replaces this global correction with local Boltzmann tilts. Each ideal stage applies the exact DGM update to its current source, while IEDG
expresses every updated posterior relative to the same analytic reference posterior $p_{\mathrm{ref},1\mid t}$, allowing one guide to represent all corrections learned so far.

\begin{figure*}[t]
    \centering
    \includegraphics[width=\textwidth,height=0.18\textheight,keepaspectratio]
    {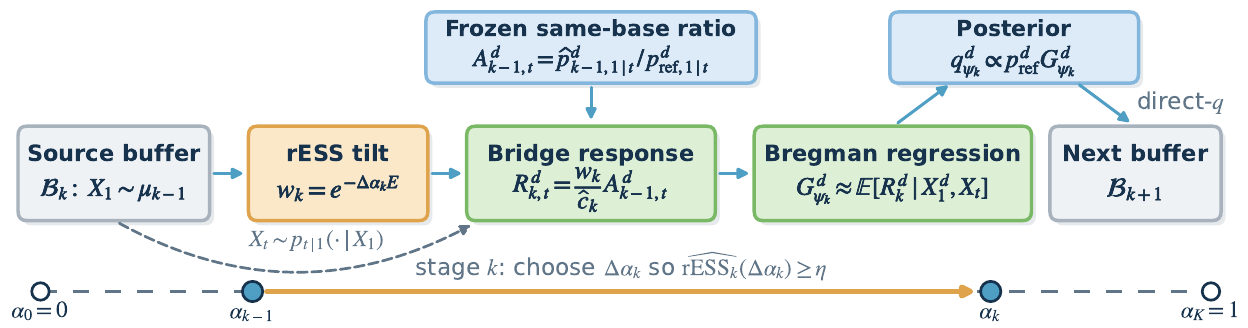}
    \vspace{-0.45em}
    \caption{\textbf{IEDG training.} Samples from the current buffer $\mathcal B_k$ estimate rESS and set the next
annealing increment $\Delta\alpha_k$. The energy weight $w_k$ and frozen
posterior ratio $A_{k-1,t}^d$ form the regression response $R_{k,t}^d$, whose
conditional mean is learned by positive Bregman regression as the accumulated
guide $G_{\psi_k}$. Combined with the analytic reference posterior, this defines
$q_{\psi_k,1\mid t}^d$, whose direct-$q$ rollout produces the next buffer
$\mathcal B_{k+1}$.}
    \label{fig:iedg-training}
    \vspace{-0.55em}
\end{figure*}

\subsection{Ideal annealing and the stage-local target}
\label{sec:stage-local-target}

Choose $0=\alpha_0<\alpha_1<\cdots<\alpha_K=1$. Define the continuous ideal
path $\{\pi_\alpha\}_{\alpha\in[0,1]}$, its stage values
$\pi_k:=\pi_{\alpha_k}$, and the tractable incremental weight by
\begin{align}
    \pi_\alpha(x)
    &:=\frac{\refdist(x)\exp\{-\alpha E(x)\}}
    {\sum_{x'\in\state}\refdist(x')\exp\{-\alpha E(x')\}},
    \label{eq:ideal-annealed-family}
\end{align}
Thus $\pi_0=\refdist$, $\pi_K=\targetdist$, and $\pi_k=\tiltop_{w_k}(\pi_{k-1})$.

At a non-ideal stage, let $\mu_{k-1}$ denote the stage-output law actually
promoted as the next source. We define the stage-local target as its exact Boltzmann tilt:
\begin{equation}
    \widetilde\pi_k(x)
    :=\frac{\mu_{k-1}(x)w_k(x)}
    {\sum_{x'\in\state}\mu_{k-1}(x')w_k(x')}.
    \label{eq:stage-local-target}
\end{equation}
where $w_k(x):=\exp\{-\Delta\alpha_kE(x)\}$ and $\Delta\alpha_k:=\alpha_k-\alpha_{k-1}$. 

\subsection{Stage-local posterior correction}
\label{sec:stage-teacher}

At stage $k$, the available endpoint law is $\mu_{k-1}$ and the local target is
the Boltzmann tilt $\widetilde\pi_k$ in \eqref{eq:stage-local-target}. Define
$Z_k:=\sum_{x\in\state}\mu_{k-1}(x)w_k(x)$. Then, for
$x\in\supp(\mu_{k-1})$,
$\widetilde\pi_k(x)/\mu_{k-1}(x)=w_k(x)/Z_k$, where $Z_k$ is independent of
$x$.

Under the shared forward path $X_1\sim\mu_{k-1}$ and
$X_t\sim p_{t\mid1}(\cdot\mid X_1)$, let
$p_{k-1,1\mid t}^{d}$ and $q_{k,1\mid t}^{d}$ denote the coordinate
posteriors induced by $\mu_{k-1}$ and $\widetilde\pi_k$, respectively. The
conditional density ratio in the DGM identity satisfies
\[
\mathbb E_{\mu_{k-1}}\!\left[
  \frac{\widetilde\pi_k(X_1)}{\mu_{k-1}(X_1)}
  \,\middle|\, X_1^d=z,X_t=x_t
\right]
=
\frac{1}{Z_k}
\mathbb E_{\mu_{k-1}}\!\left[
  w_k(X_1)\mid X_1^d=z,X_t=x_t
\right].
\]
Since $Z_k$ is common to all categories $z$, it cancels under posterior
normalization. Hence we define the stage guidance function
$h_{k,t}^{d}(z,x_t):=
\mathbb E_{\mu_{k-1}}[w_k(X_1)\mid X_1^d=z,X_t=x_t]$.
Substituting it into \eqref{eq:dgm-identity} gives
\begin{equation}
q_{k,1\mid t}^{d}(z\mid x_t)
=
\frac{
  p_{k-1,1\mid t}^{d}(z\mid x_t)
  h_{k,t}^{d}(z,x_t)
}{
  \sum_{a\in\alphabet}
  p_{k-1,1\mid t}^{d}(a\mid x_t)
  h_{k,t}^{d}(a,x_t)
}.
\label{eq:stage-teacher-posterior}
\end{equation}
Thus the exact stage posterior can be learned without evaluating $Z_k$.

\subsection{Same-base accumulated bridge}
\label{sec:same-base-bridge}

Equation~\ref{eq:stage-teacher-posterior} shows that stage $k$ reweights the
current source posterior by $h_{k,t}^{d}$. Because this posterior changes after
each rollout, the stagewise functions cannot simply be multiplied. IEDG instead
represents every posterior correction relative to the same analytic reference
posterior.

At the beginning of stage \(k\), freeze the offline source buffer
\(\mathcal B_k=\{X^{(i)}\}_{i=1}^{n_k}\), drawn from \(\mu_{k-1}\), together
with the preceding guide \(G_{\psi_{k-1}}\). The guide represents the posterior
\begin{equation*}
\widehat p_{k-1,1\mid t}^{d}(z\mid x_t)
:=
\frac{
p_{\mathrm{ref},1\mid t}^{d}(z\mid x_t)
G_{\psi_{k-1},t}^{d}(z,x_t)
}{
\sum_{a\in\alphabet}
p_{\mathrm{ref},1\mid t}^{d}(a\mid x_t)
G_{\psi_{k-1},t}^{d}(a,x_t)
}.
\end{equation*}
In particular, $G_{\psi_0,t}^{d}\equiv1$. Define the frozen posterior ratio
$A_{k-1,t}^{d}(z,x_t)
\!:=\!
\widehat p_{k-1,1\mid t}^{d}(z\!\mid\!x_t)/
p_{\mathrm{ref},1\mid t}^{d}(z\!\mid\!x_t)$.
IEDG uses this frozen posterior in place of the exact source posterior. Since
\(\widehat p_{k-1,1\mid t}^{d}
=p_{\mathrm{ref},1\mid t}^{d}A_{k-1,t}^{d}\), its category-wise update
\(\widehat p_{k-1,1\mid t}^{d}h_{k,t}^{d}\) is represented relative to the
analytic base by the target
\begin{equation}
G_{k,t}^{\star,d}(z,x_t)
:=
\frac{
A_{k-1,t}^{d}(z,x_t)h_{k,t}^{d}(z,x_t)
}{
\widehat c_k
}.
\label{eq:bridge-population-optimum}
\end{equation}
Here $\widehat c_k>0$ is fixed within the stage, improves numerical conditioning,
and cancels under posterior normalization; Appendix~\ref{app:fixed-stage-normalizer}
gives its buffer-based construction. To estimate \eqref{eq:bridge-population-optimum}
from source samples, IEDG uses the positive regression response
\begin{equation}
R_{k,t}^{d}(X_1,X_t)
:=
\frac{
w_k(X_1)A_{k-1,t}^{d}(X_1^d,X_t)
}{
\widehat c_k
}.
\label{eq:bridge-target}
\end{equation}
For fixed $(z,x_t)$, $A_{k-1,t}^{d}(z,x_t)$ is constant under the
conditional expectation, so
$\mathbb E[
R_{k,t}^{d}(X_1,X_t)\mid X_1^d=z,X_t=x_t]
=
A_{k-1,t}^{d}(z,x_t)h_{k,t}^{d}(z,x_t)/\widehat c_k
=
G_{k,t}^{\star,d}(z,x_t)$.
Thus $G_k^\star$ is the population regression target induced by the frozen
posterior. If
$\widehat p_{k-1,1\mid t}^{d}=p_{k-1,1\mid t}^{d}$, its category-wise update
is the exact numerator in \eqref{eq:stage-teacher-posterior}.

IEDG therefore fits a positive guide $G_{\psi_k}$ by minimizing
\begin{equation}
\mathcal L_k[G_\psi]
:=
\mathbb E_{\substack{
t\sim\tau,\;X_1\sim\mu_{k-1},\\
X_t\sim p_{t\mid1}(\cdot\mid X_1)}}
\left[
\frac{1}{D}\sum_{d=1}^{D}
\ell_{\mathrm{DGM}}\!\left(
G_{\psi,t}^{d}(X_1^d,X_t),
R_{k,t}^{d}(X_1,X_t)
\right)
\right].
\label{eq:bridge-objective}
\end{equation}

After fitting, the accumulated guide induces the fixed-base posterior
\begin{equation}
q_{\psi_k,1\mid t}^{d}(z\mid x_t)
=
\frac{
p_{\mathrm{ref},1\mid t}^{d}(z\mid x_t)
G_{\psi_k,t}^{d}(z,x_t)
}{
\sum_{a\in\alphabet}
p_{\mathrm{ref},1\mid t}^{d}(a\mid x_t)
G_{\psi_k,t}^{d}(a,x_t)
}.
\label{eq:fixed-base-parameterization}
\end{equation}
This posterior is frozen as
$\widehat p_{k,1\mid t}^{d}:=q_{\psi_k,1\mid t}^{d}$
for the next stage.

\begin{proposition}[Exact same-base bridge]
\label{thm:stage-exactness}
Under the support conditions of Appendix~\ref{app:proofs}, suppose that
\(\widehat p_{k-1,1\mid t}^{d}=p_{k-1,1\mid t}^{d}\) for every $d$,
$\tau$-almost every $t$, and $p_{k-1,t}$-almost every $x_t$.
For fixed $\widehat c_k$, $G_k^\star$ is the unique positive measurable
population minimizer of $\mathcal L_k[G]$, up to null sets under the training law,
and the fixed-base posterior induced by \(G_{k,t}^\star\) in
\eqref{eq:fixed-base-parameterization} equals the exact stage posterior
\(q_{k,1\mid t}^{d}\) in \eqref{eq:stage-teacher-posterior}.
\end{proposition}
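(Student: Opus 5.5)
The proof has two parts: first identify the population minimizer of $\mathcal L_k$ pointwise, then substitute it into \eqref{eq:fixed-base-parameterization} and check that the posterior collapses to \eqref{eq:stage-teacher-posterior}.

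\textbf{Step 1: pointwise reduction.} Disintegrate the training law of $(t,d,X_1^d,X_t)$, with $t\sim\tau$, $d$ uniform on $\{1,\dots,D\}$, $X_1\sim\mu_{k-1}$ and $X_t\sim p_{t\mid1}(\cdot\mid X_1)$. A measurable $G$ enters $\mathcal L_k$ only through its value at $(t,d,X_1^d,X_t)$. Conditioning on that tuple gives
\[
\mathcal L_k[G]=\mathbb E\Bigl[\tfrac1D\textstyle\sum_d \Phi_{t,d}\bigl(G^d_t(X_1^d,X_t);X_1^d,X_t\bigr)\Bigr],
\]
where $\Phi_{t,d}(h;z,x_t):=h-\mathbb E[R^d_{k,t}\mid X_1^d=z,X_t=x_t]\log h$.

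\textbf{Step 2: identify the minimizer.} Set $m:=\mathbb E[R\mid z,x_t]$. By the computation preceding the proposition, $m=G^{\star,d}_{k,t}(z,x_t)$, since $A_{k-1,t}^d(z,x_t)$ is constant given $X_1^d=z$. The function $h\mapsto h-m\log h$ is strictly convex on $(0,\infty)$ and uniquely minimized at $h=m$; this is \eqref{eq:positive-bregman-losses}.

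The support conditions of Appendix~\ref{app:proofs} must guarantee three things:
\begin{itemize}
\item $m\in(0,\infty)$. This holds because $w_k>0$, $\widehat c_k>0$, and $A_{k-1}>0$, the last since $p_{\mathrm{ref},1\mid t}^d$ has full support for $t<1$ and $G_{\psi_{k-1}}>0$.
\item $\mathbb E[R]<\infty$. This holds because the state space is finite and $w_k$ is bounded.
\item $\mathcal L_k[G^\star]<\infty$.
\end{itemize}
Given these, $\Phi(G)-\Phi(G^\star)=m\,\phi(G/m)$ with $\phi(u)=u-1-\log u\ge0$, and $\phi(u)=0$ only at $u=1$. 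Hence $\mathcal L_k[G]\ge\mathcal L_k[G^\star]$, with equality iff $G=G^\star$ almost surely under the training law. This gives both existence and uniqueness up to null sets.

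\textbf{Step 3: posterior identity.} Substitute $G^\star=A_{k-1}h_k/\widehat c_k$ into \eqref{eq:fixed-base-parameterization}. The constant $\widehat c_k$ cancels between numerator and denominator. Next use $p_{\mathrm{ref},1\mid t}^dA_{k-1,t}^d=\widehat p_{k-1,1\mid t}^d$, which holds by the definition of the frozen ratio. The induced posterior is then $\widehat p_{k-1}h_k/\sum_a\widehat p_{k-1}(a)h_k(a)$.

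The hypothesis $\widehat p_{k-1}=p_{k-1}$ holds for $\tau$-a.e.\ $t$ and $p_{k-1,t}$-a.e.\ $x_t$. Under it, this expression is exactly \eqref{eq:stage-teacher-posterior}, i.e.\ the exact stage posterior $q^d_{k,1\mid t}$ obtained from the DGM identity \eqref{eq:dgm-identity} with $r=w_k/Z_k$ (the $Z_k$ having already been shown to cancel).

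\textbf{Main obstacle: matching null sets.} The minimizer is determined only on the support of the training law of $(t,X_1^d,X_t)$. Outside it, in particular for categories $z$ with $p^d_{k-1,1\mid t}(z\mid x_t)=0$, the guide is arbitrary. The posterior identity is needed on the $p_{k-1,t}$-full set of $x_t$ and for all $z$.

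I would handle this by observing that the training law of $(X_t,X_1^d)$ has $x_t$-marginal $p_{k-1,t}$ and conditional law $p^d_{k-1,1\mid t}(\cdot\mid x_t)$. Categories of zero posterior mass then contribute zero to both sides: their weight in the numerator is $\widehat p_{k-1}=p_{k-1}=0$, whatever value $G$ takes there. So the induced posteriors agree $\tau\otimes p_{k-1,t}$-a.e. This is where the appendix's support assumptions would be invoked, to ensure that $\mathrm{supp}(\widetilde\pi_k)\subseteq\mathrm{supp}(\mu_{k-1})$ and that every denominator is positive.
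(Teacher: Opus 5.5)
Your proof is the paper's proof. You condition on the training context $(t,X_1^d,X_t)$, use the conditional-mean property of $h-r\log h$ (with the excess identity $m\,\phi(G/m)$) to identify $G_k^\star$ and its a.e.\ uniqueness, and substitute into the fixed-base parameterization. There $\widehat c_k$ cancels, $p_{\mathrm{ref},1\mid t}^dA_{k-1,t}^d=\widehat p_{k-1,1\mid t}^d=p_{k-1,1\mid t}^d$, and the DGM identity finishes.

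The one part to fix is your ``main obstacle'' paragraph. Its reasoning is wrong as written, though nothing in the proof depends on it.

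\textbf{The error.} For a generic positive guide $G$, the numerator weight of category $z$ in the fixed-base posterior is $p_{\mathrm{ref},1\mid t}^d(z\mid x_t)\,G(z,x_t)>0$, not $\widehat p_{k-1,1\mid t}^d(z\mid x_t)$. So the value of $G$ on an uncharged category would matter. Moreover, the formula $G^\star=A_{k-1,t}^dh_{k,t}^d/\widehat c_k$ would vanish there, contradicting the claimed positivity.

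\textbf{Why the scenario cannot arise.} You already observed that $A_{k-1,t}^d>0$, i.e.\ $\widehat p_{k-1,1\mid t}^d(z\mid x_t)>0$ for every $z$. The hypothesis $\widehat p_{k-1,1\mid t}^d=p_{k-1,1\mid t}^d$ then forces the source posterior to charge every category; the paper's support conditions also assume this explicitly. Hence the conditional law of $X_1^d$ given $X_t=x_t$ has full support. ``A.e.\ under the training law'' therefore means $\tau$-a.e.\ $t$, $p_{k-1,t}$-a.e.\ $x_t$, and every $z$, so there is no null-set mismatch to resolve.

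\textbf{Why it is unnecessary anyway.} The statement concerns $G_k^\star$ itself, which is defined by formula at every context. Its induced posterior therefore equals $q_{k,1\mid t}^d$ pointwise wherever the frozen and source posteriors agree.

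Finally, $\supp(\widetilde\pi_k)=\supp(\mu_{k-1})$ holds automatically because $w_k>0$, so no assumption is needed for it.
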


\paragraph{Stagewise learning.}
Starting with \(G_{\psi_0}\equiv1\) and
\(\mathcal B_1\sim\refdist\), stage \(k\) fits \(G_{\psi_k}\) after replacing
$X_1\sim\mu_{k-1}$ in \eqref{eq:bridge-objective} with
$X_1\sim\operatorname{Unif}(\mathcal B_k)$. It then freezes the fitted guide
and uses direct-$q$ sampling to generate $\mathcal B_{k+1}$, whose endpoint law
$\mu_k$ becomes the next source. Algorithm~\ref{alg:iedg-training} gives the
complete procedure.

\subsection{ESS-controlled thermodynamic path}
\label{sec:ess-path}

Section~\ref{sec:stage-local-target} specifies the annealed family but not the
stage locations $\{\alpha_k\}$. Because a fixed temperature increment can have
very different overlap along the path, IEDG selects each increment using the
current source $\mu_{k-1}$. For $\Delta\geq0$, let
$w_\Delta(x):=e^{-\Delta E(x)}$. Its population relative effective sample size is
\begin{equation}
    \ress_k(\Delta)
    :=\frac{\bigl(\mathbb E_{\mu_{k-1}}[w_\Delta(X)]\bigr)^2}
    {\mathbb E_{\mu_{k-1}}[w_\Delta(X)^2]}.
    \label{eq:population-ess-ratio}
\end{equation}
ESS-based adaptation is standard in annealed SMC
\citep{zhou2016automatic,syed2026optimized}, and the local geometry of a Gibbs
path is characterized by thermodynamic length \citep{crooks2007measuring}.
The following proposition specializes these relations to IEDG's stagewise tilt.
\begin{proposition}[R\'enyi-controlled thermodynamic increments]
\label{prop:ess-geometry}
For every $\Delta\geq0$, with
$D_2(P\Vert Q):= \log\sum_{x:Q(x)>0}P(x)^2/Q(x)$,
the population rESS satisfies
\begin{equation}
    \ress_k(\Delta)
    =\exp\!\left\{-D_2\!\left(
      \tiltop_{w_\Delta}(\mu_{k-1})\Vert\mu_{k-1}
    \right)\right\}.
    \label{eq:ess-renyi-identity}
\end{equation}
If $\mu_{k-1}=\pi_\alpha$ lies on the ideal Boltzmann path, then
\begin{equation}
    -\log\ress_\alpha(\Delta)
    =\Delta^2\operatorname{Var}_{\pi_\alpha}[E(X)]
      +O(\Delta^3)
    \qquad\text{as }\Delta\to0.
    \label{eq:ess-local-expansion}
\end{equation}
\end{proposition}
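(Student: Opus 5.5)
The identity \eqref{eq:ess-renyi-identity} follows by direct substitution. Write $\mu:=\mu_{k-1}$ and $Q:=\tiltop_{w_\Delta}(\mu)$, so that $Q(x)=\mu(x)w_\Delta(x)/Z$ with $Z=\mathbb E_\mu[w_\Delta]$. Since $w_\Delta>0$, we have $\supp(Q)=\supp(\mu)$, and the sum defining $D_2(Q\Vert\mu)$ runs over $\supp(\mu)$. Substituting gives
\[
\sum_{x:\mu(x)>0}\frac{Q(x)^2}{\mu(x)}=\frac{1}{Z^2}\sum_x \mu(x)w_\Delta(x)^2=\frac{\mathbb E_\mu[w_\Delta^2]}{(\mathbb E_\mu[w_\Delta])^2}=\ress_k(\Delta)^{-1}.
\]
Taking logarithms yields $D_2(Q\Vert\mu)=-\log\ress_k(\Delta)$, which is the claim. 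Finiteness of $\state$ guarantees that all expectations are finite and positive, so no further conditions are needed.

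For the expansion \eqref{eq:ess-local-expansion}, I will express $-\log\ress$ through the log-partition function. Set $F(\beta):=\log\sum_x\refdist(x)e^{-\beta E(x)}$. For $\mu=\pi_\alpha$ we have $\mathbb E_{\pi_\alpha}[e^{-\Delta E}]=e^{F(\alpha+\Delta)-F(\alpha)}$ and $\mathbb E_{\pi_\alpha}[e^{-2\Delta E}]=e^{F(\alpha+2\Delta)-F(\alpha)}$. Hence
\[
-\log\ress_\alpha(\Delta)=F(\alpha+2\Delta)-2F(\alpha+\Delta)+F(\alpha),
\]
which is a second finite difference of $F$.

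Because $\state$ is finite, $F$ is real-analytic, in particular $C^3$ on a neighborhood of $\alpha$. Its derivatives are the cumulants of $-E$ under the Gibbs path: $F'(\beta)=-\mathbb E_{\pi_\beta}[E]$ and $F''(\beta)=\operatorname{Var}_{\pi_\beta}[E]$. I will Taylor expand $F(\alpha+2\Delta)$ and $F(\alpha+\Delta)$ to second order with third-order remainder. In the second difference the constant terms cancel ($1-2+1=0$) and the linear terms cancel ($2\Delta-2\Delta=0$). The quadratic terms give $\tfrac12F''(\alpha)(4\Delta^2-2\Delta^2)=\Delta^2F''(\alpha)$. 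The remainder is bounded by a constant times $\Delta^3\sup|F'''|$ on a compact neighborhood, and this supremum is finite. This gives $\Delta^2\operatorname{Var}_{\pi_\alpha}[E]+O(\Delta^3)$.

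No step here is a real obstacle. The only point needing care is justifying the cumulant identity $F''=\operatorname{Var}_{\pi_\beta}[E]$ and the uniform bound on $F'''$. Both follow from differentiating the finite sum, with $F'''$ equal to the third central moment of $-E$, bounded by $\max|E-\mathbb E E|^3\le(\osc E)^3$. If $\alpha+2\Delta$ leaves $[0,1]$, this is harmless, since $F$ is defined for all real $\beta$.
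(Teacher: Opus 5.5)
Your proposal is correct and follows the paper's proof essentially step for step. You obtain the R\'enyi identity by direct substitution, using that the tilt preserves the support of $\mu_{k-1}$. You then write $-\log\ress_\alpha(\Delta)$ as the second finite difference $\Phi(\alpha+2\Delta)-2\Phi(\alpha+\Delta)+\Phi(\alpha)$ of the log-partition function and Taylor-expand using $\Phi''(\alpha)=\operatorname{Var}_{\pi_\alpha}[E]$; your explicit $(\osc E)^3$ bound on the third derivative just makes the paper's ``analytic, hence Taylor'' step quantitative.
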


Thus, for $\eta\in(0,1]$, $\ress_k(\Delta)\geq\eta$ imposes a finite-step R\'enyi-$2$ radius
$-\log\eta$ and, locally on the ideal path, approximates equal
thermodynamic-length increments. Given the source buffer
$\mathcal B_k=\{X^{(i)}\}_{i=1}^{n_k}$, IEDG uses the plug-in rule
\begin{equation}
\widehat{\ress}_k(\Delta)
:=
\frac{
  \left(\sum_{x\in\mathcal B_k}e^{-\Delta E(x)}\right)^2
}{
  n_k\sum_{x\in\mathcal B_k}e^{-2\Delta E(x)}
},
\qquad
\Delta_k^{\mathrm{ESS}}
:=
\max\left\{
  0\leq\Delta\leq1-\alpha_{k-1}:
  \widehat{\ress}_k(\Delta)\geq\eta
\right\}.
\label{eq:ess-path-rule}
\end{equation}
In development, we hold the groupwise rESS thresholds fixed across stages and
use the resulting empirical rule to construct a path. Its endpoints are then
frozen across formal runs; Appendix~\ref{app:frozen-ess-path} gives the
protocol and realized paths.

\subsection{Sampling with the accumulated guide}
\label{sec:reverse-sampling}
Using the fixed-base posterior in
\eqref{eq:fixed-base-parameterization}, the final guide defines the
posterior-marginal CTMC rates
\begin{equation*}
u_{\psi_K,t}^{d}(z,x)
=
a_t q_{\psi_K,1\mid t}^{d}(z\mid x)\indc{z\neq x^d}.
\end{equation*}
where $a_t=\dot\kappa_t/(1-\kappa_t)$. Starting from
$X_0\sim\refdist$, we simulate these rates with the direct-$q$ tau-leaping
scheme in Algorithm~\ref{alg:direct-q-sampling}. Final inference evaluates only
$p_{\mathrm{ref},1\mid t}$ and $G_{\psi_K}$, independent of the number of
training stages.

\section{Learning Error and Stagewise Stability}
\label{sec:theory}

We analyze a finite horizon $T<1$ to avoid the endpoint singularity. At stage
$k$, substituting the frozen posterior $\widehat p_{k-1,1\mid t}^{d}$ for the
exact source posterior $p_{k-1,1\mid t}^{d}$ in
\eqref{eq:stage-teacher-posterior} gives $\bar q_{k,1\mid t}^{d}$ in
\eqref{eq:operational-stage-posterior}. This separates frozen-posterior
mismatch between $q_{k,1\mid t}^{d}$ and $\bar q_{k,1\mid t}^{d}$ from
guide-fitting error between $\bar q_{k,1\mid t}^{d}$ and
$q_{\psi_k,1\mid t}^{d}$. The former enters the stagewise remainder below; the
latter is controlled by Bregman excess risk.

\paragraph{From Bregman risk to one-stage sampling error.}
\label{sec:bregman-calibration}
Let \(G_k^\star\) be the population minimizer in
\eqref{eq:bridge-population-optimum}, and let
$\Delta\mathcal L_k
:=\mathcal L_k[G_{\psi_k}]-\mathcal L_k[G_k^\star]$
denote the population excess risk of the stagewise Bregman objective. For training-time
density $\tau$, define
\begin{equation*}
    \mathcal A_{\tau,T}^2:=\int_0^T\frac{a_t^2}{\tau(t)}\dd t,
    \qquad
    \mathsf K_{k,T}:=D\mathcal A_{\tau,T}\sqrt{2\mathcal C_{k,T}},
\end{equation*}
where $\mathcal C_{k,T}$ is the conditional coverage ratio defined in
Appendix~\ref{app:one-stage-calibration}. Let
$\mu_{k,t}^{\mathrm{ct}}$ denote the learned continuous-time law,
$\mu_k^{\mathrm{ct}}$ its terminal law, and $q_{k,T}$ the exact stage law at
time $T$. The terms not controlled by $\Delta\mathcal L_k$ are collected in
\begin{equation*}
r_{k,T}^{\mathrm{pr}}
:=
\underbrace{2\delta_{k,T}^{\mathrm{src}}}_{\text{frozen-posterior mismatch}}
+
\underbrace{\tv(\mu_k,\mu_k^{\mathrm{ct}})}_{\text{numerical simulation}}
+
\underbrace{\tv(\mu_k^{\mathrm{ct}},\mu_{k,T}^{\mathrm{ct}})}_{
\text{learned-process truncation}}
+
\underbrace{\tv(q_{k,T},\widetilde\pi_k)}_{
\text{exact-process truncation}}.
\end{equation*}
Here \(\tv\) denotes total variation and $\delta_{k,T}^{\mathrm{src}}$ is the
rate-weighted discrepancy between the frozen posterior and that induced by the
current source. Thus $r_{k,T}^{\mathrm{pr}}$ contains exactly the
frozen-posterior, numerical-simulation, and two finite-$T$ truncation terms.

\begin{proposition}[One-stage Bregman-to-sampling bound]
\label{prop:bregman-posterior-calibration}
Assume the source, fixed-base, and frozen coordinate posteriors have full
categorical support on the training contexts, $\tau(t)>0$ on $[0,T]$, and
$\mathcal A_{\tau,T},\mathcal C_{k,T}<\infty$. Then
\begin{equation}
    \tv(\mu_k,\widetilde\pi_k)
    \leq
    \frac{\mathsf K_{k,T}}
    {\sqrt{\ress_k(\Delta\alpha_k)}}
    \sqrt{\Delta\mathcal L_k}
    +r_{k,T}^{\mathrm{pr}}.
    \label{eq:one-stage-practical-bound}
\end{equation}
\end{proposition}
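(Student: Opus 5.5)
We split $\tv(\mu_k,\widetilde\pi_k)$ with the triangle inequality along the chain $\mu_k\to\mu_k^{\mathrm{ct}}\to\mu_{k,T}^{\mathrm{ct}}\to q_{k,T}\to\widetilde\pi_k$. The first, second and fourth links are, by definition, the numerical-simulation and the two truncation terms in $r_{k,T}^{\mathrm{pr}}$. Only $\tv(\mu_{k,T}^{\mathrm{ct}},q_{k,T})$ needs work. Both processes start from $\refdist$ at $t=0$ and run on $[0,T]$ with posterior-marginal rates $a_t q^d_{\cdot,1\mid t}(z\mid x)\indc{z\neq x^d}$, built from $q_{\psi_k,1\mid t}$ and $q_{k,1\mid t}$ respectively.

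For this path comparison we use Pinsker together with the Girsanov/KL formula for CTMCs, or directly a TV coupling bound for jump processes. This gives
\[
\tv(\mu_{k,T}^{\mathrm{ct}},q_{k,T})\le \int_0^T \mathbb E_{X_t\sim q_{k,t}}\sum_{d} a_t\,\tv\bigl(q^d_{\psi_k,1\mid t}(\cdot\mid X_t),q^d_{k,1\mid t}(\cdot\mid X_t)\bigr)\dd t,
\]
since total rate differences are $a_t$ times $\ell_1$ posterior differences. We then insert $\bar q_{k,1\mid t}$, the operational posterior built from $\widehat p_{k-1}$ and $h_k$, and apply the triangle inequality. The piece $q_k$ versus $\bar q_k$ differs only by replacing $p_{k-1,1\mid t}$ with $\widehat p_{k-1,1\mid t}$ inside a tilt by the same $h_k$. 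We take this, possibly with a factor $2$ from renormalization, as the definition of $2\delta^{\mathrm{src}}_{k,T}$. What remains is the fitting term comparing $\bar q_k$ with $q_{\psi_k}$. These are fixed-base posteriors induced by $G_k^\star$ and $G_{\psi_k}$ (after Proposition~\ref{thm:stage-exactness}, $\bar q_k$ is the posterior of $G_k^\star$).

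The fitting term is converted to Bregman risk in three moves. First, the excess risk of $\ell_{\mathrm{DGM}}$ equals $\mathbb E[\,G\log$-type divergence$]$, namely the generalized KL $\sum_z (G-G^\star-G^\star\log(G/G^\star))$, weighted by the training law of $(t,X_1^d,X_t)$ under $\mu_{k-1}$. Normalizing both $p_{\mathrm{ref}}G$ and $p_{\mathrm{ref}}G^\star$, a standard inequality bounds the KL between the normalized posteriors by the unnormalized generalized KL divided by the normalizer. Pinsker then gives TV $\le\sqrt{2\,\cdot}$. Second, Cauchy--Schwarz in $t$ with weight $\tau$ produces $\mathcal A_{\tau,T}=(\int a_t^2/\tau)^{1/2}$, Cauchy--Schwarz across the $D$ coordinates produces the factor $D$ (with the $1/D$ in $\mathcal L_k$), and the result is $\sqrt{\Delta\mathcal L_k}$.

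Third, we change measure from the evaluation law $X_t\sim q_{k,t}$, the tilted path, to the training law $X_t\sim p_{k-1,t}$ under $\mu_{k-1}$. Here the density ratio $q_{k,t}/p_{k-1,t}=\mathbb E[w_k/Z_k\mid X_t]$ enters. Cauchy--Schwarz bounds its second moment by $\mathbb E_{\mu_{k-1}}[w_k^2]/Z_k^2=1/\ress_k(\Delta\alpha_k)$, via Jensen on the conditional expectation. This yields the $1/\sqrt{\ress_k}$ factor. Any residual mismatch in posterior-categorical weighting (conditioning on $X_1^d=z$ versus summing over $z$) and in the normalizer is absorbed into $\mathcal C_{k,T}$.

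The main obstacle is this last change-of-measure step. The difficulty is arranging the Cauchy--Schwarz so that exactly $\sqrt{\Delta\mathcal L_k}$ times $\ress^{-1/2}$ appears, with everything else isolated in $\mathcal C_{k,T}$. We would first try to write the fitting integrand as $\sqrt{\text{ratio}}\cdot\sqrt{\text{ratio}\cdot\text{Bregman}}$ under the training law. If that is not clean, we would use a Hölder split and move the leftover into the definition of the coverage ratio.
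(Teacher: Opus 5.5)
Your plan is the paper's proof: the same triangle chain through $\mu_k^{\mathrm{ct}}$, $\mu_{k,T}^{\mathrm{ct}}$ and $q_{k,T}$, and a Duhamel bound on $\tv(\mu_{k,T}^{\mathrm{ct}},q_{k,T})$ by rate-weighted posterior TV under $q_{k,t}$ (Girsanov plus Pinsker would give a square-root bound, not this linear form). It continues identically with the insertion of $\bar q_{k,1\mid t}^{d}$, generalized KL plus Pinsker with $\mathcal C_{k,T}$ absorbing the $p_{\mathrm{ref}}$-versus-$p_{k-1}$ weighting and the normalizer, and Cauchy--Schwarz with conditional Jensen giving $\mathbb E_{p_{k-1,t}}[\omega_{k,t}^2]\le\ress_k(\Delta\alpha_k)^{-1}$ for $\omega_{k,t}=q_{k,t}/p_{k-1,t}$. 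Two details need fixing: the Cauchy--Schwarz split must pair $Da_t\,\omega_{k,t}\sqrt{\mathcal C_{k,T}}/\tau(t)$ with the square root of the conditional Bregman excess under the training law (your $\sqrt{\text{ratio}}\cdot\sqrt{\text{ratio}\cdot\text{Bregman}}$ split leaves the excess averaged under $q_{k,t}$, which $\Delta\mathcal L_k$ does not control), and $\delta_{k,T}^{\mathrm{src}}$ is not yours to define---the paper defines it as a rate-weighted, sensitivity-weighted TV between $\widehat p_{k-1,1\mid t}^{d}$ and $p_{k-1,1\mid t}^{d}$, so you must bound $\tv(\bar q_{k,1\mid t}^{d},q_{k,1\mid t}^{d})$ by it via the positive-tilt stability estimate $\tv(\tiltop_h(P),\tiltop_h(Q))\le\frac{\max h}{P(h)\vee Q(h)}\tv(P,Q)$ applied on the alphabet with $h=h_{k,t}^{d}(\cdot,x_t)$.
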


The fitting contribution scales as
$\sqrt{\Delta\mathcal L_k/\ress_k(\Delta\alpha_k)}$. Hence small Bregman excess risk is not sufficient: the error is amplified when consecutive
distributions overlap poorly. The tail is explicit,
$\tv(q_{k,T},\widetilde\pi_k)\leq1-\kappa_T^D$; Appendix~\ref{app:one-stage-calibration}
gives the remaining definitions and finite-time controls.

\paragraph{Stagewise propagation.}
\label{sec:terminal-stability}
Positive reweighting propagates each local discrepancy through the
Boltzmann path. Let $e_k:=\tv(\mu_k,\pi_k)$ and define the stage-local,
normalizer-aware sensitivity
\begin{equation*}
\Lambda_k
:=
\frac{\max_{x\in\state}w_k(x)}
{
\max\!\left\{
\mathbb E_{\mu_{k-1}}[w_k(X)],
\mathbb E_{\pi_{k-1}}[w_k(X)]
\right\}
}.
\end{equation*}

\begin{theorem}[Stagewise stability recursion]
\label{thm:posterior-terminal-stability}
Assume the support and finite-coverage conditions of
Proposition~\ref{prop:bregman-posterior-calibration} hold at every stage,
$\ress_k(\Delta\alpha_k)\geq\eta_k$, and $\mu_0=\pi_0=\refdist$. Then, for
$k=1,\ldots,K$,
\begin{equation}
    e_k
    \leq
      \frac{\mathsf K_{k,T}}{\sqrt{\eta_k}}
      \sqrt{\Delta\mathcal L_k}
      +r_{k,T}^{\mathrm{pr}}
      +\Lambda_k e_{k-1}.
    \label{eq:recursive-error-bound}
\end{equation}
\end{theorem}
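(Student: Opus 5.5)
The recursion follows from a triangle inequality through the stage-local target $\widetilde\pi_k$:
\[
e_k=\tv(\mu_k,\pi_k)\leq \tv(\mu_k,\widetilde\pi_k)+\tv(\widetilde\pi_k,\pi_k).
\]
The first term is exactly the quantity bounded by Proposition~\ref{prop:bregman-posterior-calibration}, whose hypotheses are assumed at every stage. Combining that bound with $\ress_k(\Delta\alpha_k)\geq\eta_k$ gives $1/\sqrt{\ress_k(\Delta\alpha_k)}\leq 1/\sqrt{\eta_k}$. This yields the fitting term $\mathsf K_{k,T}\sqrt{\Delta\mathcal L_k}/\sqrt{\eta_k}$ plus the remainder $r_{k,T}^{\mathrm{pr}}$. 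It remains to show
\[
\tv(\widetilde\pi_k,\pi_k)\leq\Lambda_k\,\tv(\mu_{k-1},\pi_{k-1}),
\]
which is a Lipschitz estimate for the normalized tilt map $P\mapsto\tiltop_{w_k}(P)$, since $\widetilde\pi_k=\tiltop_{w_k}(\mu_{k-1})$ and $\pi_k=\tiltop_{w_k}(\pi_{k-1})$.

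For the Lipschitz step, write $P=\mu_{k-1}$, $Q=\pi_{k-1}$, $w=w_k$, $Z_P=\mathbb E_P[w]$ and $Z_Q=\mathbb E_Q[w]$. By symmetry I may assume $Z_P\geq Z_Q$, so $\max\{Z_P,Z_Q\}=Z_P$. For any set $S\subseteq\state$, put $a=\sum_{S}Pw$ and $b=\sum_S Qw$. Then
\[
\tiltop_w(P)(S)-\tiltop_w(Q)(S)=\frac{a}{Z_P}-\frac{b}{Z_Q}=\frac{a-b}{Z_P}+b\Bigl(\frac1{Z_P}-\frac1{Z_Q}\Bigr).
\]
Since $Z_P\geq Z_Q$, the second term is nonpositive. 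Hence
\[
\tiltop_w(P)(S)-\tiltop_w(Q)(S)\leq\frac{a-b}{Z_P}=\frac{1}{Z_P}\sum_S (P-Q)w\leq\frac{\max_x w(x)}{Z_P}\,\tv(P,Q),
\]
using $\sum_S(P-Q)w\leq \max w\cdot\sum_x (P-Q)_+$. This handles the upper direction of the supremum over $S$.

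\textbf{Main obstacle.} The reverse direction, $\tiltop_w(Q)(S)-\tiltop_w(P)(S)$, must also be bounded by the larger normalizer. Because total variation is the supremum over $S$ of a signed difference, that difference can be rewritten on the complement: $\tiltop_w(Q)(S)-\tiltop_w(P)(S)=\tiltop_w(P)(S^c)-\tiltop_w(Q)(S^c)$. The upper-direction bound applied to $S^c$ then controls it by the same quantity. So the supremum over all $S$ is at most $\Lambda_k\,\tv(P,Q)$ with the $\max$ normalizer in the denominator.

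Finally, with $\mu_0=\pi_0=\refdist$ the base case has $e_0=0$. The inequality then holds at each $k$ by combining the three pieces above. No induction is needed beyond applying this at each stage, since the statement is the one-step recursion itself. The only subtle point is checking that the chosen $\Lambda_k$, with $\max$ rather than $\min$ in the denominator, is what the sign argument delivers, and the complement trick shows that it is.
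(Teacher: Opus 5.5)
Your proof is correct and matches the paper's argument: you use the same triangle inequality through $\widetilde\pi_k=\tiltop_{w_k}(\mu_{k-1})$, Proposition~\ref{prop:bregman-posterior-calibration} with $\ress_k(\Delta\alpha_k)\geq\eta_k$ for the first term, and the positive-tilt stability bound (Lemma~\ref{lem:positive-tilt-stability}) for the second. The only difference is in how that lemma is proved: the paper centers the test function at $\bar f=\tiltop_w(Q)(f)$, so that $Q(w(f-\bar f))=0$ and $\osc(w(f-\bar f))\leq M$, obtains $M/P(w)$ for either ordering, and keeps the better bound, whereas your sign argument (assuming $Z_P\geq Z_Q$) plus the complement step reaches $M/\max\{Z_P,Z_Q\}$ directly; both give the same $\Lambda_k$.
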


The recursion separates current-stage from inherited error:
$\eta_k^{-1/2}$ is the overlap penalty on the guide-fitting term,
$r_{k,T}^{\mathrm{pr}}$ collects the remaining local discrepancies, and
$\Lambda_k$ propagates $e_{k-1}$ through normalized Boltzmann reweighting. Iterating \eqref{eq:recursive-error-bound}
gives a terminal bound; an optional distribution-free envelope is deferred to
Appendix~\ref{app:multistage-propagation}.

\begin{corollary}[Exact recovery of the untruncated construction]
\label{cor:ideal-recovery}
If, at every stage,
$\widehat p_{k-1,1\mid t}^{d}=p_{k-1,1\mid t}^{d}$ for every $d$ and almost
every rate-evaluation context, $G_{\psi_k}=G_k^\star$ at those contexts, and
the exact posterior-marginal process is simulated to its terminal law, then
$\Delta\mathcal L_k=0$ and
$r_{k,T}^{\mathrm{pr}}\to0$ as $T\uparrow1$; consequently
$\mu_k=\pi_k$ for all $k$ and
$\mu_K=\targetdist$.
\end{corollary}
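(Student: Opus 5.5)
The plan is an induction on $k$ that feeds the hypotheses into Proposition~\ref{thm:stage-exactness}, Proposition~\ref{prop:bregman-posterior-calibration} and Theorem~\ref{thm:posterior-terminal-stability}. For the base case, $\mu_0=\pi_0=\refdist$ by construction, so $e_0=0$, and $G_{\psi_0}\equiv1$ makes $\widehat p_{0,1\mid t}^{d}=p_{\mathrm{ref},1\mid t}^{d}$. This is the exact source posterior of $\mu_0$ by \eqref{eq:analytic-reference-posterior}, so the frozen-posterior hypothesis holds at stage $1$. Assume now that stage $k$ satisfies the stated hypotheses.

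\textbf{Step 1: the fitting and mismatch terms vanish.} Since $\widehat p_{k-1,1\mid t}^{d}=p_{k-1,1\mid t}^{d}$ almost everywhere, Proposition~\ref{thm:stage-exactness} identifies $G_k^\star$ as the unique population minimizer of $\mathcal L_k$. As $G_{\psi_k}=G_k^\star$ on all rate-evaluation contexts, which carry the training law, we get $\Delta\mathcal L_k=0$. The same proposition says that the fixed-base posterior $q_{\psi_k,1\mid t}^{d}$ equals the exact stage posterior $q_{k,1\mid t}^{d}$. The frozen-posterior discrepancy $\delta_{k,T}^{\mathrm{src}}$ is a rate-weighted discrepancy between two posteriors that agree almost everywhere, so it is $0$.

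\textbf{Step 2: the remaining terms of $r_{k,T}^{\mathrm{pr}}$.} Exact simulation gives $\mu_k=\mu_k^{\mathrm{ct}}$, so the numerical term is $0$. The learned rates $a_t q_{\psi_k,1\mid t}^{d}$ coincide with the exact posterior-marginal rates of the stage-$k$ target. By the marginalization property of \citep{campbell2024generative}, the learned continuous-time law therefore equals $q_{k,t}$ for every $t$, and $\mu_{k,T}^{\mathrm{ct}}=q_{k,T}$. The two truncation terms thus each reduce to $\tv(q_{k,T},\widetilde\pi_k)\le 1-\kappa_T^D$, which tends to $0$ as $T\uparrow1$ because $\kappa_1=1$. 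Hence $r_{k,T}^{\mathrm{pr}}\to0$.

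\textbf{Step 3: conclude exactness and close the induction.} Apply Proposition~\ref{prop:bregman-posterior-calibration} for each $T<1$; its right side tends to $0$, and the left side $\tv(\mu_k,\widetilde\pi_k)$ does not depend on $T$. This gives $\mu_k=\widetilde\pi_k$. Alternatively, Theorem~\ref{thm:posterior-terminal-stability} gives $e_k\le r_{k,T}^{\mathrm{pr}}+\Lambda_k e_{k-1}$, and with $e_{k-1}=0$ the limit $T\uparrow1$ yields $e_k=0$.

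It also follows directly from the induction hypothesis: $\mu_{k-1}=\pi_{k-1}$ gives $\widetilde\pi_k=\tiltop_{w_k}(\pi_{k-1})=\pi_k$, so $\mu_k=\pi_k$. For the next stage, the frozen posterior $\widehat p_{k,1\mid t}^{d}=q_{\psi_k,1\mid t}^{d}=q_{k,1\mid t}^{d}$ is the posterior induced by $\widetilde\pi_k=\mu_k$, so the hypothesis propagates. The corollary in fact assumes it at every stage anyway. Taking $k=K$ gives $\mu_K=\pi_K=\targetdist$.

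\textbf{Main obstacle.} The delicate point is Step 2: justifying that the learned terminal law is the untruncated exact law. The bounds above are stated only at a finite horizon $T<1$, and $a_t$ blows up at $t=1$. I would handle this by never simulating to $t=1$ inside the bounds. Instead, I would use that each $T$-dependent term is controlled by $1-\kappa_T^D$ and pass to the limit outside the inequality, as in Step 3, since $\mu_k$ itself does not depend on $T$.
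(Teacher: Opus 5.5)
Your argument is correct, and its core is the same as the paper's. Equal learned and exact posterior-marginal generators, started from the common law $\refdist$, give $\mu_{k,t}^{\mathrm{ct}}=q_{k,t}$ for all $t<1$. The induction then closes through $\widetilde\pi_k=\tiltop_{w_k}(\pi_{k-1})=\pi_k$ and propagation of the frozen-posterior equality.

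You differ in how you finish. The paper passes directly to $\mu_k=\mu_k^{\mathrm{ct}}=\lim_{t\uparrow1}q_{k,t}=\widetilde\pi_k$ and never uses the quantitative bounds. You instead check $\Delta\mathcal L_k=0$ and $r_{k,T}^{\mathrm{pr}}\to0$ term by term, then invoke Proposition~\ref{prop:bregman-posterior-calibration} or Theorem~\ref{thm:posterior-terminal-stability}. Your route does verify the two intermediate claims in the statement, which the paper's proof leaves implicit.

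The detour is redundant, though, and it does not sidestep your ``main obstacle.'' Reducing the learned-process term $\tv(\mu_k^{\mathrm{ct}},\mu_{k,T}^{\mathrm{ct}})=\tv(\mu_k^{\mathrm{ct}},q_{k,T})$ to $\tv(q_{k,T},\widetilde\pi_k)$ already requires $\mu_k^{\mathrm{ct}}=\widetilde\pi_k$. That follows from the definition $\mu_k^{\mathrm{ct}}:=\lim_{t\uparrow1}\mu_{k,t}^{\mathrm{ct}}$ together with $\tv(q_{k,t},\widetilde\pi_k)\le1-\kappa_t^D\to0$. With exact simulation, this is already $\mu_k=\widetilde\pi_k$, so Step~3's appeal to the bound only returns what Step~2 fed into it.

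The bound route also needs extra hypotheses. Proposition~\ref{prop:bregman-posterior-calibration} requires $\tau>0$ on $[0,T]$ and $\mathcal A_{\tau,T},\mathcal C_{k,T}<\infty$, so that $\mathsf K_{k,T}\sqrt{\Delta\mathcal L_k}=0$. None of these are among the corollary's hypotheses. The direct argument needs only the pointwise equalities at rate-evaluation contexts.

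Finally, when you invoke the marginalization identity, state explicitly two things. First, $q_{k,0}=\refdist$ because $\kappa_0=0$. Second, the forward equation on $[0,T]$ has a unique solution. Together these turn ``$q_{k,t}$ solves the forward equation'' into ``the learned law equals $q_{k,t}$.''
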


\section{Experiments}
\label{sec:experiments}

Our experiments ask whether iteration improves (i) recovery of an exactly enumerable target distribution, (ii) sampling across high-dimensional phase transitions,
and (iii) graph-conditioned optimization. We study enumerable Ising $4\times4$,
Ising and Potts $16\times16$, and Barab\'asi--Albert (BA) Max-Cut. We compare IEDG with one-shot DGM, our UDNS reproduction based on DASBS~\citep{guo2026dasbs},
MDNS~\citep{zhu2025mdns}, MetaDNS~\citep{du2026metadns}, and
PDNS~\citep{guo2026pdns}. Appendix~\ref{app:experiments} gives complete protocol, and Appendix~\ref{app:additional-experiments} provides ablations and diagnostics.

\subsection{Exact distribution recovery}
\label{sec:exp-ising4}

For spins $s_i\in\{-1,+1\}$ on a periodic $L\times L$ square lattice, the Ising target is
\begin{equation}
    \pi_\beta^{\mathrm I}(s)
    =
    \frac{1}{Z_\beta^{\mathrm I}}
    \exp\{-\beta H_{\mathrm I}(s)\},
    \text{ where }
    H_{\mathrm I}(s)
    =
    -J\sum_{\langle i,j\rangle}s_i s_j
    -h\sum_i s_i.
    \label{eq:ising-benchmark}
\end{equation}
$Z_\beta^{\mathrm I}$ is the partition function. For Ising $4\times4$
with $(J,h)=(1,0.1)$, enumeration permits exact TV,
$\mathrm{KL}(\widehat p\Vert\pi)$, and $\chi^2(\widehat p\Vert\pi)$ evaluation.
Table~\ref{tab:ising4-exact} reports the ordered endpoint
$\beta=0.6$ with equal-budget i.i.d. target samples as Exact MC; the full
temperature sweep is in Appendix~\ref{app:supplemental-diagnostics}.

\begin{table}[H]
    \centering
    \caption{Distribution-level errors on exactly enumerated Ising $4\times4$ at $\beta=0.6$ (lower is
    better). Exact MC is the equal-budget i.i.d. reference; $\dagger$ marks the
    MDNS-reported $F_{\rm WDCE}$ result. Bold marks the best neural sampler.}
    \label{tab:ising4-exact}
    \small
    \setlength{\tabcolsep}{7pt}
    \begin{tabular}{lccc}
        \toprule
        Method & $\mathrm{TV}$
        & $\mathrm{KL}(\widehat p\Vert\pi)$
        & $\chi^2(\widehat p\Vert\pi)$ \\
        \midrule
        Exact MC & $0.00411{\pm}0.00019$ & $0.00426{\pm}0.00006$
          & $0.0515{\pm}0.0122$ \\
        \midrule
        One-shot DGM & 0.779 & 2.55 & 722 \\
        UDNS (reproduced) & 0.150 & 0.165 & 0.237 \\
        MDNS ($F_{\rm WDCE}$)$^\dagger$ & 0.0418 & 0.0282 & 1.66 \\
        IEDG & \textbf{0.0314} & \textbf{0.00993} & \textbf{0.229} \\
        \bottomrule
    \end{tabular}
\end{table}

IEDG achieves the lowest TV, KL, and $\chi^2$ errors among the evaluated neural baselines, while a visible gap to the equal-budget Exact-MC floor remains.

\subsection{High-dimensional lattice models}
\label{sec:exp-lattices}

The Ising benchmark uses \eqref{eq:ising-benchmark} with $L=16$ and
$(J,h)=(1,0)$. For three-state labels $x_i\in\{1,2,3\}$ on a periodic $L\times L$ square lattice, the Potts target is
\[
    \pi_\beta^{\mathrm P}(x)
    =\frac{1}{Z_\beta^{\mathrm P}}
      \exp\{-\beta H_{\mathrm P}(x)\},
    \text{ where }
    H_{\mathrm P}(x)
    =-J\sum_{\langle i,j\rangle}\indc{x_i=x_j},
\]
$Z_\beta^{\mathrm P}$ is the partition function and $L=16$, $J=1$. We evaluate disordered,
near-critical, and ordered regimes at $\beta\in\{0.28,0.4407,0.6\}$ for Ising
and $\beta\in\{0.5,1.005,1.2\}$ for Potts
(Table~\ref{tab:lattice-main}). To stabilize the high-dimensional transports, we use two training components detailed in Appendix~\ref{app:implementation}: analytic preconditioning provides a short-range belief-propagation initialization for the learned residual, and conditional reweighting supplies lower-variance auxiliary
supervision. Both lattices use preconditioning, whereas conditional
reweighting ($\lambda_{\rm CR}=1$) is restricted to the transports ending at
Ising $\beta=0.4407$ and Potts $\beta=1.005$. Ablation of such components is introduced in Appendix~\ref{app:component-ablations}. 

We use independent Swendsen--Wang (SW) samples as the reference
distribution for both $16\times16$ lattice benchmarks. Relative to SW,
Table~\ref{tab:lattice-main} reports the local Mag. and MDNS Corr.
aggregates together with phase-sensitive errors: Ising $x_\uparrow$ JS
and Potts permutation-invariant dominant-mode $\ell_1$.
The phase-sensitive terms assess the relative masses of the two Ising
phases and three Potts phases; all metrics are minimized. Definitions
and additional diagnostics are given in
Appendices~\ref{app:mdns-metrics}--\ref{app:supplemental-diagnostics}.
Validation and test evaluation use independent SW pools and disjoint
rollout seeds.

\begin{figure*}[t]
    \centering
    \includegraphics[width=0.96\textwidth]{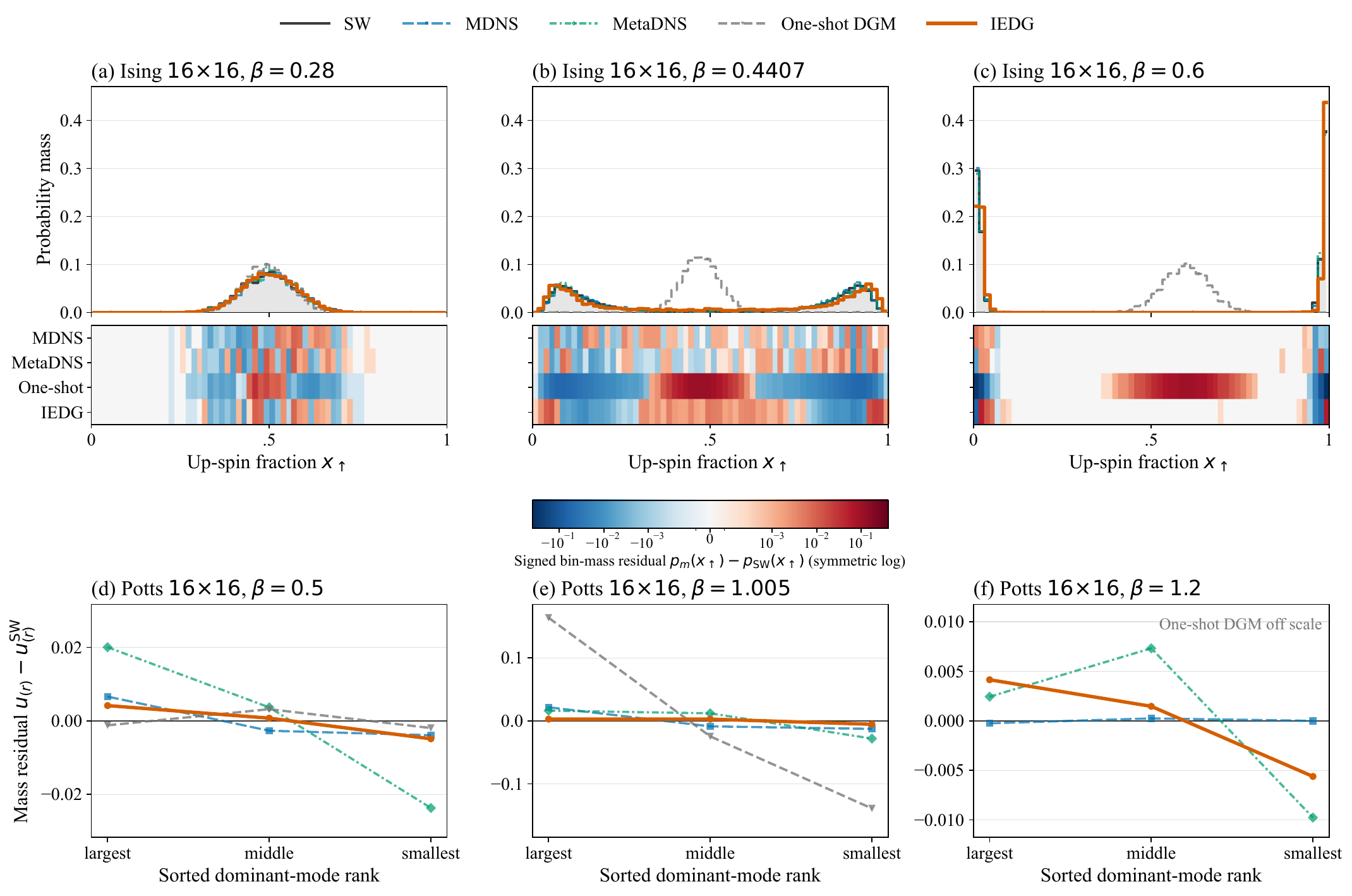}
    \caption{Phase coverage on $16\times16$ lattices: Ising $x_\uparrow$ histograms
    (a--c) and Potts sorted dominant-mode mass residuals
    relative to SW (d--f) across three regimes. In (a--c), the upper subpanels
    show the full distributions, while
    the maps below show signed per-bin residuals on a common symmetric-log
    scale (red: excess; blue: deficit). Panel (f) omits the off-scale one-shot
    curve.}
    \label{fig:phase-coverage}
\end{figure*}

\begin{table*}[t]
    \centering
    \caption{Sampling $16\times16$ lattice models across thermodynamic regimes
    (lower is better). Bold marks the best result.}
    \label{tab:lattice-main}
    \scriptsize
    \setlength{\tabcolsep}{2.5pt}
    \renewcommand{\arraystretch}{0.90}

    \textbf{(a) Ising $16\times16$}\par\vspace{0.2em}
    \begin{tabular}{lcccc|cccc|cccc}
        \toprule
        & \multicolumn{4}{c|}{$\beta=0.28$}
        & \multicolumn{4}{c|}{$\beta=0.4407$}
        & \multicolumn{4}{c}{$\beta=0.6$} \\
        Method
        & Mag. & Corr. agg. & $x_\uparrow$ JS
        & 
        & Mag. & Corr. agg. & $x_\uparrow$ JS
        &
        & Mag. & Corr. agg. & $x_\uparrow$ JS
        & \\
        \midrule
        Analytic preconditioner
        & 0.00615 & 1.51 & 0.0863
        & 
        & \textbf{0.00409} & 16.5 & 0.609
        &
        & 0.0169 & 27.1 & 0.693
        & \\

        MDNS (official ckpt.)
        & 0.0112 & \textbf{0.161} & 0.00937
        &
        & 0.0105 & 0.171 & \textbf{0.00980}
        &
        & 0.0173 & 0.0582 & \textbf{0.00336}
        & \\

        MetaDNS (official ckpt.)
        & 0.00839 & 0.195 & 0.00889
        &
        & 0.0317 & \textbf{0.124} & 0.0132
        &
        & \textbf{0.00490} & 0.0892 & 0.00574
        & \\

        One-shot DGM
        & 0.0112 & 0.402 & 0.0159
        &
        & 0.0587 & 16.3 & 0.599
        &
        & 0.170 & 26.8 & 0.693
        & \\

        IEDG
        & \textbf{0.00613} & 0.242 & \textbf{0.00742}
        &
        & 0.0106 & 0.150 & 0.0252
        &
        & 0.0157 & \textbf{0.0537} & 0.0193
        & \\
        \bottomrule
    \end{tabular}

    \vspace{0.3em}

    \textbf{(b) Three-state Potts $16\times16$}\par\vspace{0.2em}
    \begin{tabular}{lcccc|cccc|cccc}
        \toprule
        & \multicolumn{4}{c|}{$\beta=0.5$}
        & \multicolumn{4}{c|}{$\beta=1.005$}
        & \multicolumn{4}{c}{$\beta=1.2$} \\
        Method
        & Mag. & Corr. agg. & Mode $\ell_1$
        &
        & Mag. & Corr. agg. & Mode $\ell_1$
        &
        & Mag. & Corr. agg. & Mode $\ell_1$
        & \\
        \midrule
        Analytic preconditioner
        & 0.0294 & 0.152 & 0.0161
        &
        & 0.156 & 10.4 & 0.0372
        &
        & 0.0237 & 16.9 & 0.0254
        & \\

        MDNS (official ckpt.)
        & 0.0285 & 0.0675 & 0.0132
        &
        & 0.340 & 0.212 & 0.0431
        &
        & \textbf{0.0179} & 0.0386 & \textbf{0.000490}
        & \\

        MetaDNS (official ckpt.)
        & 0.246 & 0.114 & 0.0474
        &
        & 0.483 & \textbf{0.123} & 0.0563
        &
        & 0.197 & 0.131 & 0.0195
        & \\

        One-shot DGM
        & 0.0309 & 0.0677 & \textbf{0.00635}
        &
        & 0.488 & 10.9 & 0.328
        &
        & 0.947 & 17.7 & 0.825
        & \\

        IEDG
        & \textbf{0.0284} & \textbf{0.0673} & 0.00977
        &
        & \textbf{0.108} & \textbf{0.123} & \textbf{0.0109}
        &
        & 0.0757 & \textbf{0.0379} & 0.0112
        & \\
        \bottomrule
    \end{tabular}
\end{table*}

\paragraph{Ising: two-phase recovery.}
The $x_\uparrow$ law tracks the symmetry-breaking transition from a
single disordered mode through critical broadening to two ordered modes
(Figure~\ref{fig:phase-coverage}(a--c)). IEDG leads the learned samplers in
Mag. and $x_\uparrow$ JS in the disordered regime; near criticality, iteration
repairs the correlation and phase-coverage failures of one-shot DGM while
improving Corr. agg. over MDNS. At the ordered endpoint, IEDG attains the
lowest Corr. error and improves on MDNS in magnetization, although MDNS and MetaDNS recover the two phase weights more accurately.

\paragraph{Potts: three-phase recovery.}
Ranked dominant-color masses measure probability across the three
symmetry-related sectors (Figure~\ref{fig:phase-coverage}(d--f)). IEDG remains
competitive in the disordered regime and, near criticality, repairs one-shot
mode collapse, leading the learned samplers in Mag. and Mode $\ell_1$ while
matching the best Corr. agg. At the ordered endpoint, it sharply improves the
one-shot Mag. and Corr. errors while covering all sectors. MDNS
retains more accurate Mag. and phase weights.

\subsection{Graph-conditioned Max-Cut sampling}
\label{sec:exp-maxcut}

For an undirected BA graph $G=(V,E)$ with $n$ vertices and binary
partition $x$, the cut size is
$C_G(x):=\sum_{\{i,j\}\in E}\indc{x_i\ne x_j}$. We sample from
$\pi_\lambda(x\!\mid\!G)\propto e^{\lambda C_G(x)}$ at $\lambda=5$, which
concentrates probability on large cuts; Appendix~\ref{app:maxcut-protocol}
gives the complete target. For each
$n\in[20,32],[40,64],[100,128]$, we train on 1024 BA graphs and
evaluate 512 samples on each of 32 held-out graphs. Relative to the certified
optimum $C_G^\star:=\max_x C_G(x)$, $(R_{\max},R_{\rm avg})$ measure
best-of-budget and average sample quality.

\begin{table}[!htbp]
    \centering
    \caption{BA Max-Cut approximation ratios on 32 held-out graphs (higher is
    better), using 512 samples per graph. $\dagger$ marks PDNS
    values~\citep{guo2025pdnsv1}. Bold marks the best result.}
    \label{tab:maxcut-main}
    \small
    \setlength{\tabcolsep}{8.0pt}
    \renewcommand{\arraystretch}{0.92}
    \begin{tabular}{lcccccc}
        \toprule
        & \multicolumn{2}{c}{$n\in[20,32]$}
        & \multicolumn{2}{c}{$n\in[40,64]$}
        & \multicolumn{2}{c}{$n\in[100,128]$} \\
        \cmidrule(lr){2-3}\cmidrule(lr){4-5}\cmidrule(lr){6-7}
        Method & Max. & Avg. & Max. & Avg. & Max. & Avg. \\
        \midrule
        Uniform & 0.888 & 0.696 & 0.825 & 0.687 & 0.769 & 0.678 \\
        One-shot DGM & 0.962 & 0.854 & 0.884 & 0.798 & 0.814 & 0.748 \\
        PDNS$^\dagger$ & 0.963 & 0.911 & 0.897 & 0.869 & 0.876 & 0.840 \\
        IEDG & \textbf{1.00} & \textbf{0.965} & \textbf{0.987}
          & \textbf{0.920} & \textbf{0.915} & \textbf{0.871} \\
        \bottomrule
    \end{tabular}
\end{table}

Across all three size ranges, IEDG improves both ratios over one-shot DGM and
PDNS; in particular, it reaches the certified optimum within budget on the
smallest graphs.

\section{Conclusion}

IEDG replaces a global energy correction with ESS-controlled stagewise
posterior updates accumulated in one guide. When the frozen
posterior matches that induced by the current source, population-optimal
updates recover the intended reverse dynamics; otherwise, our analysis links
Bregman fitting error, overlap, and stagewise stability. Experiments demonstrate
improved full-distribution recovery, substantial gains over one-shot guidance on large
lattices, and stronger BA Max-Cut ratios than other baselines.

rESS controls endpoint overlap but not conditional-category coverage or rare
phases, so posterior errors may propagate across stages. Stronger variance
reduction and online joint updates of the guide and source distribution are
promising directions.

\bibliography{references}

\begin{thebibliography}{40}
\providecommand{\natexlab}[1]{#1}
\providecommand{\url}[1]{\texttt{#1}}
\expandafter\ifx\csname urlstyle\endcsname\relax
  \providecommand{\doi}[1]{doi: #1}\else
  \providecommand{\doi}{doi: \begingroup \urlstyle{rm}\Url}\fi

\bibitem[Akhound-Sadegh et~al.(2024)Akhound-Sadegh, Rector-Brooks, Bose,
  Mittal, Lemos, Liu, Sendera, Ravanbakhsh, Gidel, Bengio, Malkin, and
  Tong]{akhoundsadegh2024iterated}
Tara Akhound-Sadegh, Jarrid Rector-Brooks, Joey Bose, Sarthak Mittal, Pablo
  Lemos, Cheng-Hao Liu, Marcin Sendera, Siamak Ravanbakhsh, Gauthier Gidel,
  Yoshua Bengio, Nikolay Malkin, and Alexander Tong.
\newblock Iterated denoising energy matching for sampling from {Boltzmann}
  densities.
\newblock In \emph{Proceedings of the 41st International Conference on Machine
  Learning}, volume 235 of \emph{Proceedings of Machine Learning Research},
  pp.\  760--786, 2024.

\bibitem[Arbel et~al.(2021)Arbel, Matthews, and Doucet]{arbel2021annealed}
Michael Arbel, Alexander G. D.~G. Matthews, and Arnaud Doucet.
\newblock Annealed flow transport monte carlo.
\newblock In \emph{International Conference on Machine Learning}, 2021.

\bibitem[Austin et~al.(2021)Austin, Johnson, Ho, Tarlow, and van~den
  Berg]{austin2021structured}
Jacob Austin, Daniel~D. Johnson, Jonathan Ho, Daniel Tarlow, and Rianne van~den
  Berg.
\newblock Structured denoising diffusion models in discrete state-spaces.
\newblock In \emph{Advances in Neural Information Processing Systems},
  volume~34, 2021.

\bibitem[Banerjee et~al.(2005)Banerjee, Guo, and Wang]{banerjee2005optimality}
Arindam Banerjee, Xin Guo, and Hui Wang.
\newblock On the optimality of conditional expectation as a bregman predictor.
\newblock \emph{IEEE Transactions on Information Theory}, 51\penalty0
  (7):\penalty0 2664--2669, 2005.

\bibitem[Bengio et~al.(2023)Bengio, Lahlou, Deleu, Hu, Tiwari, and
  Bengio]{bengio2023gflownet}
Yoshua Bengio, Salem Lahlou, Tristan Deleu, Edward~J. Hu, Mo~Tiwari, and
  Emmanuel Bengio.
\newblock {GFlowNet} foundations.
\newblock \emph{Journal of Machine Learning Research}, 24\penalty0
  (210):\penalty0 1--55, 2023.

\bibitem[Beskos et~al.(2016)Beskos, Jasra, Kantas, and
  Thiery]{beskos2016convergence}
Alexandros Beskos, Ajay Jasra, Nikolas Kantas, and Alexandre Thiery.
\newblock On the convergence of adaptive sequential {Monte Carlo} methods.
\newblock \emph{The Annals of Applied Probability}, 26\penalty0 (2):\penalty0
  1111--1146, 2016.
\newblock \doi{10.1214/15-AAP1113}.

\bibitem[Campbell et~al.(2022)Campbell, Benton, De~Bortoli, Rainforth,
  Deligiannidis, and Doucet]{campbell2022continuous}
Andrew Campbell, Joe Benton, Valentin De~Bortoli, Tom Rainforth, George
  Deligiannidis, and Arnaud Doucet.
\newblock A continuous time framework for discrete denoising models.
\newblock In \emph{Advances in Neural Information Processing Systems},
  volume~35, 2022.

\bibitem[Campbell et~al.(2024)Campbell, Yim, Barzilay, Rainforth, and
  Jaakkola]{campbell2024generative}
Andrew Campbell, Jason Yim, Regina Barzilay, Tom Rainforth, and Tommi Jaakkola.
\newblock Generative flows on discrete state-spaces: Enabling multimodal flows
  with applications to protein co-design.
\newblock In \emph{International Conference on Machine Learning}, 2024.

\bibitem[Carter et~al.(2026)Carter, Choi, Tamogashev, Elvira, and
  Malkin]{carter2026offpolicy}
Arran Carter, Sanghyeok Choi, Kirill Tamogashev, V{\'i}ctor Elvira, and Nikolay
  Malkin.
\newblock Discrete diffusion samplers and bridges: Off-policy algorithms and
  applications in latent spaces.
\newblock In \emph{International Conference on Machine Learning}, 2026.

\bibitem[Crooks(2007)]{crooks2007measuring}
Gavin~E. Crooks.
\newblock Measuring thermodynamic length.
\newblock \emph{Physical Review Letters}, 99\penalty0 (10):\penalty0 100602,
  2007.
\newblock \doi{10.1103/PhysRevLett.99.100602}.

\bibitem[Del~Moral et~al.(2006)Del~Moral, Doucet, and
  Jasra]{delmoral2006sequential}
Pierre Del~Moral, Arnaud Doucet, and Ajay Jasra.
\newblock Sequential monte carlo samplers.
\newblock \emph{Journal of the Royal Statistical Society: Series B (Statistical
  Methodology)}, 68\penalty0 (3):\penalty0 411--436, 2006.

\bibitem[Du et~al.(2026)Du, Nam, Choi, Guo, Edamadaka, Sha, Pan, Chen, Tao, and
  G{\'o}mez-Bombarelli]{du2026metadns}
Xiaochen Du, Juno Nam, Jaemoo Choi, Wei Guo, Sathya Edamadaka, Junyi Sha, Elton
  Pan, Yongxin Chen, Molei Tao, and Rafael G{\'o}mez-Bombarelli.
\newblock {MetaDNS}: Enhancing exploration in discrete neural samplers via
  metadynamics.
\newblock In \emph{Proceedings of the 43rd International Conference on Machine
  Learning}, volume 306 of \emph{Proceedings of Machine Learning Research},
  2026.

\bibitem[Gat et~al.(2024)Gat, Remez, Shaul, Kreuk, Chen, Synnaeve, Adi, and
  Lipman]{gat2024discrete}
Itai Gat, Tal Remez, Neta Shaul, Felix Kreuk, Ricky T.~Q. Chen, Gabriel
  Synnaeve, Yossi Adi, and Yaron Lipman.
\newblock Discrete flow matching.
\newblock In \emph{Advances in Neural Information Processing Systems},
  volume~37, pp.\  133345--133385, 2024.

\bibitem[Grathwohl et~al.(2021)Grathwohl, Swersky, Hashemi, Duvenaud, and
  Maddison]{grathwohl2021oops}
Will Grathwohl, Kevin Swersky, Milad Hashemi, David Duvenaud, and Chris
  Maddison.
\newblock Oops i took a gradient: Scalable sampling for discrete distributions.
\newblock In \emph{Proceedings of the 38th International Conference on Machine
  Learning}, volume 139 of \emph{Proceedings of Machine Learning Research},
  pp.\  3831--3841, 2021.

\bibitem[Guo et~al.(2025)Guo, Choi, Zhu, Tao, and Chen]{guo2025pdnsv1}
Wei Guo, Jaemoo Choi, Yuchen Zhu, Molei Tao, and Yongxin Chen.
\newblock Proximal diffusion neural sampler.
\newblock \emph{arXiv preprint arXiv:2510.03824v1}, 2025.
\newblock URL \url{https://arxiv.org/abs/2510.03824v1}.

\bibitem[Guo et~al.(2026{\natexlab{a}})Guo, Choi, Zhu, Tao, and
  Chen]{guo2026pdns}
Wei Guo, Jaemoo Choi, Yuchen Zhu, Molei Tao, and Yongxin Chen.
\newblock Proximal diffusion neural sampler.
\newblock In \emph{International Conference on Learning Representations},
  2026{\natexlab{a}}.

\bibitem[Guo et~al.(2026{\natexlab{b}})Guo, Zhu, Du, Nam, Chen,
  G{\'o}mez-Bombarelli, Liu, Tao, and Choi]{guo2026dasbs}
Wei Guo, Yuchen Zhu, Xiaochen Du, Juno Nam, Yongxin Chen, Rafael
  G{\'o}mez-Bombarelli, Guan-Horng Liu, Molei Tao, and Jaemoo Choi.
\newblock Discrete adjoint schr\"odinger bridge sampler.
\newblock In \emph{International Conference on Machine Learning},
  2026{\natexlab{b}}.
\newblock URL \url{https://openreview.net/forum?id=G9KydTWzZL}.

\bibitem[He et~al.(2026)He, Rojas, and Tao]{he2026guidance}
Ye~He, Kevin Rojas, and Molei Tao.
\newblock What exactly does guidance do in masked discrete diffusion models?
\newblock In \emph{International Conference on Learning Representations}, 2026.

\bibitem[Holderrieth et~al.(2025)Holderrieth, Albergo, and
  Jaakkola]{holderrieth2025leaps}
Peter Holderrieth, Michael~Samuel Albergo, and Tommi Jaakkola.
\newblock {LEAPS}: A discrete neural sampler via locally equivariant networks.
\newblock In \emph{International Conference on Machine Learning}, 2025.

\bibitem[Hoogeboom et~al.(2021)Hoogeboom, Nielsen, Jaini, Forr{\'e}, and
  Welling]{hoogeboom2021argmax}
Emiel Hoogeboom, Didrik Nielsen, Priyank Jaini, Patrick Forr{\'e}, and Max
  Welling.
\newblock Argmax flows and multinomial diffusion: Learning categorical
  distributions.
\newblock In \emph{Advances in Neural Information Processing Systems},
  volume~34, pp.\  12454--12465, 2021.

\bibitem[Jasra et~al.(2011)Jasra, Stephens, Doucet, and
  Tsagaris]{jasra2011inference}
Ajay Jasra, David~A. Stephens, Arnaud Doucet, and Theodoros Tsagaris.
\newblock Inference for {L}{\'e}vy-driven stochastic volatility models via
  adaptive sequential {Monte Carlo}.
\newblock \emph{Scandinavian Journal of Statistics}, 38\penalty0 (1):\penalty0
  1--22, 2011.
\newblock \doi{10.1111/j.1467-9469.2010.00723.x}.

\bibitem[Lou et~al.(2024)Lou, Meng, and Ermon]{lou2024discrete}
Aaron Lou, Chenlin Meng, and Stefano Ermon.
\newblock Discrete diffusion modeling by estimating the ratios of the data
  distribution.
\newblock In \emph{International Conference on Machine Learning}, 2024.

\bibitem[Matthews et~al.(2022)Matthews, Arbel, Rezende, and
  Doucet]{matthews2022continual}
Alex Matthews, Michael Arbel, Danilo~Jimenez Rezende, and Arnaud Doucet.
\newblock Continual repeated annealed flow transport {Monte Carlo}.
\newblock In \emph{International Conference on Machine Learning}, pp.\
  15196--15219, 2022.

\bibitem[Midgley et~al.(2023)Midgley, Stimper, Simm, Sch{\"o}lkopf, and
  Hern{\'a}ndez-Lobato]{midgley2023flow}
Laurence~Illing Midgley, Vincent Stimper, Gregor N.~C. Simm, Bernhard
  Sch{\"o}lkopf, and Jos{\'e}~Miguel Hern{\'a}ndez-Lobato.
\newblock Flow annealed importance sampling bootstrap.
\newblock In \emph{International Conference on Learning Representations}, 2023.

\bibitem[Neal(2001)]{neal2001annealed}
Radford~M. Neal.
\newblock Annealed importance sampling.
\newblock \emph{Statistics and Computing}, 11\penalty0 (2):\penalty0 125--139,
  2001.

\bibitem[Nisonoff et~al.(2025)Nisonoff, Xiong, Allenspach, and
  Listgarten]{nisonoff2025unlocking}
Hunter Nisonoff, Junhao Xiong, Stephan Allenspach, and Jennifer Listgarten.
\newblock Unlocking guidance for discrete state-space diffusion and flow
  models.
\newblock In \emph{International Conference on Learning Representations}, 2025.

\bibitem[Ou et~al.(2025)Ou, Zhang, and Li]{ou2025dnfs}
Zijing Ou, Ruixiang Zhang, and Yingzhen Li.
\newblock Discrete neural flow samplers with locally equivariant transformer.
\newblock In \emph{Advances in Neural Information Processing Systems}, 2025.

\bibitem[Rissanen et~al.(2025)Rissanen, Ouyang, He, Chen, Heinonen, Solin, and
  Hern{\'a}ndez-Lobato]{rissanen2025progressive}
Severi Rissanen, Ruikang Ouyang, Jiajun He, Wenlin Chen, Markus Heinonen, Arno
  Solin, and Jos{\'e}~Miguel Hern{\'a}ndez-Lobato.
\newblock Progressive tempering sampler with diffusion.
\newblock In \emph{Proceedings of the 42nd International Conference on Machine
  Learning}, volume 267 of \emph{Proceedings of Machine Learning Research},
  pp.\  51724--51746, 2025.

\bibitem[Sanokowski et~al.(2024)Sanokowski, Hochreiter, and
  Lehner]{sanokowski2024diffuco}
Sebastian Sanokowski, Sepp Hochreiter, and Sebastian Lehner.
\newblock A diffusion model framework for unsupervised neural combinatorial
  optimization.
\newblock In \emph{International Conference on Machine Learning}, 2024.

\bibitem[Sanokowski et~al.(2025)Sanokowski, Berghammer, Ennenmoser, Wang,
  Hochreiter, and Lehner]{sanokowski2025sdds}
Sebastian Sanokowski, Wilhelm Berghammer, Martin Ennenmoser, Haoyu~Peter Wang,
  Sepp Hochreiter, and Sebastian Lehner.
\newblock Scalable discrete diffusion samplers: Combinatorial optimization and
  statistical physics.
\newblock In \emph{International Conference on Learning Representations}, 2025.

\bibitem[Schiff et~al.(2025)Schiff, Sahoo, Phung, Wang, Boshar, Dalla-Torre,
  de~Almeida, Rush, Pierrot, and Kuleshov]{schiff2025simple}
Yair Schiff, Subham~Sekhar Sahoo, Hao Phung, Guanghan Wang, Sam Boshar, Hugo
  Dalla-Torre, Bernardo~P. de~Almeida, Alexander~M. Rush, Thomas Pierrot, and
  Volodymyr Kuleshov.
\newblock Simple guidance mechanisms for discrete diffusion models.
\newblock In \emph{International Conference on Learning Representations}, 2025.

\bibitem[Schopmans \& Friederich(2025)Schopmans and
  Friederich]{schopmans2025temperature}
Henrik Schopmans and Pascal Friederich.
\newblock Temperature-annealed {Boltzmann} generators.
\newblock In \emph{Proceedings of the 42nd International Conference on Machine
  Learning}, volume 267 of \emph{Proceedings of Machine Learning Research},
  pp.\  53467--53500, 2025.

\bibitem[Sun et~al.(2023)Sun, Yu, Dai, Schuurmans, and Dai]{sun2023score}
Haoran Sun, Lijun Yu, Bo~Dai, Dale Schuurmans, and Hanjun Dai.
\newblock Score-based continuous-time discrete diffusion models.
\newblock In \emph{International Conference on Learning Representations}, 2023.

\bibitem[Syed et~al.(2026)Syed, Bouchard-C{\^o}t{\'e}, Chern, and
  Doucet]{syed2026optimized}
Saifuddin Syed, Alexandre Bouchard-C{\^o}t{\'e}, Kevin Chern, and Arnaud
  Doucet.
\newblock Optimized annealed sequential monte carlo samplers.
\newblock \emph{Journal of the Royal Statistical Society Series B: Statistical
  Methodology}, pp.\  qkag082, 2026.
\newblock \doi{10.1093/jrsssb/qkag082}.

\bibitem[Wan et~al.(2026)Wan, Ouyang, Xie, Fang, Zha, and
  Cheng]{wan2026discrete}
Zhengyan Wan, Yidong Ouyang, Liyan Xie, Fang Fang, Hongyuan Zha, and Guang
  Cheng.
\newblock Discrete guidance matching: Exact guidance for discrete flow
  matching.
\newblock In \emph{International Conference on Learning Representations}, 2026.

\bibitem[Wu et~al.(2019)Wu, Wang, and Zhang]{wu2019solving}
Dian Wu, Lei Wang, and Pan Zhang.
\newblock Solving statistical mechanics using variational autoregressive
  networks.
\newblock \emph{Physical Review Letters}, 122\penalty0 (8):\penalty0 080602,
  2019.

\bibitem[Zanella(2020)]{zanella2020informed}
Giacomo Zanella.
\newblock Informed proposals for local {MCMC} in discrete spaces.
\newblock \emph{Journal of the American Statistical Association}, 115\penalty0
  (530):\penalty0 852--865, 2020.
\newblock \doi{10.1080/01621459.2019.1585255}.

\bibitem[Zhang et~al.(2022)Zhang, Liu, and Liu]{zhang2022langevin}
Ruqi Zhang, Xingchao Liu, and Qiang Liu.
\newblock A {L}angevin-like sampler for discrete distributions.
\newblock In \emph{Proceedings of the 39th International Conference on Machine
  Learning}, volume 162 of \emph{Proceedings of Machine Learning Research},
  pp.\  26375--26396, 2022.

\bibitem[Zhou et~al.(2016)Zhou, Johansen, and Aston]{zhou2016automatic}
Yan Zhou, Adam~M. Johansen, and John A.~D. Aston.
\newblock Toward automatic model comparison: An adaptive sequential monte carlo
  approach.
\newblock \emph{Journal of Computational and Graphical Statistics}, 25\penalty0
  (3):\penalty0 701--726, 2016.

\bibitem[Zhu et~al.(2025)Zhu, Guo, Choi, Liu, Chen, and Tao]{zhu2025mdns}
Yuchen Zhu, Wei Guo, Jaemoo Choi, Guan-Horng Liu, Yongxin Chen, and Molei Tao.
\newblock {MDNS}: Masked diffusion neural sampler via stochastic optimal
  control.
\newblock In \emph{Advances in Neural Information Processing Systems}, 2025.

\end{thebibliography}
\bibliographystyle{iclr2027_template/iclr2027_conference}

\clearpage
\appendix
\section{Algorithms}
\label{app:algorithms}
\raggedbottom

Algorithms~\ref{alg:iedg-training} and~\ref{alg:direct-q-sampling} turn the
construction in Section~\ref{sec:method} into the stagewise training and sampling
procedures used throughout the paper.

\begin{algorithm}[H]
    \caption{Iterative Exact Discrete Guidance training}
    \label{alg:iedg-training}
    \small
    \begin{algorithmic}[1]
        \Require Energy $E$; base posterior
        $p_{\mathrm{ref},1\mid t}$; conditional path $p_{t\mid1}$;
        time law $\tau$; rESS thresholds $(\eta_{\mathrm{med}},\eta_{10})$;
        stage budget $K_{\max}$; buffer size $n$; optional frozen path
        $\mathcal A^\star$
        \Ensure Final guide $G_{\psi_K}$ and endpoints
        $\mathcal A=(\alpha_0,\ldots,\alpha_K)$

        \State Set $G_{\psi_0}\equiv1$, $\alpha_0\gets0$, $k\gets1$, and draw
        $\mathcal B_1=\{X^{(i)}\}_{i=1}^{n}
        \overset{\mathrm{i.i.d.}}{\sim}\refdist$

        \While{$\alpha_{k-1}<1$}
            \If{$\mathcal A^\star$ is provided}
                \State Set $\alpha_k\gets\alpha_k^\star$
            \Else
                \State Set $K_k^{\mathrm{rem}}\gets K_{\max}-k+1$; compute
                $\Delta_k^{\mathrm{ESS}}$ by
                \eqref{eq:groupwise-ess-path-rule}, and set
                $\Delta\alpha_k$ by \eqref{eq:ess-path-reachability}
                \State Set $\alpha_k\gets\alpha_{k-1}+\Delta\alpha_k$
            \EndIf
            \State Set
            $w_k(x)\gets e^{-(\alpha_k-\alpha_{k-1})E(x)}$ and freeze
            $\widehat c_k\gets|\mathcal B_k|^{-1}
            \sum_{x\in\mathcal B_k}w_k(x)$
            \State Initialize $\psi_k\gets\psi_{k-1}$

            \For{each stochastic training update}
                \State Sample
                $X_1\sim\operatorname{Unif}(\mathcal B_k)$,
                $t\sim\tau$, and
                $X_t\sim p_{t\mid1}(\cdot\mid X_1)$
                \State Evaluate the frozen teacher
                $\widehat p_{k-1,1\mid t}^{d}\gets
                p_{\mathrm{ref},1\mid t}^{d}$ if $k=1$, and
                $\widehat p_{k-1,1\mid t}^{d}\gets
                q_{\psi_{k-1},1\mid t}^{d}$ otherwise
                \State Form, for each $d$,
                \[
                R_{k,t}^{d}
                \gets
                \frac{w_k(X_1)}{\widehat c_k}
                \frac{
                  \widehat p_{k-1,1\mid t}^{d}(X_1^d\mid X_t)}
                {
                  p_{\mathrm{ref},1\mid t}^{d}(X_1^d\mid X_t)}
                \]
                \State Update $\psi_k$ using the bridge objective
                $\mathcal L_k$ in \eqref{eq:bridge-objective}
            \EndFor

            \State Freeze the stage guide $G_{\psi_k}$
            \If{$\alpha_k<1$}
                \State Generate the $n$ states of $\mathcal B_{k+1}$ by repeated
                calls to Algorithm~\ref{alg:direct-q-sampling} using $G_{\psi_k}$
            \EndIf
            \State $k\gets k+1$
        \EndWhile

        \State Set $K\gets k-1$ and assert $\alpha_K=1$
        \State \Return $G_{\psi_K}$ and $\mathcal A$
    \end{algorithmic}
\end{algorithm}

\begin{algorithm}[H]
    \caption{Direct-$q$ posterior-marginal tau-leaping}
    \label{alg:direct-q-sampling}
    \small
    \begin{algorithmic}[1]
        \Require Positive guide $G_\psi$; analytic base posterior
        $p_{\mathrm{ref},1\mid t}$; interpolation schedule $\kappa$; number of
        steps $N$
        \Ensure Terminal state $X_N$
        \State Sample $X_0\sim\nu^{\otimes D}$ and set $\Delta t\gets1/N$
        \For{$i=0,\ldots,N-1$}
            \State $t_i\gets i/N$
            \State Compute, for every $d$ and $z\in\alphabet$,
            \Statex \hspace{1.6em}$\displaystyle
            q_{\psi,1\mid t_i}^{d}(z\mid X_i)
            \gets
            \frac{p_{\mathrm{ref},1\mid t_i}^{d}(z\mid X_i)
            G_{\psi,t_i}^{d}(z,X_i)}
            {\sum_{a\in\alphabet}p_{\mathrm{ref},1\mid t_i}^{d}(a\mid X_i)
            G_{\psi,t_i}^{d}(a,X_i)}$.
            \ForAll{$d=1,\ldots,D$ \textbf{ in parallel}}
                \State $\displaystyle
                \lambda_i^d(z)\gets
                \frac{\dot\kappa_{t_i}}{1-\kappa_{t_i}}
                q_{\psi,1\mid t_i}^{d}(z\mid X_i)
                \indc{z\neq X_i^d}$ and
                $\Lambda_i^d\gets\sum_{z\in\alphabet}\lambda_i^d(z)$
                \State Sample
                $J_i^d\sim\operatorname{Bernoulli}
                (1-e^{-\Delta t\Lambda_i^d})$
                \If{$J_i^d=1$}
                    \State Sample $\widetilde X_{i+1}^d$ from
                    $\Pr(\widetilde X_{i+1}^d=z)=
                    \lambda_i^d(z)/\Lambda_i^d$
                \Else
                    \State $\widetilde X_{i+1}^d\gets X_i^d$
                \EndIf
            \EndFor
            \State Commit the parallel update $X_{i+1}\gets\widetilde X_{i+1}$
        \EndFor
        \State \Return $X_N$
    \end{algorithmic}
\end{algorithm}

Algorithm~\ref{alg:iedg-training} selects each endpoint from the current source buffer
without $\mathcal A^\star$. For reported experiments, one
development run executes this adaptive branch; its endpoints are then frozen
and replayed across formal runs to reduce path-induced variation.
Algorithm~\ref{alg:direct-q-sampling} holds the state-dependent rates fixed on
each interval, integrates the resulting exit rate through the exponential jump
probability, and permits at most one jump per coordinate. It performs $N$
posterior evaluations and no additional terminal denoising step. The resulting
finite-step error is isolated in Appendix~\ref{app:practical-residuals}.

\subsection{ESS path construction and experimental conventions}

\paragraph{Adaptive ESS path construction.}
\label{app:ess-calibration}
The increment actually applied at stage $k$ is
\begin{equation}
    \Delta\alpha_k
    :=\min\left\{1-\alpha_{k-1},
      \max\left\{\Delta_k^{\mathrm{ESS}},
      \frac{1-\alpha_{k-1}}{K_k^{\mathrm{rem}}}\right\}
    \right\}.
    \label{eq:ess-path-reachability}
\end{equation}
Here $K_k^{\mathrm{rem}}$ is the number of available transitions including
stage $k$, and $\Delta_k^{\mathrm{ESS}}$ is the largest candidate satisfying
the groupwise overlap constraints
\begin{equation}
\begin{aligned}
    \Delta_k^{\mathrm{ESS}}
    :=\max\bigl\{0\leq\Delta\leq1-\alpha_{k-1}:\;&
      \operatorname{Median}_{g}
      \bigl[\widehat{\ress}_{k,g}(\Delta)\bigr]
        \geq\eta_{\mathrm{med}},\\[-2pt]
      &Q_{0.1}^{(g)}
      \bigl[\widehat{\ress}_{k,g}(\Delta)\bigr]\geq\eta_{10}
    \bigr\}.
\end{aligned}
    \label{eq:groupwise-ess-path-rule}
\end{equation}
In this expression, $Q_{0.1}^{(g)}[\cdot]$ is the empirical lower decile over
groups and
\begin{equation}
    \widehat{\ress}_{k,g}(\Delta)
    :=\frac{
      \left(\sum_{x\in\mathcal B_{k,g}}e^{-\Delta E(x)}\right)^2
    }{
      |\mathcal B_{k,g}|
      \sum_{x\in\mathcal B_{k,g}}e^{-2\Delta E(x)}
    },
    \label{eq:groupwise-empirical-ress}
\end{equation}
is the empirical rESS of group $\mathcal B_{k,g}$. The analytic first stage
draws path-selection states from $\refdist$; later stages use the fixed source
buffer. Lattice buffers are split into contiguous groups, whereas each graph
and its source states form one group. We evaluate
\eqref{eq:groupwise-empirical-ress} by log-sum-exp and find the largest
admissible candidate by bisection. The reachability term in
\eqref{eq:ess-path-reachability} guarantees arrival at $\alpha=1$ within the
stage budget; the groupwise statistics are finite-buffer estimates of the
population rESS in \eqref{eq:population-ess-ratio}. Frozen endpoints and
numerical settings are reported in Appendix~\ref{app:frozen-ess-path}.

\paragraph{Stagewise response normalization.}
\label{app:fixed-stage-normalizer}
The source buffer fixes the positive stage scale
\begin{equation*}
    \widehat c_k
    :=\frac{1}{|\mathcal B_k|}
      \sum_{x\in\mathcal B_k}\exp\{-\Delta\alpha_kE(x)\}>0,
\end{equation*}
which is held constant throughout training; graph-conditioned targets use one
$\widehat c_k(G)$ per graph buffer. Because this factor is independent of
$(t,x_t,d,z)$ within an instance, it rescales the population guide uniformly and
cancels in the normalized posterior \eqref{eq:fixed-base-parameterization}.

\paragraph{Experimental source-promotion exponent.}
For the two lattice transports specified in
Appendix~\ref{app:source-promotion}, an intermediate source buffer may be
generated from
\begin{equation}
    q_{\psi_k,1\mid t}^{d,(\gamma)}(z\mid x_t)
    =\frac{
      p_{\mathrm{ref},1\mid t}^{d}(z\mid x_t)
      [G_{\psi_k,t}^{d}(z,x_t)]^\gamma
    }{
      \sum_{a\in\alphabet}p_{\mathrm{ref},1\mid t}^{d}(a\mid x_t)
      [G_{\psi_k,t}^{d}(a,x_t)]^\gamma
    },
    \label{eq:source-promotion-posterior}
\end{equation}
where $\gamma=1$ recovers the canonical posterior. Validation selects
$\gamma_{\mathrm{src}}\in\{1,1.1\}$ together with the source checkpoint; this
pair is used only to construct the next buffer and its frozen posterior.
All exactness statements and terminal samples use
$\gamma_{\mathrm{term}}=1$. Thus nonunit source promotion is a finite-model
bias-compensation heuristic accounted for by
$r_{k,T}^{\mathrm{pr}}$ in Appendix~\ref{app:practical-residuals}.
\flushbottom

\section{Additional Related Works}
\label{app:related-work}

\subsection{Discrete diffusion, flow, and guidance}

Categorical diffusion models reverse multinomial or structured corruption
kernels \citep{hoogeboom2021argmax,austin2021structured}. Continuous-time
formulations represent denoising by a CTMC
\citep{campbell2022continuous} and estimate reverse rates or density ratios
\citep{sun2023score,lou2024discrete}. Discrete Flow Matching specifies a
conditional probability path and marginalizes endpoint-conditional rates using
a learned posterior; this formulation includes discrete diffusion paths as
special cases \citep{campbell2024generative,gat2024discrete}.

Guidance modifies or reweights reverse dynamics using conditional signals, which
may come from an auxiliary model or a jointly trained conditional
parameterization. \citet{schiff2025simple} derive classifier-based and classifier-free
reweighting rules for discrete diffusion, whereas
\citet{nisonoff2025unlocking} formulate guidance through CTMC transition rates,
including an exact but generally costly correction and practical
approximations. Recent work also characterizes the distributions and dynamics
induced by classifier-free guidance in masked diffusion
\citep{he2026guidance}. DGM takes a different route: it proves that a terminal
density ratio yields an exact endpoint-posterior correction through a
conditional expectation \citep{wan2026discrete}. IEDG uses this identity
unchanged; its contribution is a multistage construction that defines each
local ratio against the available sampler and accumulates all corrections on a
shared posterior base.

\subsection{Neural samplers for unnormalized discrete targets}

Variational autoregressive networks minimize variational free energy using
tractable likelihoods \citep{wu2019solving}, while GFlowNets train constructive
policies whose terminal probabilities are proportional to a reward
\citep{bengio2023gflownet}. DiffUCO minimizes a reverse-KL upper bound for
data-free combinatorial optimization \citep{sanokowski2024diffuco}; SDDS avoids
backpropagation through a complete diffusion trajectory using policy-gradient
and self-normalized neural importance-sampling objectives
\citep{sanokowski2025sdds}.

An alternative line improves inference-time Markov transitions rather than
amortizing a complete sampler. Locally balanced proposals use pointwise target
information to construct efficient discrete-space Metropolis--Hastings moves
\citep{zanella2020informed}. Gradient-informed Gibbs proposals and discrete
Langevin proposals extend this principle to high-dimensional energy models
\citep{grathwohl2021oops,zhang2022langevin}. Their Metropolis-corrected forms
retain the target as the stationary law, but still require a chain of local
transitions for each new sample; neural samplers instead move most of this cost
to training.

LEAPS learns locally equivariant CTMC rates by controlling path-space importance
weights \citep{holderrieth2025leaps}. DNFS fits neural rates through Monte Carlo
Kolmogorov residuals, with control variates and coordinate descent for variance
reduction \citep{ou2025dnfs}. MDNS derives masked-diffusion objectives from
stochastic optimal control \citep{zhu2025mdns}, and MetaDNS incorporates
well-tempered metadynamics into diffusion or autoregressive samplers to improve
metastable-mode exploration \citep{du2026metadns}. DASBS extends adjoint
matching to finite-state CTMC Schr\"odinger bridges using a cyclic-group
construction \citep{guo2026dasbs}. Concurrent work develops off-policy training
for discrete diffusion samplers and data-to-energy bridges
\citep{carter2026offpolicy}. These objectives differ from IEDG's
conditional energy-ratio regression and same-base posterior accumulation.

\subsection{Progressive tempering and learned transport}

Annealed importance sampling and sequential Monte Carlo connect a tractable
reference to a target through intermediate distributions, importance weighting,
and Markov transitions or resampling
\citep{neal2001annealed,delmoral2006sequential}. Adaptive SMC uses empirical
particle statistics, notably ESS, to select intermediate temperatures and tune
subsequent transitions
\citep{jasra2011inference,beskos2016convergence,zhou2016automatic}; this
scheduling principle predates IEDG. Thermodynamic length provides a metric of
separation along equilibrium paths and is linked to Fisher information
\citep{crooks2007measuring}. IEDG uses rESS to control consecutive overlap, and
its analysis additionally makes the induced change-of-measure factor explicit
in the learning-to-sampling bound alongside conditional coverage.

The closest continuous-space analogues combine progressive targets with
self-generated training data. Temperature-Annealed Boltzmann Generators
(TA-BG) first fit a normalizing flow at high temperature and then retrain it at
successively lower temperatures using importance-weighted samples from the
current generator; reverse ESS measures the quality of this reweighting
\citep{schopmans2025temperature}. FAB instead bootstraps a flow from
annealed-importance-weighted samples \citep{midgley2023flow}. iDEM alternates
reverse-diffusion sampling with denoising energy-matching updates on regions
visited by the current model, without an explicit temperature path
\citep{akhoundsadegh2024iterated}. PTSD trains diffusion samplers sequentially
across temperatures, combines higher-temperature models to initialize the next
target, and applies limited MCMC refinement before retraining
\citep{rissanen2025progressive}.

These methods share IEDG's use of easier intermediate problems and samples from
the current sampler, but their correction mechanisms differ: importance-weighted flow fitting for TA-BG and FAB, iterative reverse-SDE self-training for
iDEM, and MCMC-refined temperature transfer for PTSD. IEDG instead learns
finite-state endpoint density-ratio corrections for local Boltzmann increments
and accumulates them on a common analytic posterior base. Annealed Flow
Transport and CRAFT likewise fit normalizing-flow maps between successive
annealed targets
\citep{arbel2021annealed,matthews2022continual}. Most closely, PDNS performs
proximal updates in path-measure space and solves each subproblem with a weighted
denoising objective \citep{guo2026pdns}. IEDG likewise uses a geometric
Boltzmann path, which is not itself novel, but transports by posterior-exact
local tilts and absorbs every correction into a common analytic base. Unlike
annealed map-composition methods, its intermediate stages construct the
training-time posterior; terminal inference uses one accumulated guide.

\section{Extended Background: Marginalizing Conditional Rates}
\label{app:background}

This appendix records the finite-state marginalization argument used by the
posterior-marginal sampler. It also clarifies which part of reverse sampling is an
exact continuous-time identity and which part is introduced by numerical
discretization.

\begin{proposition}[Marginalization of endpoint-conditional rates]
\label{prop:rate-marginalization}
Let $X_1\sim q_1$ and suppose that, conditional on $X_1=x_1$, a finite-state
time-inhomogeneous Markov chain has marginal $q_{t\mid 1}(\cdot\mid x_1)$ and
off-diagonal rate $u_t^q(z,x\mid x_1)$ from $x$ to $z$. Let
\begin{equation}
    u_t^q(z,x)
    :=\mathbb{E}[u_t^q(z,x\mid X_1)\mid X_t=x].
    \label{eq:marginalized-rate}
\end{equation}
Then the unconditional marginal
$q_t(x)=\sum_{x_1}q_1(x_1)q_{t\mid 1}(x\mid x_1)$ satisfies the forward equation
with off-diagonal rate $u_t^q$.
\end{proposition}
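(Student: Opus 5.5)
The plan is to verify the Kolmogorov forward equation directly by averaging the conditional forward equations over $X_1\sim q_1$. Everything is finite, so the interchange of $\partial_t$ and the finite sum over $x_1$ needs no justification beyond differentiability of each $q_{t\mid1}(x\mid x_1)$ in $t$.

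\textbf{Step 1: average the conditional equations.} For each fixed $x_1$, the conditional chain satisfies
\[
\partial_t q_{t\mid1}(x\mid x_1)=\sum_{z\neq x}\Bigl[u_t^q(x,z\mid x_1)q_{t\mid1}(z\mid x_1)-u_t^q(z,x\mid x_1)q_{t\mid1}(x\mid x_1)\Bigr].
\]
Multiplying by $q_1(x_1)$ and summing over $x_1$ gives $\partial_t q_t(x)$ on the left, since $q_t(x)=\sum_{x_1}q_1(x_1)q_{t\mid1}(x\mid x_1)$.

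\textbf{Step 2: rewrite each right-hand term as a marginalized rate.} This is the one real step. By Bayes,
\[
\sum_{x_1}u_t^q(x,z\mid x_1)\,q_{t\mid1}(z\mid x_1)\,q_1(x_1)=q_t(z)\,\mathbb E[u_t^q(x,z\mid X_1)\mid X_t=z]=q_t(z)\,u_t^q(x,z),
\]
and the outflow term becomes $q_t(x)\,u_t^q(z,x)$ in the same way.

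\textbf{Step 3: handle states with zero marginal mass.} The conditional expectation in \eqref{eq:marginalized-rate} is only defined where $q_t(z)>0$. Where $q_t(z)=0$, every summand $q_{t\mid1}(z\mid x_1)q_1(x_1)$ vanishes, so the identity in Step 2 holds for any finite choice of $u_t^q(\cdot,z)$. I would say this explicitly and fix an arbitrary finite convention there. For the uniform-replacement path with $t<1$ the issue does not arise, because $q_t$ has full support.

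\textbf{Step 4: conclude.} Substituting Step 2 into Step 1 yields
\[
\partial_t q_t(x)=\sum_{z\neq x}\bigl[u_t^q(x,z)q_t(z)-u_t^q(z,x)q_t(x)\bigr],
\]
which is the forward equation with off-diagonal rate $u_t^q$. I would close by remarking that, for the coordinate-factorized rates $a_t\,\indc{z^d=x_1^d}$ changing only coordinate $d$, the conditional expectation reduces to $a_t\,q_{1\mid t}^d(z^d\mid x)$. This is exactly the posterior-marginal rate used in Section~\ref{sec:reverse-sampling}.

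I expect no genuine obstacle. The only care needed is the zero-mass convention in Step 3 and keeping the rate-index order ($z$ versus $x$) consistent throughout.
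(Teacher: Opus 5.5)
Your proposal is correct and follows the paper's proof essentially verbatim: multiply each conditional forward equation by $q_1(x_1)$, sum over $x_1$, and use Bayes' rule to identify the inflow and outflow terms with $q_t(z)u_t^q(x,z)$ and $q_t(x)u_t^q(z,x)$. Your explicit zero-mass convention in Step 3 is a harmless refinement. The paper leaves it implicit, noting only that all interchanges are finite sums.
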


\begin{proof}
For each endpoint $x_1$, the conditional marginal satisfies
\begin{equation*}
    \partial_t q_{t\mid 1}(x\mid x_1)
    =\sum_{z\neq x}
      \left[
        q_{t\mid 1}(z\mid x_1)u_t^q(x,z\mid x_1)
        -q_{t\mid 1}(x\mid x_1)u_t^q(z,x\mid x_1)
      \right].
\end{equation*}
Multiplying by $q_1(x_1)$, summing over $x_1$, and using Bayes' rule gives
\begin{align}
    \partial_t q_t(x)
    &=\sum_{z\neq x}
      \left[
        q_t(z)\mathbb{E}[u_t^q(x,z\mid X_1)\mid X_t=z]
        -q_t(x)\mathbb{E}[u_t^q(z,x\mid X_1)\mid X_t=x]
      \right] \notag\\
    &=\sum_{z\neq x}
      \left[
        q_t(z)u_t^q(x,z)
        -q_t(x)u_t^q(z,x)
      \right]. \notag
\end{align}
This is the desired forward equation; all interchanges are finite sums.
\end{proof}

For the coordinate-wise uniform-replacement path, the off-diagonal conditional
rate for changing coordinate $d$ from $x^d$ to $z$ is
\begin{equation*}
    u_t^{q,d}(z,x^d\mid x_1^d)
    =\frac{\dot\kappa_t}{1-\kappa_t}
      \indc{z=x_1^d}\indc{z\neq x^d}.
\end{equation*}
Conditioning on $X_t=x$ in \eqref{eq:marginalized-rate} immediately yields
\begin{equation*}
    u_t^{q,d}(z,x)
    =\frac{\dot\kappa_t}{1-\kappa_t}
      q_{1\mid t}^{d}(z\mid x)\indc{z\neq x^d}.
\end{equation*}
Therefore, an exact coordinate posterior gives exact marginal rates for the
intended path. Euler or tau-leap
simulation of these rates is a separate numerical approximation.

\section{Complete Proofs}
\label{app:proofs}

Sections~D.1--D.3 establish the population-exact stage bridge, and Section~D.4
develops the ESS geometry and its $L^2$ change-of-measure control.
Sections~D.5--D.6 connect Bregman excess risk to finite-time sampling error and
define the resulting stagewise remainder. Section~D.7 propagates this error
across stages and proves ideal recovery.

\subsection{Bregman conditional mean and excess identity}

We record the standard conditional-mean property of the DGM Bregman loss
\citep{banerjee2005optimality,wan2026discrete} and the associated excess-risk
identity used in Section~\ref{app:one-stage-calibration}.

\begin{lemma}[Positive conditional Bregman predictor]
\label{lem:conditional-bregman}
Let $(R,Y)$ be random variables with $R>0$ integrable. Among positive
measurable predictors of finite risk,
the population DGM loss is uniquely minimized, up to null sets, by
\begin{equation*}
    g^\star(Y)=\mathbb E[R\mid Y].
\end{equation*}
\end{lemma}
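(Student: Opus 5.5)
The plan is to reduce the claim to a pointwise statement about the conditional risk and then use the Bregman three-point identity. Write $g^\star(Y):=\mathbb E[R\mid Y]$. Since $R>0$ a.s. and is integrable, $g^\star$ is a version that is finite and strictly positive almost surely: on the event $\{g^\star(Y)=0\}$ we would have $\mathbb E[R\,\indc{g^\star(Y)=0}]=0$, which forces that event to be null. So $g^\star$ is an admissible positive predictor. For any positive measurable $g$ of finite risk, the tower property gives
\[
\mathbb E[\ell_{\mathrm{DGM}}(g(Y),R)]=\mathbb E\bigl[g(Y)-g^\star(Y)\log g(Y)\bigr],
\]
because $g(Y)$ and $\log g(Y)$ are $Y$-measurable. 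This step needs $R\log g(Y)$ to be integrable, or at least quasi-integrable; I will justify it from the finite-risk assumption together with the positivity of the two parts of the loss, splitting into the events $\{g\ge1\}$ and $\{g<1\}$.

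Next I compute the excess risk. For $u,v>0$ define $\phi(u):=u\log u-u$. Then
\[
(u-v\log u)-(v-v\log v)=v\log\frac{v}{u}-v+u=d_\phi(v,u)\ge0,
\]
and equality holds iff $u=v$, by strict convexity of $\phi$, or directly from $\log s\le s-1$. Applying this with $u=g(Y)$ and $v=g^\star(Y)$ and taking expectations gives
\[
\mathcal L[g]-\mathcal L[g^\star]=\mathbb E\bigl[d_\phi(g^\star(Y),g(Y))\bigr]\ge0.
\]
This is the excess identity that will be reused in Appendix~\ref{app:one-stage-calibration}. Before subtracting the two risks, I need $\mathcal L[g^\star]$ to be finite, which requires $\mathbb E[R\log g^\star(Y)]$ to be finite. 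If that is not available, I will instead state the comparison through the pointwise conditional inequality and integrate the nonnegative divergence directly. This avoids ever forming $\infty-\infty$.

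For uniqueness, suppose $g$ attains the minimum. Then $\mathbb E[d_\phi(g^\star(Y),g(Y))]=0$ with a nonnegative integrand, so $d_\phi(g^\star(Y),g(Y))=0$ almost surely. Strict positivity of the divergence off the diagonal then gives $g(Y)=g^\star(Y)$ almost surely, which is uniqueness up to null sets under the law of $Y$.

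I expect the main obstacle to be the integrability bookkeeping: showing that $g^\star$ has finite risk, and that the tower-property step is legitimate when $\log g$ is unbounded. I will first try to handle this by truncating $g$ to $[\epsilon,M]$, applying the identity to the truncated predictor, and passing to the limit by monotone convergence separately on the positive and negative parts.
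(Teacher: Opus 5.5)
Your argument is essentially the paper's: it likewise uses the tower property to reduce the risk to $\mathbb E[g(Y)-a(Y)\log g(Y)]$ with $a=\mathbb E[R\mid Y]$, then the pointwise identity $[g-a\log g]-[a-a\log a]=a\bigl(\tfrac{g}{a}-1-\log\tfrac{g}{a}\bigr)$, which is exactly your $d_\phi(g^\star,g)$ for $\phi(u)=u\log u-u$; the paper omits all of the integrability bookkeeping you add. The one item in that bookkeeping that truncation cannot deliver is finiteness of $\mathcal L[g^\star]$: for example, if $Y=R$ with $\mathbb E[R\log R]=\infty$, then $g^\star=R$ has risk $-\infty$ and the infimum over finite-risk predictors is not attained, so the lemma tacitly needs an extra hypothesis such as $\mathbb E[g^\star(Y)\log g^\star(Y)]<\infty$, which is harmless in the paper's finite-state uses (for instance, whenever $R$ is bounded).
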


\begin{proof}
Set $a(Y):=\mathbb E[R\mid Y]>0$. By the tower property, the
predictor-dependent risk is $\mathbb E[g(Y)-a(Y)\log g(Y)]$. For fixed $a>0$,
$g-a\log g$ has derivative $1-a/g$ and second derivative $a/g^2>0$; hence its
unique minimizer is $g=a$. Moreover, with $\phi(u):=u-1-\log u$,
\begin{equation}
    \bigl[g-a\log g\bigr]-\bigl[a-a\log a\bigr]
    =a\phi\!\left(\frac{g}{a}\right).
    \label{eq:fkl-excess-risk}
\end{equation}
The right-hand side is nonnegative and vanishes only at $g=a$.
\end{proof}

\subsection{DGM posterior reweighting identity}

Let $p_1$ be the source terminal law, let $w>0$ satisfy
$0<\mathbb E_{p_1}[w(X_1)]<\infty$, and let
$q_1=\tiltop_w(p_1)$, i.e.,
$q_1(x_1)=p_1(x_1)w(x_1)/Z_w$ with $Z_w:=\mathbb E_{p_1}[w(X_1)]$.
The source and target use the same conditional path $p_{t\mid1}$. Writing
$p_t$ and $q_t$ for the corresponding noised marginals, all conditional
expectations below are under the source joint law
$p_1(x_1)p_{t\mid1}(x_t\mid x_1)$. Bayes' rule gives
\begin{equation*}
    \frac{q_t(x_t)}{p_t(x_t)}
    =\frac{\mathbb E[w(X_1)\mid X_t=x_t]}{Z_w}.
\end{equation*}
Hence, for any positive-probability context,
\begin{align}
    q_{1\mid t}(x_1\mid x_t)
    &=\frac{q_1(x_1)p_{t\mid1}(x_t\mid x_1)}{q_t(x_t)} \notag\\
    &=\frac{p_1(x_1)p_{t\mid1}(x_t\mid x_1)}{p_t(x_t)}
      \frac{w(x_1)p_t(x_t)}{Z_wq_t(x_t)} \notag\\
    &=p_{1\mid t}(x_1\mid x_t)
      \frac{w(x_1)}{\mathbb E[w(X_1)\mid X_t=x_t]}.
    \label{eq:full-posterior-reweighting}
\end{align}
Thus the unknown terminal normalizer $Z_w$ cancels inside the posterior.
Writing $x_1^{-d}$ for all endpoint coordinates except $d$, marginalization
over $x_1^{-d}$ gives
\begin{align}
    q_{1\mid t}^{d}(z\mid x_t)
    &=\sum_{x_1^{-d}}q_{1\mid t}(z,x_1^{-d}\mid x_t) \notag\\
    &=\frac{p_{1\mid t}^{d}(z\mid x_t)
      \mathbb E[w(X_1)\mid X_1^d=z,X_t=x_t]}
      {\mathbb E[w(X_1)\mid X_t=x_t]}.
    \notag
\end{align}
Finally, the law of total expectation gives
\begin{equation*}
    \mathbb E[w(X_1)\mid X_t=x_t]
    =\sum_{a\in\alphabet}p_{1\mid t}^{d}(a\mid x_t)
      \mathbb E[w(X_1)\mid X_1^d=a,X_t=x_t].
\end{equation*}
Substitution proves
\eqref{eq:dgm-identity}. This is the DGM posterior-guidance identity, restated
to expose the normalizer cancellation used at every IEDG stage.

\subsection{Posterior-exact bridge and same-base accumulation}

For Proposition~\ref{thm:stage-exactness}, assume that, for $\tau$-almost every
$t$ and $p_{k-1,t}$-almost every $x_t$, the source, fixed-base, and frozen
coordinate posteriors are strictly positive for every $d$ and
$z\in\alphabet$, and that the Bregman risks are finite. These conditions hold
at $t<1$ for the uniform-replacement path with full-support reference and
source.

\begin{proof}[Proof of Proposition~\ref{thm:stage-exactness}]
Fix a coordinate $d$ and condition on a training context
$(X_1^d,X_t,t)=(z,x_t,t)$. By Lemma~\ref{lem:conditional-bregman}, the unique
conditional-risk minimizer is the conditional mean of the sampled response:
\begin{align}
    G_{k,t}^{\star,d}(z,x_t)
    &=\mathbb{E}[R_{k,t}^{d}\mid X_1^d=z,X_t=x_t,t] \notag\\
    &=\frac{1}{\widehat c_k}A_{k-1,t}^{d}(z,x_t)
      \mathbb{E}[w_k(X_1)\mid X_1^d=z,X_t=x_t,t] \notag\\
    &=\frac{1}{\widehat c_k}A_{k-1,t}^{d}(z,x_t)h_{k,t}^{d}(z,x_t).
    \label{eq:bridge-proof-optimum}
\end{align}
Here the second equality uses that $A_{k-1,t}^{d}(X_1^d,X_t)$ is fixed by the
regression context. Under the posterior equality assumed in
Proposition~\ref{thm:stage-exactness},
\begin{equation*}
    p_{\mathrm{ref},1\mid t}^{d}(z\mid x_t)
    A_{k-1,t}^{d}(z,x_t)
    =p_{k-1,1\mid t}^{d}(z\mid x_t).
\end{equation*}
Multiplying \eqref{eq:bridge-proof-optimum} by the fixed-base posterior and
normalizing over categories therefore yields
\begin{align}
    \frac{
      p_{\mathrm{ref},1\mid t}^{d}(z\mid x_t)G_{k,t}^{\star,d}(z,x_t)
    }{
      \sum_{a\in\alphabet}p_{\mathrm{ref},1\mid t}^{d}(a\mid x_t)
      G_{k,t}^{\star,d}(a,x_t)
    }
    &=
    \frac{
      p_{k-1,1\mid t}^{d}(z\mid x_t)h_{k,t}^{d}(z,x_t)
    }{
      \sum_{a\in\alphabet}p_{k-1,1\mid t}^{d}(a\mid x_t)
      h_{k,t}^{d}(a,x_t)
    }
    =q_{k,1\mid t}^{d}(z\mid x_t),
    \label{eq:same-base-recovers-teacher}
\end{align}
where the last equality is the coordinate DGM identity
\eqref{eq:dgm-identity}. Lemma~\ref{lem:conditional-bregman} gives uniqueness
for $\tau(t)p_{k-1,t}(x_t)$-almost every training context. Replacing
$\widehat c_k$ by any $\widehat c_k'>0$ multiplies the population guide by the
common factor $\widehat c_k/\widehat c_k'$,
which cancels in \eqref{eq:same-base-recovers-teacher}.
\end{proof}

\subsection{ESS-controlled annealing geometry}
\label{app:ess-geometry-proof}

\begin{proof}[Proof of Proposition~\ref{prop:ess-geometry}]
Abbreviate $\mu=\mu_{k-1}$ and
$Z_\mu(\Delta)=\mathbb E_\mu[w_\Delta(X)]$. Since $w_\Delta>0$,
$\tiltop_{w_\Delta}(\mu)$ and $\mu$ have the same support and
\begin{equation*}
    \frac{\tiltop_{w_\Delta}(\mu)(x)}{\mu(x)}
    =\frac{w_\Delta(x)}{Z_\mu(\Delta)}.
\end{equation*}
By the definition
$D_2(P\Vert Q)=\log\sum_{x:Q(x)>0} P(x)^2/Q(x)$,
\begin{align}
    D_2(\tiltop_{w_\Delta}(\mu)\Vert\mu)
    &=\log\sum_x\mu(x)
      \left(\frac{w_\Delta(x)}{Z_\mu(\Delta)}\right)^2 \notag\\
    &=\log\frac{\mathbb E_\mu[w_\Delta(X)^2]}
      {\mathbb E_\mu[w_\Delta(X)]^2}
     =-\log \ress_k(\Delta).
     \notag
\end{align}
This proves \eqref{eq:ess-renyi-identity}. Convexity of
$\Psi_\mu(s):=\log\mathbb E_\mu[e^{-sE(X)}]$ also implies that
$-\log\ress_k(\Delta)=\Psi_\mu(2\Delta)-2\Psi_\mu(\Delta)$ is non-decreasing;
hence the rESS search in \eqref{eq:ess-path-rule} is well posed.
Equivalently, for
$\bar w_\Delta=w_\Delta/\mathbb E_\mu[w_\Delta]$,
$\operatorname{Var}_\mu[\bar w_\Delta]=\ress_k(\Delta)^{-1}-1$.

For the ideal path, let
$Z(\alpha):=\sum_x\refdist(x)e^{-\alpha E(x)}$ and write
$\Phi(\alpha)=\log Z(\alpha)$. Direct substitution gives
\begin{align}
    -\log \ress_\alpha(\Delta)
    &=\Phi(\alpha)+\Phi(\alpha+2\Delta)-2\Phi(\alpha+\Delta).
    \notag
\end{align}
Finiteness of $\state$ makes $\Phi$ analytic, and differentiating the log-partition
function gives $\Phi''(\alpha)=\operatorname{Var}_{\pi_\alpha}[E]$. Taylor expansion
around $\alpha$ therefore proves \eqref{eq:ess-local-expansion}.

Since
$\partial_\alpha\log\pi_\alpha(x)=-E(x)+\mathbb E_{\pi_\alpha}[E]$, the
Fisher information of the ideal family is
$\mathcal I(\alpha)=\operatorname{Var}_{\pi_\alpha}(E)$ and its thermodynamic line
element is $\dd\ell_{\mathrm{th}}=\sqrt{\mathcal I(\alpha)}\dd\alpha$. Maintaining
constant relative ESS therefore matches squared thermodynamic line elements through
second order in $\Delta$.
\end{proof}

\paragraph{Noised occupation.}
For the applied increment, abbreviate $\mu=\mu_{k-1}$,
$w=w_k$, and $\bar w=w/\mathbb E_\mu[w]$. Let $p_{k-1,t}$ and $q_{k,t}$
be the noised marginals of $\mu$ and $\tiltop_w(\mu)$ under their shared
conditional path. Bayes' rule gives
\begin{equation*}
    \omega_{k,t}(x_t)
    :=\frac{q_{k,t}(x_t)}{p_{k-1,t}(x_t)}
    =\mathbb E[\bar w(X_1)\mid X_t=x_t].
\end{equation*}
Conditional Jensen and \eqref{eq:ess-renyi-identity} then imply, for every
$t<1$,
\begin{equation}
    \mathbb E_{X_t\sim p_{k-1,t}}[\omega_{k,t}(X_t)^2]
    \leq\mathbb E_{X_1\sim\mu}[\bar w(X_1)^2]
    =\ress_k(\Delta\alpha_k)^{-1}.
    \label{eq:noised-occupation-second-moment}
\end{equation}
Thus the forward kernel contracts the order-$2$ R\'enyi change of measure; this
is the $L^2$ control used in Section~\ref{app:one-stage-calibration}.

\subsection{Positive-tilt perturbation}

The following elementary stability bound supports the multistage recursion and
the source-posterior-mismatch accounting; it is not an additional exactness claim.

\begin{lemma}[Stability of a positive tilt]
\label{lem:positive-tilt-stability}
Let $0<m\leq w(x)\leq M<\infty$. For any probability measures $P$ and $Q$
on $\state$, write $P(f):=\mathbb E_P[f(X)]$ and similarly for $Q(f)$, and let
$a\vee b:=\max\{a,b\}$. Then
\begin{align}
    \tv\!\left(\tiltop_w(P),\tiltop_w(Q)\right)
    &\leq\frac{M}{P(w)\vee Q(w)}\tv(P,Q)
    \notag
    \\
    &\leq\frac{M}{m}\tv(P,Q).
    \notag
\end{align}
\end{lemma}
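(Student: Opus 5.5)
The plan is a direct computation: write the difference of the two normalized tilts as a single signed measure and split it into two pieces. By symmetry of the total variation distance we may assume $P(w)\ge Q(w)$, so that $P(w)\vee Q(w)=P(w)$. For every $x$ we decompose
\[
\tiltop_w(P)(x)-\tiltop_w(Q)(x)
=\frac{w(x)\bigl(P(x)-Q(x)\bigr)}{P(w)}
+w(x)Q(x)\Bigl(\frac{1}{P(w)}-\frac{1}{Q(w)}\Bigr).
\]
This is the first-step choice. Normalizing by the larger of the two partition functions is what produces the denominator $P(w)\vee Q(w)$ in the first bound.

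We then sum absolute values over $x$ and use $2\tv=\sum_x|\cdot|$. The first term contributes at most $M\sum_x|P(x)-Q(x)|/P(w)=2M\tv(P,Q)/P(w)$. The second term contributes
\[
Q(w)\cdot\frac{|Q(w)-P(w)|}{P(w)Q(w)}=\frac{|P(w)-Q(w)|}{P(w)}.
\]
To bound the numerator, note that $|P(w)-Q(w)|=|\sum_x w(x)(P(x)-Q(x))|$. Since $\sum_x(P-Q)=0$, we may subtract the constant $m$ (or $M/2$) from $w$ first. The range of $w$ is contained in $[m,M]$, so this gives $|P(w)-Q(w)|\le (M-m)\tv(P,Q)\le M\tv(P,Q)$.

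Combining the two pieces yields
\[
2\tv\bigl(\tiltop_w(P),\tiltop_w(Q)\bigr)\le \frac{2M+(M-m)}{P(w)}\tv(P,Q).
\]
This is slightly worse than the claimed constant, so the naive split loses a factor. This constant bookkeeping is the step I expect to be the main obstacle.

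To get exactly $M/(P(w)\vee Q(w))$, I would instead argue via the supremum over sets. For any $A\subseteq\state$ we have
\[
\tiltop_w(P)(A)-\tiltop_w(Q)(A)=\frac{P(w\1_A)}{P(w)}-\frac{Q(w\1_A)}{Q(w)}.
\]
Let $f=w\1_A/M\in[0,1]$ and $g=w\1_{A^c}/M\in[0,1]$. The ratio $P(f)/(P(f)+P(g))$ is increasing in $P(f)$ and decreasing in $P(g)$. Moving from $Q$ to $P$ changes each of $P(f)$ and $P(g)$ by at most $\tv(P,Q)$, and because the two changes come from disjoint sets, their combined signed effect is at most $\tv(P,Q)$. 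A one-variable calculus bound on $u/(u+v)$ then gives a change of at most $\tv(P,Q)\cdot M/(P(w)\vee Q(w))$; the denominator arises because we may evaluate the bound from whichever side has the larger partition function.

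If the sharp constant resists this approach, I would settle for the split-based estimate and note that the second inequality $\le (M/m)\tv$ still follows immediately, since $P(w)\vee Q(w)\ge m$. It is only this second, cruder form that matters for propagating errors with bounded $\Lambda_k$ in Theorem~\ref{thm:posterior-terminal-stability}.
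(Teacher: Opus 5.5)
Your first route is correct but proves a weaker bound, $\tv(\tiltop_w(P),\tiltop_w(Q))\le\frac{3M-m}{2(P(w)\vee Q(w))}\tv(P,Q)$. Bounding the two pieces of your decomposition separately in $\ell^1$ throws away the cancellation between them, so the lemma's constant is not reached.

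Your second route is the right one, but it leaves the decisive step unproved, and the intermediate claim you state for it is wrong as written. Your monotonicity remark (raise $P(f)$, lower $P(g)$) suggests reading the ``combined signed effect'' as $\Delta u-\Delta v$ or as $|\Delta u|+|\Delta v|$. Under that reading it can equal $2\tv(P,Q)$: take $w\equiv M$ and $A=\{x:P(x)>Q(x)\}$, so that $\Delta u=-\Delta v=\tv(P,Q)$.

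What you need is an exact identity. Set $u_P=P(w\1_A)/M$, $v_P=P(w\1_{A^c})/M$ and $S_P=u_P+v_P=P(w)/M$, define the same quantities for $Q$, and let $\Delta u=u_P-u_Q$, $\Delta v=v_P-v_Q$. Then
\[
\tiltop_w(P)(A)-\tiltop_w(Q)(A)=\frac{u_Pv_Q-u_Qv_P}{S_PS_Q}=\frac{v_Q\Delta u-u_Q\Delta v}{S_PS_Q}.
\]
The key observation is that this numerator equals $(P-Q)(h)$ for the \emph{single} function $h=(v_Q\1_A-u_Q\1_{A^c})\,w/M$. Its values lie in $[-u_Q,v_Q]$, so $\osc(h)\le S_Q$ and $|(P-Q)(h)|\le S_Q\tv(P,Q)$. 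This gives the bound $M\tv(P,Q)/P(w)$. Rewriting the numerator as $v_P\Delta u-u_P\Delta v$ gives $M\tv(P,Q)/Q(w)$, and together these yield the sharp constant.

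This is exactly the paper's proof. For $0\le f\le1$ and $\bar f:=\tiltop_w(Q)(f)$, one has $Q(w(f-\bar f))=0$. Hence $\tiltop_w(P)(f)-\tiltop_w(Q)(f)=(P-Q)(w(f-\bar f))/P(w)$, with $\osc(w(f-\bar f))\le M$. The missing idea is centering by $\bar f$ before applying TV duality.

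Your fallback does not rescue the statement either. From the split estimate and $P(w)\vee Q(w)\ge m$ you only get $\frac{3M-m}{2m}\tv(P,Q)$. This is strictly larger than $\frac{M}{m}\tv(P,Q)$ whenever $M>m$, so even the second inequality is not established.

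It is also not true that only the crude form is used downstream. In Theorem~\ref{thm:posterior-terminal-stability}, $\Lambda_k=\max_xw_k(x)/(\mathbb E_{\mu_{k-1}}[w_k(X)]\vee\mathbb E_{\pi_{k-1}}[w_k(X)])$. That is exactly the normalizer-aware first inequality with $P=\mu_{k-1}$ and $Q=\pi_{k-1}$. The constant $\mathsf S_{k,t}^d$ in Proposition~\ref{prop:source-posterior-stability} uses the same form on the categorical alphabet.
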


\begin{proof}
For $0\leq f\leq1$, set $\bar f=\tiltop_w(Q)(f)$. Since
$Q(w(f-\bar f))=0$ and $\osc(w(f-\bar f))\leq M$, TV duality gives
\begin{equation*}
    |\tiltop_w(P)(f)-\tiltop_w(Q)(f)|
    =\frac{|(P-Q)(w(f-\bar f))|}{P(w)}
    \leq\frac{M}{P(w)}\tv(P,Q).
\end{equation*}
Taking the supremum, interchanging $P,Q$, and using
$P(w)\vee Q(w)\geq m$ proves both inequalities.
\end{proof}

At stage $k$, source states follow $\mu_{k-1}$. Let
$p_{k-1,1\mid t}^{d}$ be their exact coordinate posterior and
$\widehat p_{k-1,1\mid t}^{d}$ the frozen posterior used by the bridge. Repeating
the conditional-mean calculation with this general frozen teacher shows that
$G_k^\star$ in \eqref{eq:bridge-population-optimum} remains the population
minimizer induced by the frozen posterior. It gives
\begin{equation}
    \bar q_{k,1\mid t}^{d}(z\mid x_t)
    =\frac{
      \widehat p_{k-1,1\mid t}^{d}(z\mid x_t)h_{k,t}^{d}(z,x_t)
    }{
      \sum_{a\in\alphabet}
      \widehat p_{k-1,1\mid t}^{d}(a\mid x_t)h_{k,t}^{d}(a,x_t)
    }.
    \label{eq:operational-stage-posterior}
\end{equation}

\begin{proposition}[Frozen-teacher posterior stability]
\label{prop:source-posterior-stability}
For a fixed positive-probability context $(d,x_t,t)$, define
\begin{equation*}
    \mathsf S_{k,t}^{d}(x_t)
    :=\frac{\max_{z\in\alphabet}h_{k,t}^{d}(z,x_t)}{
      \left[\sum_{z\in\alphabet}
        p_{k-1,1\mid t}^{d}(z\mid x_t)h_{k,t}^{d}(z,x_t)\right]
      \vee
      \left[\sum_{z\in\alphabet}
        \widehat p_{k-1,1\mid t}^{d}(z\mid x_t)
        h_{k,t}^{d}(z,x_t)\right]}.
\end{equation*}
Then
\begin{align}
    \tv\!\left(
      \bar q_{k,1\mid t}^{d}(\cdot\mid x_t),
      q_{k,1\mid t}^{d}(\cdot\mid x_t)
    \right)
    &\leq\mathsf S_{k,t}^{d}(x_t)
    \tv\!\left(
      \widehat p_{k-1,1\mid t}^{d}(\cdot\mid x_t),
      p_{k-1,1\mid t}^{d}(\cdot\mid x_t)
    \right).
    \notag
\end{align}
\end{proposition}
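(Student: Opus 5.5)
The plan is to recognise this as a direct instance of Lemma~\ref{lem:positive-tilt-stability}, applied on the finite alphabet $\alphabet$ rather than on $\state$. Fix the context $(d,x_t,t)$ and write $P:=\widehat p_{k-1,1\mid t}^{d}(\cdot\mid x_t)$, $Q:=p_{k-1,1\mid t}^{d}(\cdot\mid x_t)$, both probability measures on $\alphabet$. Take as weight the function $z\mapsto h(z):=h_{k,t}^{d}(z,x_t)$.

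First I will check that $h$ is a valid positive weight. Since $w_k=e^{-\Delta\alpha_kE}>0$ on the finite space $\state$, the conditional expectation $h_{k,t}^{d}(z,x_t)=\mathbb E_{\mu_{k-1}}[w_k(X_1)\mid X_1^d=z,X_t=x_t]$ lies in $[\min w_k,\max w_k]\subset(0,\infty)$. This holds for every $z$ with positive conditioning probability, which the full-support assumption of Appendix~\ref{app:proofs} guarantees. Hence we may set $m=\min_z h(z)>0$ and $M=\max_z h(z)$. By \eqref{eq:operational-stage-posterior}, $\bar q_{k,1\mid t}^{d}(\cdot\mid x_t)=\tiltop_h(P)$. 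By the coordinate DGM identity \eqref{eq:stage-teacher-posterior}, $q_{k,1\mid t}^{d}(\cdot\mid x_t)=\tiltop_h(Q)$.

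Next I will invoke the first inequality of Lemma~\ref{lem:positive-tilt-stability}. It gives
\[
\tv(\tiltop_h(P),\tiltop_h(Q))\leq \frac{M}{P(h)\vee Q(h)}\tv(P,Q).
\]
Here $P(h)=\sum_z \widehat p_{k-1,1\mid t}^{d}(z\mid x_t)h(z)$ and $Q(h)=\sum_z p_{k-1,1\mid t}^{d}(z\mid x_t)h(z)$. The prefactor is therefore exactly $\mathsf S_{k,t}^{d}(x_t)$, which yields the claim.

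The only delicate point is confirming that the lemma's proof, a TV-duality bound over functions $0\le f\le 1$ with $\osc(h(f-\bar f))\le M$, uses nothing specific to $\state$. It carries over verbatim to any finite space, including $\alphabet$. I will also check that $h$ is well defined at every category $z$ under the support conditions. Categories with zero conditional probability under the source would otherwise leave $h$ undefined, and the positivity assumption is what removes this case.
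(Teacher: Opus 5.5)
Your proposal is correct and matches the paper's proof. Both write $\bar q_{k,1\mid t}^{d}(\cdot\mid x_t)$ and $q_{k,1\mid t}^{d}(\cdot\mid x_t)$ as normalized tilts of $\widehat p_{k-1,1\mid t}^{d}(\cdot\mid x_t)$ and $p_{k-1,1\mid t}^{d}(\cdot\mid x_t)$ by the common weight $h_{k,t}^{d}(\cdot,x_t)$ on the alphabet, then apply the first inequality of Lemma~\ref{lem:positive-tilt-stability}, whose prefactor is exactly $\mathsf S_{k,t}^{d}(x_t)$; your added checks (the lemma's argument holds on any finite space, and the support conditions make $h_{k,t}^{d}$ well defined and positive at every category) are details the paper leaves implicit.
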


\begin{proof}[Proof of Proposition~\ref{prop:source-posterior-stability}]
Fix $(d,x_t,t)$ and abbreviate
$p=p_{k-1,1\mid t}^{d}(\cdot\mid x_t)$,
$\widehat p=\widehat p_{k-1,1\mid t}^{d}(\cdot\mid x_t)$, and
$h=h_{k,t}^{d}(\cdot,x_t)$. Equation~\plaineqref{eq:operational-stage-posterior}
becomes $\bar q(z)=\widehat p(z)h(z)/\sum_a\widehat p(a)h(a)$, whereas
\eqref{eq:dgm-identity} gives
\begin{equation*}
    q_{k,1\mid t}^{d}(z\mid x_t)
    =\frac{p(z)h(z)}{\sum_{a}p(a)h(a)}.
\end{equation*}
Lemma~\ref{lem:positive-tilt-stability}, applied on the categorical alphabet with
weight $h$, gives
\begin{equation*}
    \tv(\bar q_{k,1\mid t}^{d},q_{k,1\mid t}^{d})
    \leq\frac{\max_a h(a)}{p(h)\vee\widehat p(h)}\tv(\widehat p,p)
    =\mathsf S_{k,t}^{d}(x_t)\tv(\widehat p,p).
\end{equation*}
\end{proof}

\subsection{One-stage Bregman-to-sampling bound}
\label{app:one-stage-calibration}

All risks below are population risks under $\mu_{k-1}$; the finite source
buffer determines the fitted guide but is not separately decomposed.
Fix $T<1$ and write $\tau(t)$ for the training-time density. We assume
$\tau(t)>0$ almost everywhere on $[0,T]$; it may also place mass at later
times. Let $p_{k-1,t}$ and $q_{k,t}$ denote the noised marginals of
$\mu_{k-1}$ and $\widetilde\pi_k$, respectively. We first control fitting to
the posterior induced by the frozen posterior; source-posterior mismatch is added before passing from
posterior error to generator error.

Throughout this subsection, $q_{k,t}$ is the exact noised stage marginal,
$\bar q_{k,1\mid t}^{d}$ the posterior induced by the frozen posterior,
$q_{\psi_k,1\mid t}^{d}$ the learned posterior, and $Q_{k,t}$ and
$\widehat Q_{k,t}$ the exact and learned reverse generators, respectively.

For a training context $y=(d,x_t,t)$, define
\begin{align}
    Z_{k,t}^{d}(x_t)
    &:=\sum_{a\in\alphabet}
      p_{\mathrm{ref},1\mid t}^{d}(a\mid x_t)
      G_{k,t}^{\star,d}(a,x_t),
    \notag\\
    C_{k,t}^{d}(x_t)
    &:=\max_{z\in\alphabet}
      \frac{
        p_{\mathrm{ref},1\mid t}^{d}(z\mid x_t)
      }{
        p_{k-1,1\mid t}^{d}(z\mid x_t)Z_{k,t}^{d}(x_t)
      },
    \notag\\
    \mathcal C_{k,T}
    &:=\operatorname*{ess\,sup}_{
      \substack{0\leq t\leq T,\,x_t,\,d}}
      C_{k,t}^{d}(x_t).
    \notag
\end{align}
where the essential supremum is taken under the training-context law on
$[0,T]$. The posterior-support conditions above make these ratios well
defined. With
\begin{equation*}
    \mathcal A_{\tau,T}^2
    :=\int_0^T\frac{a_t^2}{\tau(t)}\dd t,
    \qquad
    \mathsf K_{k,T}:=D\mathcal A_{\tau,T}\sqrt{2\mathcal C_{k,T}},
\end{equation*}
assume $\mathcal A_{\tau,T},\mathcal C_{k,T}<\infty$. These are analysis
constants, not algorithmic hyperparameters. Unlike the corresponding integral
through $t=1$, $\mathcal A_{\tau,T}$ can be finite because the endpoint rate
singularity is excluded. The product
$\mathsf K_{k,T}\sqrt{\Delta\mathcal L_k}$ is invariant under the common guide
rescaling induced by any $\widehat c_k>0$ in \eqref{eq:bridge-target}.

Let $\mathbb E_{\mathrm{tr},k}$ average over $t\sim\tau$,
$X_t\sim p_{k-1,t}$, and a uniformly chosen coordinate $d$.

\begin{proof}[Proof of Proposition~\ref{prop:bregman-posterior-calibration}]
Fix a training context $y=(d,x_t,t)$ and use the quantities above. Set
$G^\star=G_{k,t}^{\star,d}(\cdot,x_t)$,
$G=G_{\psi_k,t}^{d}(\cdot,x_t)$, and $u(z)=G(z)/G^\star(z)$. By
\eqref{eq:fkl-excess-risk}, the conditional contribution to population excess risk is
\begin{equation*}
    \Delta_y
    =\sum_{z\in\alphabet}
      p_{k-1,1\mid t}^{d}(z\mid x_t)
      G^\star(z)\phi(u(z)),
\end{equation*}
Moreover,
\begin{equation*}
    \bar q(z)=
    \frac{p_{\mathrm{ref},1\mid t}^{d}(z\mid x_t)G^\star(z)}{Z^\star},
    \qquad
    q_{\psi_k}(z)
    =\frac{\bar q(z)u(z)}{\mathbb E_{\bar q}[u]},
\end{equation*}
where $Z^\star=Z_{k,t}^{d}(x_t)$. Equation~\plaineqref{eq:operational-stage-posterior}
identifies $\bar q$ as the posterior induced by $G_k^\star$ and the frozen
posterior. Therefore
\begin{align}
    \kld(\bar q\Vert q_{\psi_k})
    &=\log\mathbb E_{\bar q}[u]-\mathbb E_{\bar q}[\log u] \notag\\
    &\leq\mathbb E_{\bar q}[u-1-\log u] \notag\\
    &=\frac{1}{Z^\star}\sum_{z}
      p_{\mathrm{ref},1\mid t}^{d}(z\mid x_t)
      G^\star(z)\phi(u(z)) \notag\\
    &\leq
      \left[
        \max_z
        \frac{p_{\mathrm{ref},1\mid t}^{d}(z\mid x_t)}{
          p_{k-1,1\mid t}^{d}(z\mid x_t)Z^\star}
      \right]\Delta_y,
    \notag
\end{align}
where the first inequality is $\log v\leq v-1$. Pinsker's inequality consequently
gives
\begin{equation}
    \tv(\bar q,q_{\psi_k})
    \leq
    \left[
      \frac{\Delta_y}{2}
      \max_z
      \frac{p_{\mathrm{ref},1\mid t}^{d}(z\mid x_t)}{
        p_{k-1,1\mid t}^{d}(z\mid x_t)Z^\star}
    \right]^{1/2}.
    \label{eq:conditional-bregman-tv-calibration}
\end{equation}

Under the training-context law defined above, conditioning the Bregman excess risk first
on $(d,X_t,t)$ shows that
\begin{equation}
    \Delta\mathcal L_k
    =\mathbb E_{\mathrm{tr},k}[\Delta_y].
    \label{eq:excess-as-conditional-average}
\end{equation}
Changing from the training-context measure to the rate-weighted exact occupation
measure, applying \eqref{eq:conditional-bregman-tv-calibration}, and then using
Cauchy--Schwarz yields, with
$\omega_{k,t}=q_{k,t}/p_{k-1,t}$,
\begin{align}
    &\int_0^T a_t\,
      \mathbb E_{X_t\sim q_{k,t}}
      \left[\sum_{d=1}^D
      \tv\!\left(\bar q_{k,1\mid t}^{d},q_{\psi_k,1\mid t}^{d}\right)
      \right]\dd t \notag\\
    &\quad\leq
      \left\{
        \mathbb E_{\mathrm{tr},k}\!\left[
          \indc{t\leq T}
          \left(
            \frac{D a_t q_{k,t}(X_t)}
            {\tau(t)p_{k-1,t}(X_t)}
          \right)^2 C_{k,t}^{d}(X_t)
        \right]
      \right\}^{1/2}
      \sqrt{\frac{\Delta\mathcal L_k}{2}} \notag\\
    &\quad\leq
      D\left\{\mathcal C_{k,T}
      \int_0^T\frac{a_t^2}{\tau(t)}
      \mathbb E_{p_{k-1,t}}[\omega_{k,t}^2]\dd t\right\}^{1/2}
      \sqrt{\frac{\Delta\mathcal L_k}{2}} \notag\\
    &\quad\leq
      \frac{D\mathcal A_{\tau,T}\sqrt{\mathcal C_{k,T}}}
      {\sqrt{\ress_k(\Delta\alpha_k)}}
      \sqrt{\frac{\Delta\mathcal L_k}{2}}.
    \label{eq:rate-weighted-fit-bound}
\end{align}
The second inequality expands the training expectation and bounds
$C_{k,t}^{d}\leq\mathcal C_{k,T}$; the last uses
\eqref{eq:noised-occupation-second-moment}. All posterior arguments are
$(\cdot\mid X_t)$. The indicator restricts the first Cauchy--Schwarz factor
to $[0,T]$, while its second factor is bounded by the full nonnegative excess
risk in \eqref{eq:excess-as-conditional-average}. Thus
\eqref{eq:rate-weighted-fit-bound} controls frozen-posterior-to-learned posterior
error without requiring the frozen posterior to equal the posterior induced by
the current source.

\paragraph{From posterior error to finite-time sampling error.}
Let $\mu_{k,t}^{\mathrm{ct}}$ be the law generated by $\widehat Q_{k,t}$; the
law generated by $Q_{k,t}$ is $q_{k,t}$. Both start from
$q_{k,0}=\mu_{k,0}^{\mathrm{ct}}=\refdist$. For a zero-mass signed row $\xi$,
write $\|\xi\|_{\mathrm{TV}}:=\frac12\sum_y|\xi(y)|$. For every $T<1$, Duhamel's formula
and contraction of total variation under a Markov operator give
\begin{equation}
    \tv(q_{k,T},\mu_{k,T}^{\mathrm{ct}})
    \leq\int_0^T
      \mathbb E_{X_t\sim q_{k,t}}
      \left[
        \left\|
          (Q_{k,t}-\widehat Q_{k,t})(X_t,\cdot)
        \right\|_{\mathrm{TV}}
      \right]\dd t.
    \label{eq:ctmc-duhamel-tv}
\end{equation}
For any current state $x$, the diagonal generator entry is minus the total exit
rate. Hence the TV norm of the signed row difference is at most the sum of the
absolute off-diagonal rate differences. Using the exact and learned
posterior-marginal rates,
\begin{align}
    \left\|(Q_{k,t}-\widehat Q_{k,t})(x,\cdot)\right\|_{\mathrm{TV}}
    &\leq\sum_{d=1}^D\sum_{z\neq x^d}
      a_t\left|q_{k,1\mid t}^{d}(z\mid x)
      -q_{\psi_k,1\mid t}^{d}(z\mid x)\right| \notag\\
    &\leq2a_t\sum_{d=1}^D
      \tv\!\left(q_{k,1\mid t}^{d},q_{\psi_k,1\mid t}^{d}\right).
    \label{eq:generator-posterior-tv}
\end{align}
For each context, the posterior triangle inequality and
Proposition~\ref{prop:source-posterior-stability} give
\begin{align}
    \tv(q_{k,1\mid t}^{d},q_{\psi_k,1\mid t}^{d})
    &\leq
      \mathsf S_{k,t}^{d}(x_t)
      \tv(p_{k-1,1\mid t}^{d},\widehat p_{k-1,1\mid t}^{d})
      +\tv(\bar q_{k,1\mid t}^{d},q_{\psi_k,1\mid t}^{d}).
    \notag
\end{align}
Define the corresponding rate-weighted source-posterior residual by
\begin{equation}
    \delta_{k,T}^{\mathrm{src}}
    :=
    \int_0^T a_t\,
      \mathbb E_{X_t\sim q_{k,t}}
      \left[
        \sum_{d=1}^{D}\mathsf S_{k,t}^{d}(X_t)
        \tv\!\left(
          \widehat p_{k-1,1\mid t}^{d}(\cdot\mid X_t),
          p_{k-1,1\mid t}^{d}(\cdot\mid X_t)
        \right)
      \right]\dd t.
    \label{eq:source-residual-definition}
\end{equation}
Combining
\eqref{eq:ctmc-duhamel-tv}, \eqref{eq:generator-posterior-tv}, and
\eqref{eq:rate-weighted-fit-bound} yields
\begin{align}
    \tv(\mu_{k,T}^{\mathrm{ct}},q_{k,T})
    &\leq
    \frac{D\mathcal A_{\tau,T}\sqrt{2\mathcal C_{k,T}}}
    {\sqrt{\ress_k(\Delta\alpha_k)}}
    \sqrt{\Delta\mathcal L_k}
    +2\delta_{k,T}^{\mathrm{src}} \notag\\
    &=\frac{\mathsf K_{k,T}}
    {\sqrt{\ress_k(\Delta\alpha_k)}}
    \sqrt{\Delta\mathcal L_k}
    +2\delta_{k,T}^{\mathrm{src}},
    \label{eq:finite-time-practical-bound}
\end{align}

\paragraph{Posterior, simulation, and truncation terms.}
\label{app:practical-residuals}
Let
$\mu_k^{\mathrm{ct}}:=\lim_{t\uparrow1}\mu_{k,t}^{\mathrm{ct}}$ denote the
terminal law of the canonical $\gamma=1$ learned continuous-time process, and define
\begin{equation}
\begin{aligned}
    r_{k,T}^{\mathrm{pr}}
    :={}&2\delta_{k,T}^{\mathrm{src}}
      +\tv(\mu_k,\mu_k^{\mathrm{ct}})
      +\tv(\mu_k^{\mathrm{ct}},\mu_{k,T}^{\mathrm{ct}})\\
      &+\tv(q_{k,T},\widetilde\pi_k).
\end{aligned}
    \label{eq:practical-residual-definition}
\end{equation}
Thus $r_{k,T}^{\mathrm{pr}}$ contains the rate-weighted frozen-posterior
mismatch and three TV terms: numerical implementation error relative to the
$\gamma=1$ learned continuous-time process, truncation of that process at
$T$, and truncation of the exact stage process at $T$. It excludes the
Bregman fitting contribution. The last term is explicit under uniform
replacement:
\begin{equation*}
    \tv(q_{k,T},\widetilde\pi_k)
    \leq 1-\kappa_T^D
    \leq D(1-\kappa_T).
\end{equation*}
Indeed, the forward coupling selects the identity branch for every coordinate
with probability $\kappa_T^D$. The learned tail can also be bounded when the guide has finite
dynamic range. If
\begin{equation*}
    \rho_{k,T}:=
    \operatorname*{ess\,sup}_{\substack{T\leq t<1,\,x_t,\,d\\z,z'\in\alphabet}}
    \frac{G_{\psi_k,t}^{d}(z,x_t)}{G_{\psi_k,t}^{d}(z',x_t)}<\infty,
\end{equation*}
then the learned posterior assigns at most
$\rho_{k,T}(1-\kappa_t)/\kappa_t$ mass away from the current category. Its
total exit rate on $[T,1)$ is therefore at most
$D\rho_{k,T}\dot\kappa_t/\kappa_t$, which gives
\begin{equation*}
    \tv(\mu_k^{\mathrm{ct}},\mu_{k,T}^{\mathrm{ct}})
    \leq \min\!\left\{1,D\rho_{k,T}\log(1/\kappa_T)\right\}.
\end{equation*}
On the fixed interval $[0,T]$, standard finite-state CTMC consistency makes
$\tv(\mu_k,\mu_k^{\mathrm{ct}})$ vanish under simulation refinement when the
learned rates satisfy boundedness and time regularity
\citep{campbell2022continuous}. The triangle inequality gives
\begin{align}
    \tv(\mu_k,\widetilde\pi_k)
    &\leq
      \tv(\mu_k,\mu_k^{\mathrm{ct}})
      +\tv(\mu_k^{\mathrm{ct}},\mu_{k,T}^{\mathrm{ct}})
      +\tv(\mu_{k,T}^{\mathrm{ct}},q_{k,T})
      +\tv(q_{k,T},\widetilde\pi_k) \notag\\
    &\leq
    \frac{\mathsf K_{k,T}}
    {\sqrt{\ress_k(\Delta\alpha_k)}}
    \sqrt{\Delta\mathcal L_k}
    +r_{k,T}^{\mathrm{pr}}.
    \notag
\end{align}
The second line uses \eqref{eq:finite-time-practical-bound} and proves
\eqref{eq:one-stage-practical-bound}.
\end{proof}

\subsection{Multistage propagation and exact recovery}
\label{app:multistage-propagation}

\begin{proof}[Proof of the multistage claim in
Theorem~\ref{thm:posterior-terminal-stability}]
Let
$e_k:=\tv(\mu_k,\pi_k)$.
Proposition~\ref{prop:bregman-posterior-calibration} and the definition of
$r_{k,T}^{\mathrm{pr}}$ give
\begin{equation}
    \tv(\mu_k,\widetilde\pi_k)
    \leq \zeta_{k,T}
    :=\frac{\mathsf K_{k,T}}{\sqrt{\eta_k}}
      \sqrt{\Delta\mathcal L_k}+r_{k,T}^{\mathrm{pr}}.
    \label{eq:proof-local-stage-error}
\end{equation}
The triangle inequality and Lemma~\ref{lem:positive-tilt-stability} then give
\begin{align}
    e_k
    &\leq
      \tv\!\left(\mu_k,\tiltop_{w_k}(\mu_{k-1})\right)
      +\tv\!\left(
        \tiltop_{w_k}(\mu_{k-1}),
        \tiltop_{w_k}(\pi_{k-1})
      \right) \notag\\
    &\leq
      \zeta_{k,T}
      +\Lambda_k e_{k-1}.
    \label{eq:proof-stagewise-recursion}
\end{align}
Equations~\eqref{eq:proof-local-stage-error} and
\eqref{eq:proof-stagewise-recursion} prove
\eqref{eq:recursive-error-bound}.
\end{proof}

\paragraph{Distribution-free envelope.}
Write
$D_\infty(P\Vert Q):=\log\max_{x:Q(x)>0}P(x)/Q(x)$ and
$\osc(E):=\max_xE(x)-\min_xE(x)$.
Iterating the recursion gives
$e_K\leq\sum_{j=1}^K\zeta_{j,T}\prod_{\ell=j+1}^K\Lambda_\ell$.
Since
$\Lambda_\ell\leq\max_x w_\ell(x)/\pi_{\ell-1}(w_\ell)
=\exp\{D_\infty(\pi_\ell\Vert\pi_{\ell-1})\}$, aligned energy minimizers and
telescoping normalizers along the shared-energy path imply the optional bound
\begin{equation*}
    e_K\leq\sum_{j=1}^K \zeta_{j,T}
    \exp\!\left\{D_\infty(\targetdist\Vert\pi_j)\right\},
    \qquad
    D_\infty(\targetdist\Vert\pi_j)
    \leq(1-\alpha_j)\osc(E).
\end{equation*}
This is a uniform envelope over all discrepancies with the same stagewise TV,
rather than a prediction of the realized terminal error. Attaining it would
require the signed discrepancy at each stage to concentrate on states with the
largest remaining normalized weight and to remain aligned with those
extremizers through every subsequent tilt. When the learned error is dispersed
across states, this adversarial alignment is absent, and the realized
amplification can be substantially smaller. We therefore use
\eqref{eq:recursive-error-bound}, with its stage-local normalizers
$\Lambda_k$, as the primary stability statement and retain the
$D_\infty$ expression only as a distribution-free fallback.

\begin{proof}[Proof of Corollary~\ref{cor:ideal-recovery}]
Proceed by induction. The base case is
$\mu_0=\pi_0=\refdist$, whose posterior is the analytic fixed base. Suppose
$\mu_{k-1}=\pi_{k-1}$ and the frozen guide induces the exact forward posterior
of this law on all rate-relevant times. By the assumed population optimality,
$G_{\psi_k}=G_k^\star$ on those times.
Proposition~\ref{thm:stage-exactness} then makes the learned and exact
posterior-marginal generators equal almost everywhere along the reverse path.
Since they share the initial law, their marginals agree for every $t<1$:
\begin{equation*}
    \mu_{k,t}^{\mathrm{ct}}=q_{k,t}.
\end{equation*}
Exact terminal simulation therefore gives
\begin{equation*}
    \mu_k
    =\mu_k^{\mathrm{ct}}
    =\lim_{t\uparrow1}q_{k,t}
    =\widetilde\pi_k
    =\tiltop_{w_k}(\mu_{k-1})
    =\tiltop_{w_k}(\pi_{k-1})
    =\pi_k.
\end{equation*}
Moreover, the population-optimal guide induces the exact posterior of $\pi_k$
under the shared forward kernel, so its frozen posterior satisfies the equality
required at stage $k+1$. The claim follows for all $k$, and
$\pi_K=\targetdist$ by
\eqref{eq:ideal-annealed-family}.
\end{proof}

\section{Practical Training Components}
\label{app:implementation}

We use two optimization aids on the large-lattice benchmarks: analytic local
preconditioning and conditional-reweight auxiliary supervision. Their roles
relative to exactness are different. Preconditioning is a positive
reparameterization and does not change the population optimum.
Conditional reweighting instead supplies an auxiliary signal;
for a learned source its implemented teacher is generally approximate and is not part of the exactness claim.

\subsection{Analytic local preconditioning}
\label{app:analytic-preconditioning}

For the uniform-replacement process, the analytic base posterior is given by
\eqref{eq:analytic-reference-posterior}. On the $16\times16$ lattice models, we
use local message passing to absorb the dominant short-range interactions before
learning. This is useful because the exact accumulated posterior can become
sharply concentrated even when the remaining correction is comparatively
smooth. Concretely, we factor the accumulated guide as
\begin{equation*}
    \log G_{\psi_k,t}^{i}(a,x_t)
    =\log G_{\mathrm{pre},k,t}^{i}(a,x_t)
     +\Delta_{\psi_k,t}^{i}(a,x_t),
\end{equation*}
and train the network to predict the residual $\Delta_{\psi_k}$. The fixed positive
field $G_{\mathrm{pre},k}$ is a guide, not itself a posterior: after
normalization, $p_{\mathrm{ref},1\mid t}^{i}G_{\mathrm{pre},k,t}^{i}$ locally
approximates the stage-$k$ posterior. Thus preconditioning changes the
parameterization, not the bridge objective or its population target.

For the lattice models, we index the same Boltzmann family directly by inverse
temperature, $\pi_\beta(x)\propto\exp\{-\beta H(x)\}$, instead of the generic
$\alpha$ notation; the Ising and Potts Hamiltonians are given in
Sections~\ref{sec:exp-ising4} and~\ref{sec:exp-lattices}. At a segment ending at inverse
temperature $\beta_k$, the preconditioner uses the tempered interaction
strength $\bar\beta_k:=\rho_{\mathrm{pre}}\beta_k$. Its auxiliary pairwise
posterior is
\begin{equation*}
    \widetilde p_{\bar\beta_k,1\mid t}(x_1\mid x_t)
    \propto
    \prod_i
      p_{\mathrm{ref},1\mid t}^{i}(x_1^i\mid x_t)e^{u_k(x_1^i)}
    \prod_{(i,j)\in\mathcal E}\Psi_k(x_1^i,x_1^j).
\end{equation*}
Here $\mathcal E$ is the periodic four-neighbor edge set. For Ising,
$s(a)=2a-1$, $u_k(a)=\bar\beta_khs(a)$, and
$\Psi_k(a,b)=e^{\bar\beta_kJs(a)s(b)}$; for the zero-field Potts model,
$u_k(a)=0$ and $\Psi_k(a,b)=e^{\bar\beta_kJ\indc{a=b}}$. Here $e^{u_k}$ and $\Psi_k$ 
are respectively the unary and pairwise factors of $\exp\{-\rho_{\mathrm{pre}}\beta_k H\}$. 
Thus $\rho_{\mathrm{pre}}$ multiplies every physical log-potential used by the fixed
BP approximation: $\rho_{\mathrm{pre}}=1$ uses the full endpoint Hamiltonian,
whereas $0<\rho_{\mathrm{pre}}<1$ supplies a softened local approximation.
This scale acts only inside the analytic preconditioner. It changes neither the
physical target $\pi_{\beta_k}$ nor the bridge increment or sampling guidance
strength; the learned residual is always trained against the full target.

We approximate the single-site marginals of this law using $M$ synchronous
cavity updates:
\begin{align*}
    b_{j\setminus i}^{(\ell)}(b)
    &:=\frac{
      p_{\mathrm{ref},1\mid t}^{j}(b\mid x_t)e^{u_k(b)}
      \prod_{r\in\mathcal N(j)\setminus\{i\}}m_{r\to j}^{(\ell-1)}(b)
    }{
      \sum_{b'\in\alphabet}
      p_{\mathrm{ref},1\mid t}^{j}(b'\mid x_t)e^{u_k(b')}
      \prod_{r\in\mathcal N(j)\setminus\{i\}}m_{r\to j}^{(\ell-1)}(b')}, \\
    m_{j\to i}^{(\ell)}(a)
    &:=\sum_{b\in\alphabet}
      b_{j\setminus i}^{(\ell)}(b)\Psi_k(a,b), \\
    \log G_{\mathrm{pre},k,t}^{i}(a,x_t)
    &:=u_k(a)+\sum_{j\in\mathcal N(i)}\log m_{j\to i}^{(M)}(a),
\end{align*}
where $\mathcal N(i)$ is the neighborhood of site $i$ and
$m_{j\to i}^{(0)}(a)=1$. The final line is the BP marginal divided by the
analytic base posterior, up to a category-independent scale. For Potts, the
message update simplifies to
\[
    m_{j\to i}^{(\ell)}(a)
    =1+\bigl(e^{\bar\beta_kJ}-1\bigr)b_{j\setminus i}^{(\ell)}(a).
\]
The canonical IEDG lattice runs use $M=2$ undamped updates. Because the
periodic lattice is loopy, the resulting local approximation can become overly
sharp as correlations strengthen. We therefore use
$\rho_{\mathrm{pre}}=1$, $0.5$, and $0.3$ in the disordered, near-critical,
and ordered regimes, respectively, leaving the learned residual to recover the
omitted interaction and loop-dependent structure. The near-critical scale is
examined in Appendix~\ref{app:component-ablations}.

\subsection{Conditional-reweight auxiliary supervision}
\label{app:conditional-reweight}

For a fixed frozen source-posterior teacher and stage normalizer
$\widehat c_k$, the sampled response in \eqref{eq:bridge-target} is an
unbiased one-sample target for the conditional mean induced by the frozen
posterior, but it can have high conditional variance. In the
near-critical regime, the practical loss is therefore
\begin{equation*}
    \mathcal L_k^{\mathrm{train}}[G_\psi]
    =
    \mathcal L_k[G_\psi]
    +\lambda_{\mathrm{CR}}\mathcal L_k^{\mathrm{CR}}[G_\psi].
\end{equation*}
Here $\mathcal L_k$ is the core bridge objective in
\eqref{eq:bridge-objective}, and $\lambda_{\mathrm{CR}}$ controls an auxiliary
category-wise loss:
\begin{equation*}
\begin{aligned}
    \mathcal L_k^{\mathrm{CR}}[G_\psi]
    :=
    \mathbb E_{(t,X_t)\sim\mathcal D_k^{\mathrm{CR}}}
    \bigg[
      \frac1D\sum_{d=1}^{D}\sum_{z\in\alphabet}
      &\widehat p_{\beta_s,1\mid t}^{\mathrm{CR},d}(z\mid X_t)\\[-2pt]
      {}\times&
      \ell_{\mathrm{DGM}}\!\left(
        G_{\psi,t}^{d}(z,X_t),
        \widehat G_{k,t}^{\mathrm{CR},d}(z,X_t)
      \right)
    \bigg].
\end{aligned}
\end{equation*}
The refreshed buffer $\mathcal D_k^{\mathrm{CR}}$ contains noised contexts from
the realized source buffer and the conditional estimates defined below. The
main experiments enable this term only in the near-critical Ising and Potts
$16\times16$ regimes, with $\lambda_{\mathrm{CR}}=1$.

For each context, the auxiliary accumulated-guide teacher is
\begin{equation*}
    \widehat G_{k,t}^{\mathrm{CR},d}(z,x_t)
    :=
    \frac{
      \widehat p_{\beta_s,1\mid t}^{\mathrm{CR},d}(z\mid x_t)
    }{
      p_{\mathrm{ref},1\mid t}^{d}(z\mid x_t)
    }
    \widehat h_{k,t}^{\mathrm{CR},d}(z,x_t).
\end{equation*}
The first factor estimates the source-posterior correction relative to the
analytic base; the second estimates the normalized incremental tilt. Given
$S$ post-burn-in Gibbs records pooled across chains and retained sweeps, the
implementation computes
\begin{align*}
    \widehat p_{\beta_s,1\mid t}^{\mathrm{CR},d}(z\mid x_t)
    &:=
    \frac1S\sum_{m=1}^{S}
    q_{\beta_s,t}^{d}
    \bigl(z\mid X_{1,-d}^{(m)},x_t\bigr),
    \\
    \widehat h_{k,t}^{\mathrm{CR},d}(z,x_t)
    &:=
    \frac{
      \sum_{m=1}^{S}
      q_{\beta_s,t}^{d}
      \bigl(z\mid X_{1,-d}^{(m)},x_t\bigr)
      e^{-\Delta\beta_k H(X_{1,-d}^{(m)},z)}/\widehat c_k
    }{
      \sum_{m=1}^{S}
      q_{\beta_s,t}^{d}
      \bigl(z\mid X_{1,-d}^{(m)},x_t\bigr)
    }.
\end{align*}
Here $(x_{1,-d},z)$ denotes the state obtained by setting coordinate $d$ to
$z$. The records target the posterior induced by the ideal physical source
$\pi_{\beta_s}(x)\propto e^{-\beta_sH(x)}$:
\begin{equation*}
    \varrho_{\beta_s,t}^{x_t}(x_1)
    \propto
    e^{-\beta_s H(x_1)}p_{t\mid1}(x_t\mid x_1),
\end{equation*}
where $\beta_s$ is the source inverse temperature and
$\Delta\beta_k$ is the current temperature increment. Because the forward path
factorizes across coordinates, write $p_{t\mid1}^{d}$ for its coordinate
factor. The exact single-site Gibbs conditional is
\begin{equation*}
    q_{\beta_s,t}^{d}(z\mid x_{1,-d},x_t)
    =
    \frac{
      p_{t\mid1}^{d}(x_t^d\mid z)
      e^{-\beta_s H(x_{1,-d},z)}
    }{
      \sum_{a\in\alphabet}
      p_{t\mid1}^{d}(x_t^d\mid a)
      e^{-\beta_s H(x_{1,-d},a)}
    }.
\end{equation*}

To see why these estimates have the desired meaning, let
$\mathbb E_{\varrho}$ denote expectation under
$\varrho_{\beta_s,t}^{x_t}$ and set
$\bar h_{k,t}^{d}:=h_{k,t}^{d}/\widehat c_k$. Let
$p_{\beta_s,1\mid t}^{d}$ denote the coordinate posterior induced by
$\pi_{\beta_s}$. The tower property gives
\begin{align*}
    p_{\beta_s,1\mid t}^{d}(z\mid x_t)
    &=\mathbb E_{\varrho}\!\left[
      q_{\beta_s,t}^{d}(z\mid X_{1,-d},x_t)
    \right], \\
    \bar h_{k,t}^{d}(z,x_t)
    &=\frac{
      \mathbb E_{\varrho}\!\left[
        q_{\beta_s,t}^{d}(z\mid X_{1,-d},x_t)
        e^{-\Delta\beta_k H(X_{1,-d},z)}/\widehat c_k
      \right]
    }{
      \mathbb E_{\varrho}\!\left[
        q_{\beta_s,t}^{d}(z\mid X_{1,-d},x_t)
      \right]
    }.
\end{align*}
Thus both empirical quantities Rao--Blackwellize each Gibbs record over its
single-site conditional. The second is a finite-sample ratio estimator and is
not generally unbiased. Moreover, it uses the idealized physical source
$\pi_{\beta_s}$ rather than the realized learned source $\mu_{k-1}$; it is
therefore auxiliary supervision, not part of the exact bridge claim.

\section{Experimental Protocol and Reproducibility}
\label{app:experiments}

This appendix provides the information needed to reproduce the three empirical
questions in Section~\ref{sec:experiments}. We first summarize benchmark
configurations and construct the reference distributions, then specify the reported metrics, training and
model-selection protocol, baseline provenance, and held-out reporting.
IEDG's algorithms and optional training components appear in
Appendices~\ref{app:algorithms} and~\ref{app:implementation}; controlled studies
and additional diagnostics are collected in
Appendix~\ref{app:additional-experiments}.

\subsection{Benchmarks and reference construction}
\label{app:canonical-eval-policy}

Table~\ref{tab:benchmark-configs} summarizes the targets, references, and
terminal sampling budgets; NFE denotes posterior-network evaluations per
sample. Exact enumeration supplies the small-Ising law,
independent SW pools provide large-lattice references, and certified solutions
normalize Max-Cut quality. The generated-sample and NFE columns describe our
local IEDG-family evaluations; released baselines use the native generation
protocols summarized in Table~\ref{tab:baseline-provenance}.

\begin{table}[H]
    \centering
    \caption{Benchmark and terminal-evaluation configurations. ``Reference'' denotes
    the distribution used to compute evaluation errors.}
    \label{tab:benchmark-configs}
    \small
    \setlength{\tabcolsep}{4.2pt}
    \begin{tabular}{lcccccc}
        \toprule
        Benchmark & Size & $|\alphabet|$ & Target parameter & Reference & Generated & NFE \\
        \midrule
        Ising-exact & $4\times4$ & 2 & $\beta=.28,.4407,.6$ & Enumeration & $2^{20}$ & 128/256/256 \\
        Ising       & $16\times16$ & 2 & $\beta=.28,.4407,.6$ & SW & 4096 & 256 \\
        Potts       & $16\times16$ & 3 & $\beta=.5,1.005,1.2$ & SW & 4096 & 256 \\
        BA Max-Cut  & $[20,32]$ & 2 & $\lambda_{\rm cut}=5$ & Certified optimum
                     & 512/graph & 128 \\
                    & $[40,64]$ & 2 & $\lambda_{\rm cut}=5$ & Certified optimum
                     & 512/graph & 128 \\
                    & $[100,128]$ & 2 & $\lambda_{\rm cut}=5$ & Certified optimum
                     & 512/graph & 128 \\
        \bottomrule
    \end{tabular}
\end{table}

\paragraph{Lattice conventions.}
\label{app:lattice-benchmarks}
The Ising and Potts targets are defined in
Sections~\ref{sec:exp-ising4} and~\ref{sec:exp-lattices}. Both use periodic
square lattices with each undirected nearest-neighbor edge counted once.
Ising $4\times4$ uses $(J,h)=(1,0.1)$, Ising $16\times16$ uses
$(J,h)=(1,0)$, and Potts $16\times16$ uses $q=3$ and $J=1$. The inverse
temperatures in Table~\ref{tab:benchmark-configs} cover disordered,
near-critical, and ordered regimes.

\paragraph{Lattice references.}
\label{app:reference-protocol}
For Ising $4\times4$, we enumerate all $2^{16}$ configurations, evaluate
\eqref{eq:ising-benchmark} in float64, and normalize by log-sum-exp. Independent
i.i.d. sample sets of size $2^{20}$ from this law define the finite-sample
floor. Exact-MC entries report the mean and sample standard deviation over five
such sets, generated with seeds 2701--2705.
We pre-specify the ordered-regime endpoint $\beta=0.6$ as the primary Ising
$4\times4$ condition and report the remaining regimes in
Appendix~\ref{app:supplemental-diagnostics}.
For the $16\times16$ targets, we follow the SW reference protocol used by MDNS
\citep{zhu2025mdns}. Each pool comprises 128 separately initialized chains. We
discard 8,192 cluster sweeps per Ising chain and 65,536 per Potts chain, then
retain states at intervals of 128 sweeps. In the disordered and ordered regimes,
validation and test pools retain 32 states per chain ($4,096$ total). At Ising
$\beta=0.4407$ and Potts $\beta=1.005$, both pools retain 512 states per chain
($65,536$ total). Validation (seed 1) and test (seed 0) are separate SW runs with
independent chain initializations. The Ising-exact NFE entries in
Table~\ref{tab:benchmark-configs} follow the listed regime order.

Table~\ref{tab:sw-reference-stability} quantifies near-critical Monte Carlo
variation by treating the two independent SW pools as the distributions being
compared. Increasing the pool from 4,096 to 65,536 states reduces every
seed-to-seed discrepancy, motivating the larger pools at these targets.

\begin{table}[H]
    \centering
    \caption{Near-critical SW reference stability (seed 0 versus seed 1;
    zero is ideal). Metrics use the same fixed definitions as model evaluation;
    dashes denote inapplicable phase statistics.}
    \label{tab:sw-reference-stability}
    \scriptsize
    \setlength{\tabcolsep}{3.2pt}
    \begin{tabular}{lrrrrrr}
        \toprule
        Target & States & Mag. & Corr. agg. & $x_\uparrow$ JS
          & Mode $\ell_1$ & CV JS \\
        \midrule
        Ising $\beta=.4407$ & 4,096  & .00416 & .190 & .0144 & -- & -- \\
                             & 65,536 & .00201 & .0573 & .00102 & -- & -- \\
        Potts $\beta=1.005$ & 4,096  & .0635 & .143 & -- & .0244 & .0555 \\
                             & 65,536 & .0280 & .0183 & -- & .00497 & .00379 \\
        \bottomrule
    \end{tabular}
\end{table}

\paragraph{BA Max-Cut.}
\label{app:maxcut-protocol}
For an undirected graph $G=(V,E)$ and assignment $x\in\{0,1\}^{|V|}$, let
\begin{equation*}
    C_G(x)=\sum_{\{i,j\}\in E}\mathbf 1\{x_i\ne x_j\}
\end{equation*}
be the cut size. We use the three node ranges in Table~\ref{tab:benchmark-configs},
following the BA setting reported in PDNS
\citep{guo2025pdnsv1}. Within each range,
node count is sampled uniformly and graphs are generated by the Barab\'asi--Albert
model with $m=4$, where each newly added vertex attaches to $m$ existing
vertices. A separate model is trained on 1024 graphs, selected on 32 validation
graphs, and evaluated on 32 independently generated test graphs.

Writing the optimization loss as $E_G(x):=-C_G(x)$, the finite-temperature
sampling target is
\begin{equation}
    \pi_\lambda(x\mid G)
    \propto e^{-\lambda E_G(x)}=e^{\lambda C_G(x)},
    \qquad
    \lambda\in[0,\lambda_{\rm cut}],\quad \lambda_{\rm cut}=5.
    \label{eq:maxcut-raw-energy}
\end{equation}
Thus increasing $\lambda$ concentrates probability on assignments with larger
cuts.
We compute $C_G^\star$ with a Gurobi mixed-integer linear program. Binary
$x_i$ encode the partition and binary $y_{ij}$ encode
$|x_i-x_j|$ using the four standard linear inequalities; the objective maximizes
$\sum_{\{i,j\}\in E}y_{ij}$. Solves use zero requested MIP gap, no time limit,
and otherwise default Gurobi settings. All training, validation, and test
instances terminate with status \texttt{OPTIMAL} and zero recorded MIP gap;
certified test optima are used only for evaluation.

\subsection{Evaluation metrics}

The metrics follow the three empirical questions in the main text:
distribution-level recovery, lattice statistics and phase coverage, and
best-of-budget versus average Max-Cut quality.

\begin{table}[H]
    \centering
    \caption{Metric map. Main-text columns are the primary evidence for each
    benchmark; Appendix~\ref{app:supplemental-diagnostics} reports the listed
    complementary distributional diagnostics.}
    \label{tab:metric-map}
    \small
    \setlength{\tabcolsep}{5pt}
    \begin{tabular}{lll}
        \toprule
        Benchmark & Main-text metrics & Supplemental evidence \\
        \midrule
        Ising $4\times4$ & TV, KL, $\chi^2$ & regime sweep; observables \\
        Ising $16\times16$ & Mag., Corr. agg., $x_\uparrow$ JS
            & $J_E$, Corr.-curve; marginal plots \\
        Potts $16\times16$ & Mag., Corr. agg., Mode $\ell_1$
            & $J_E$, Corr.-curve, CV JS; marginal/CV plots \\
        BA Max-Cut & $R_{\max}$, $R_{\rm avg}$
            & stage-count and path-schedule ablations \\
        \bottomrule
    \end{tabular}
\end{table}

\paragraph{Exact-state metrics.}
\label{app:exact-metrics}
For terminal samples $X_1,\ldots,X_N$, define
$\widehat p_N(x)=N^{-1}\sum_n\mathbf 1\{X_n=x\}$. Against the exact target $\pi$,
we report
\begin{align*}
    \mathrm{TV}(\widehat p_N,\pi)
    &=\frac12\sum_x|\widehat p_N(x)-\pi(x)|, \\
    \mathrm{KL}(\widehat p_N\Vert\pi)
    &=\sum_{x:\widehat p_N(x)>0}\widehat p_N(x)
      \log\frac{\widehat p_N(x)}{\pi(x)}, \\
    \chi^2(\widehat p_N\Vert\pi)
    &=\sum_x\frac{(\widehat p_N(x)-\pi(x))^2}{\pi(x)}.
\end{align*}
Energy and nearest-neighbor-correlation errors are reported as supplemental
observable diagnostics.

\paragraph{Lattice observables.}
\label{app:mdns-metrics}
We use the MDNS magnetization and aggregate-correlation
definitions~\citep{zhu2025mdns}. Let $P$ denote the evaluated sample law and
$P_{\rm ref}$ the corresponding reference law. The Ising magnetization error is
\begin{equation*}
    \mathrm{Mag}_{\mathrm I}(P,P_{\rm ref})
    =\frac{1}{2L}\left[
      \sum_r|m_r^{\mathrm{row}}(P)-m_r^{\mathrm{row}}(P_{\rm ref})|
      +\sum_c|m_c^{\mathrm{col}}(P)-m_c^{\mathrm{col}}(P_{\rm ref})|
    \right].
\end{equation*}
Here
\begin{equation*}
    m_r^{\mathrm{row}}(P)=\frac1L\sum_c\mathbb E_P[s_{r,c}],
    \qquad
    m_c^{\mathrm{col}}(P)=\frac1L\sum_r\mathbb E_P[s_{r,c}]
\end{equation*}
are the row- and column-averaged spin profiles.

The corresponding Potts error is
\begin{equation*}
    \mathrm{Mag}_{\mathrm P}(P,P_{\rm ref})
    =\frac{1}{2L}\left[
      \sum_r|M_r^{\mathrm{row}}(P)-M_r^{\mathrm{row}}(P_{\rm ref})|
      +\sum_c|M_c^{\mathrm{col}}(P)-M_c^{\mathrm{col}}(P_{\rm ref})|
    \right],
\end{equation*}
where, for site $i=(r,c)$,
\begin{equation*}
    M_i(P)=\frac{q\max_aP(x_i=a)-1}{q-1},
    \qquad
    M_r^{\mathrm{row}}(P)=\sum_cM_{r,c}(P),
    \qquad
    M_c^{\mathrm{col}}(P)=\sum_rM_{r,c}(P).
\end{equation*}

The main-text correlation score is the MDNS-style aggregate error
\begin{equation*}
    \mathrm{CorrAgg}(P,P_{\rm ref})
    =\frac1{L^2}\sum_{k,l}\left(
      |C^{\mathrm{row}}_P(k,l)-C^{\mathrm{row}}_{P_{\rm ref}}(k,l)|
      +|C^{\mathrm{col}}_P(k,l)-C^{\mathrm{col}}_{P_{\rm ref}}(k,l)|
    \right).
\end{equation*}
Here
\begin{equation*}
    C^{\mathrm{row}}_P(k,l)=\sum_c K((k,c),(l,c);P),\qquad
    C^{\mathrm{col}}_P(k,l)=\sum_r K((r,k),(r,l);P),
\end{equation*}
with the family-specific kernels
\begin{align*}
    K_{\mathrm I}(i,j;P)
      &=\mathbb E_P[s_i s_j]-\mathbb E_P[s_i]\mathbb E_P[s_j], \\
    K_{\mathrm P}(i,j;P)
      &=P(x_i=x_j)-\frac1q.
\end{align*}
This is the empirical form of MDNS Eqs.~(28) and~(32) and is reported as
Corr. agg.

The supplemental pointwise correlation-curve error is
\begin{equation*}
    \mathrm{CorrCurve}_{\mathsf f}(P,P_{\rm ref})
    =\frac1{|\mathcal R_L|}\sum_{r\in\mathcal R_L}
      |c_{\mathsf f}(r;P)-c_{\mathsf f}(r;P_{\rm ref})|,
    \qquad \mathsf f\in\{\mathrm I,\mathrm P\},
\end{equation*}
where $\mathcal R_L=\{-L/2,\ldots,L/2-1\}$, $e_1$ is the first lattice axis,
and
\begin{align*}
    c_{\mathrm I}(r;P)
      &=\frac1{L^2}\sum_i\mathbb E_P[s_i s_{i+r e_1}], \\
    c_{\mathrm P}(r;P)
      &=\frac1{L^2}\sum_i\left[
        P(x_i=x_{i+r e_1})-\frac1q\right].
\end{align*}

\paragraph{Mode coverage.}
\label{app:distributional-metrics}
For normalized histograms $P,Q$ on shared bins, with $M=(P+Q)/2$, define
\begin{equation*}
    \operatorname{JS}(P,Q)
    =\frac12\operatorname{KL}(P\Vert M)
     +\frac12\operatorname{KL}(Q\Vert M).
\end{equation*}
The Ising phase metric is the JS divergence between the generated and SW laws
of
\begin{equation*}
    x_\uparrow(x)=\frac1{L^2}\sum_i\mathbf 1\{s_i=+1\}.
\end{equation*}
The main-text Potts phase error is
\begin{equation*}
    \mathrm{Mode}_{\mathrm P}(P,P_{\rm ref})
    =\left\|
      \operatorname{sort}(u(P))-
      \operatorname{sort}(u(P_{\rm ref}))
    \right\|_1,
\end{equation*}
where
\begin{equation*}
    u_a(P)=\mathbb P_{x\sim P}\!\left\{a=\arg\max_{b}f_b(x)\right\}.
\end{equation*}
Here $f_a(x)=L^{-2}\sum_i\mathbf 1\{x_i=a\}$, ties follow a fixed rule, and
all samples contribute to the score. For the supplemental order-parameter
marginal we use
\begin{equation*}
    m_{\mathrm P}(x)=\frac{q\max_a f_a(x)-1}{q-1}.
\end{equation*}
Potts also reports the MetaDNS two-dimensional collective variable
\begin{equation}
    z_1=f_1-\tfrac12(f_2+f_3),
    \qquad
    z_2=\tfrac{\sqrt3}{2}(f_2-f_3).
    \label{eq:potts-metadns-cv}
\end{equation}
CV JS is the JS divergence between generated and SW histograms on the fixed
$50\times50$ grid over $[-0.6,1.1]\times[-1,1]$, matching the projection,
range, and bin count in the official MetaDNS implementation
\citep{du2026metadns}.

All histogram grids are fixed independently of generated samples and shared
across methods, regimes, and seeds. We use 256 bins on $[-2,2]$ for Ising
energy per site, 256 bins on $[0,1]$ for $x_\uparrow$, and 256 bins on $[-2,0]$
for Potts energy per site; the energy intervals are the analytic supports of
the periodic $J=1,h=0$ models. Counts are normalized, natural logarithms are
used without a pseudocount, and the final bin includes its right endpoint.
Appendix~\ref{app:supplemental-diagnostics} reports energy JS, Corr.-curve MAE,
and Potts CV JS; the other histograms are visual diagnostics.

\paragraph{Max-Cut ratios.}
For the terminal sample set $\mathcal S_G$ of graph $G$, we compute
\begin{equation*}
    R_{\max}(G)=\max_{x\in\mathcal S_G}\frac{C_G(x)}{C_G^\star},
    \qquad
    R_{\mathrm{avg}}(G)=\frac1{|\mathcal S_G|}
      \sum_{x\in\mathcal S_G}\frac{C_G(x)}{C_G^\star}.
\end{equation*}
The first is best-of-budget optimization quality and the second is average sample
quality. Ratios are computed per graph before averaging across the test set.

\subsection{Models, training, and model selection}
\label{app:baseline-policy}

We describe the common IEDG configuration first, followed by path freezing,
terminal checkpoint selection, and the separate source-promotion decision used
by two near-critical transports. Baseline provenance is reported at the end of
the subsection.

\paragraph{IEDG configuration.}
All IEDG-family runs use the cosine uniform-replacement schedule
$\kappa_t=\sin^2(\pi t/2)$ with endpoint clamp $\epsilon=10^{-4}$, the
positive Bregman objective in
Algorithm~\ref{alg:iedg-training}, and the fixed stage normalizer in
Appendix~\ref{app:fixed-stage-normalizer}. Terminal samples are generated by the
direct-$q$ posterior-marginal sampler in Algorithm~\ref{alg:direct-q-sampling};
sample counts and NFE are given in Table~\ref{tab:benchmark-configs}.

\begin{table}[t]
    \centering
    \caption{Training configurations for the formal IEDG results. Each lattice
    row is the path segment ending at the reported $\beta$; ``updates'' is its
    realized segment budget or global cap. Width/depth/heads (W/D/H) describes the principal
    guide; LR and EMA denote learning rate and exponential-moving-average decay,
    and source buffer is the number of initially generated source states (per
    graph for Max-Cut).}
    \label{tab:training-configurations}
    \scriptsize
    \setlength{\tabcolsep}{3.2pt}
    \resizebox{\textwidth}{!}{%
    \begin{tabular}{lcccccccc}
        \toprule
        Reported target & Backbone & W/D/H & Batch & Updates & Optimizer
        & LR & EMA & Source buffer \\
        \midrule
        Ising $4\times4$, $\beta=.28$ & tiny MLP & -- & 512
          & 90000 & Adam & $10^{-4}$ & .999 & 1000000 \\
        Ising $4\times4$, $\beta=.4407$ & tiny MLP & -- & 1024
          & 65000 & Adam & $10^{-4}$ & .995 & 1000000 \\
        Ising $4\times4$, $\beta=.6$ & tiny MLP & -- & 1024
          & 70000 & Adam & $10^{-4}$ & .995 & 1000000 \\
        \midrule
        Ising $16\times16$, $\beta=.28$ & RoPE DeiT2D & 64/4/4 & 1024
          & 35000 & Adam & $4\!\times\!10^{-5}$ & .999 & 131072 \\
        Ising $16\times16$, $\beta=.4407$ & RoPE DeiT2D & 64/4/4 & 512
          & 65000 & Adam & $4\!\times\!10^{-5}$ & .999 & 100000 \\
        Ising $16\times16$, $\beta=.6$ & RoPE DeiT2D & 64/4/4 & 512
          & 65000 & Adam & $4\!\times\!10^{-5}$ & .999 & 100000 \\
        \midrule
        Potts $16\times16$, $\beta=.5$ & RoPE DeiT2D & 96/3/4 & 512
          & 40000 & AdamW & $3\!\times\!10^{-5}$ & .999 & 100000 \\
        Potts $16\times16$, $\beta=1.005$ & RoPE DeiT2D & 96/3/4 & 512
          & 90000 & AdamW & $3\!\times\!10^{-5}$ & .999 & 100000 \\
        Potts $16\times16$, $\beta=1.2$ & RoPE DeiT2D & 96/3/4 & 512
          & 46000 & AdamW & $3\!\times\!10^{-5}$ & .999 & 100000 \\
        \midrule
        BA Max-Cut (all ranges) & edge-aware DNFS & 128/3/4 & 512
          & 125000 & AdamW & $10^{-4}$ & .999 & 256/graph \\
        \bottomrule
    \end{tabular}%
    }
\end{table}

The lattice backbone is a two-dimensional RoPE DeiT with periodic spatial
encoding. The Max-Cut model is trained separately for each node range. It embeds
normalized degree, signed neighbor agreement, local cut fraction, time, and a
problem identifier, then applies three edge-aware attention/message-passing
blocks and a global pooled update to produce nodewise binary logits. Each graph
minibatch contains 32 graphs and 16 source states per graph.

\paragraph{Frozen ESS paths.}
\label{app:frozen-ess-path}
For each target sequence, the groupwise thresholds
$(\eta_{\mathrm{med}},\eta_{10})$ are held fixed across all stages of a
development run. We vary these thresholds only during development to control
the realized path resolution and stage count. At each stage, the empirical
rule in Appendix~\ref{app:ess-calibration} evaluates the current generated
source buffer and selects the next endpoint. Once a canonical path is chosen,
its endpoints are frozen and replayed in all formal training and evaluation
runs. The frozen endpoints are listed in
Table~\ref{tab:frozen-ess-paths}. Path and stage-count ablations use separately specified frozen alternatives,
while matched-$K$ comparisons preserve the realized stage count of the
canonical path. Together with the stage budgets in
the resolved configurations, they define the paper paths.
One-shot DGM has no annealing path, and the uniform-path study in
Appendix~\ref{app:path-ablation} uses the same realized stage count as its
ESS-controlled counterpart.

\begin{table}[t]
    \centering
    \caption{Frozen ESS-controlled path segments used for the formal IEDG
    results. Lattice endpoints are inverse temperatures $\beta$; Max-Cut
    endpoints are effective inverse temperatures $\lambda$ in
    Equation~\ref{eq:maxcut-raw-energy}. The adaptive Ising-$16\times16$, $\beta=.28$
    interior endpoints are rounded to four decimals.}
    \label{tab:frozen-ess-paths}
    \scriptsize
    \setlength{\tabcolsep}{5pt}
    \begin{tabular}{lcc}
        \toprule
        Reported target & Frozen segment endpoints & Stages $K$ \\
        \midrule
        Ising $4\times4$, $\beta=.28$ & $0,.1,.19,.28$ & 3 \\
        Ising $4\times4$, $\beta=.4407$ & $.28,.36,.4407$ & 2 \\
        Ising $4\times4$, $\beta=.6$ & $.4407,.50232,.6$ & 2 \\
        \midrule
        Ising $16\times16$, $\beta=.28$
          & $0,.0769,.1502,.2186,.28$ & 4 \\
        Ising $16\times16$, $\beta=.4407$
          & $.28,.3195,.3555,.3865,.413,.4407$ & 5 \\
        Ising $16\times16$, $\beta=.6$
          & $.4407,.472,.51,.552,.6$ & 4 \\
        \midrule
        Potts $16\times16$, $\beta=.5$ & $0,.1763,.343,.5$ & 3 \\
        Potts $16\times16$, $\beta=1.005$
          & $.5,.625,.738,.839,.92,.97,1.005$ & 6 \\
        Potts $16\times16$, $\beta=1.2$
          & $1.005,1.035,1.08,1.135,1.2$ & 4 \\
        \midrule
        BA Max-Cut (all ranges)
          & $0,.35,.798,1.3391,2.05,2.85,3.75,5$ & 7 \\
        \bottomrule
    \end{tabular}%
\end{table}

\paragraph{Terminal checkpoint selection.}
EMA checkpoints saved every 2,000 updates are evaluated at $\gamma=1$ on the
validation reference. Ising $4\times4$ minimizes TV. Ising $16\times16$
minimizes Corr.-curve MAE subject to Mag. constraint. Potts $16\times16$ minimizes Corr. agg.,
with Mag. constraint at $\beta=0.5$ and no constraint at the other targets. If no
large-lattice checkpoint satisfies an active constraint, the same primary
metric is minimized without it. Selection uses validation only; the chosen
checkpoint is then frozen for test evaluation. When another stage follows, its
$\gamma=1$ state also initializes the next-stage optimizer.

\paragraph{Near-critical source promotion.}
\label{app:source-promotion}
Validation-based source promotion is used only for Ising $16\times16$,
$0.28\!\rightarrow\!0.4407$, and Potts $16\times16$,
$0.5\!\rightarrow\!1.005$. Every 2,000 updates, candidate EMA checkpoints are
evaluated with $\gamma_{\mathrm{src}}\in\{1,1.1\}$ on the validation
SW pool. The same primary metric and active Mag. constraint select a
$(\text{checkpoint},\gamma_{\mathrm{src}})$ pair, which defines the next source
buffer and its frozen teacher; optimizer initialization uses the terminal
checkpoint selected above. Source promotion may use $\gamma_{\mathrm{src}}=1.1$;
all reported terminal samples use $\gamma_{\mathrm{term}}=1$.

\paragraph{Baseline provenance.}
Table~\ref{tab:baseline-provenance} distinguishes local results, released
checkpoints, and numbers from prior work. Local rows share our references and
metric code; published values are marked by $\dagger$ in the main tables.

\begin{table}[H]
    \centering
    \caption{Baseline provenance and terminal generation protocol.}
    \label{tab:baseline-provenance}
    \scriptsize
    \setlength{\tabcolsep}{4pt}
    \renewcommand{\arraystretch}{0.92}
    \begin{tabular}{llll}
        \toprule
        Comparison & Parameters or source & Terminal generation & Evidence \\
        \midrule
        Uniform (Max-Cut) & none & i.i.d. base samples & local \\
        Analytic preconditioner & fixed BP field, $\Delta_\psi\equiv0$
            & direct-$q$, matched samples/NFE & local \\
        One-shot DGM & trained by us & direct-$q$ & local \\
        UDNS & DASBS-based reproduction & factorized uniform CTMC & local \\
        MDNS (Ising $4\times4$) & value from MDNS & reported native setup & published$^\dagger$ \\
        MDNS ($16\times16$ lattices) & released checkpoints & native sampler & local \\
        MetaDNS ($16\times16$ lattices) & released checkpoints & native + reweight/resample & local \\
        PDNS (Max-Cut) & PDNS v1, Table~4 & -- & published$^\dagger$ \\
        \bottomrule
    \end{tabular}
\end{table}

The analytic-preconditioner row samples from the fixed BP field in
Appendix~\ref{app:analytic-preconditioning}, with no learned residual
($\Delta_\psi\equiv0$), using the same direct-$q$ sampler, terminal sample count,
and NFE as IEDG. IEDG and one-shot DGM use the same terminal sample count and NFE, apply no local
search after sampling, and set $\gamma_{\mathrm{term}}=1$. One-shot DGM uses the
same architecture family and aggregate optimizer budget, concentrated on a
single target. The UDNS rows in Tables~\ref{tab:ising4-exact}
and~\ref{tab:ising4-exact-sweep} are our local DASBS-based
reproduction~\citep{guo2026dasbs}, evaluated with the same exact-state metrics.

For the large lattices, MDNS and MetaDNS use their released checkpoints and
native samplers~\citep{zhu2025mdns,du2026metadns}. MetaDNS generates 4096
autoregressive proposals and converts them to 4096 unweighted samples by
normalized physical importance weighting followed by systematic resampling.
The published MDNS value for Ising $4\times4$ at $\beta=0.6$ uses the warm-start
setting reported in that paper. Max-Cut uses the published PDNS
results~\citep{guo2025pdnsv1}. We follow the PDNS v1 Table~4 benchmark specification, matching the BA graph
family, node-count ranges, sampling budget, and reported metrics. PDNS values
are quoted from that table; IEDG is evaluated on independently generated
held-out graphs under the same specification. Exact enumeration and SW are evaluation references, not learned baselines.

\subsection{Data separation and reporting}
\label{app:statistics}

Development uses generated source buffers and energy values to freeze the paths.
Validation uses SW seed 1 and rollout seed
1701 to select terminal checkpoints at $\gamma=1$ and, for
Section~\ref{app:source-promotion}, near-critical source pairs. After these
choices are frozen, test evaluation uses SW seed 0 and rollout seed 2701.
Reference-pool sizes are specified in Section~\ref{app:reference-protocol};
each large-lattice model rollout contains 4,096 states. Neither SW pool enters
the gradient objective or ESS path selection. Max-Cut test graphs and certified
optima never enter training or model selection.

\section{Additional Experimental Results}
\label{app:additional-experiments}

The evidence below addresses four questions left open by the main tables: how
many stages are useful, whether ESS-controlled scheduling helps at fixed stage
count, how sensitive IEDG is to its two large-lattice training components, and
which residual errors remain under distributional and configuration-level
checks.
All studies use the benchmark definitions and metrics in
Appendix~\ref{app:experiments}. Within each ablation, variants share
the optimizer budget, architecture, source-buffer size, terminal NFE, sample
count, and validation seeds; each path generates its own downstream buffers.
Throughout the ablations, $\theta$ denotes inverse temperature $\beta$ for the
lattices and effective inverse temperature $\lambda$ for Max-Cut. Phase error
denotes $x_\uparrow$ JS for Ising and Mode $\ell_1$ for Potts.

\subsection{How many annealing stages are needed?}
\label{app:stage-count-ablation}

Table~\ref{tab:stage-count-ablation} asks whether shorter bridges improve the
terminal law enough to justify additional stages. Here $K$ is the number of
transitions, so the path contains $K+1$ endpoints. We compare three explicit
paths for Ising $0.28\!\rightarrow\!0.4407$ and Potts
$0.5\!\rightarrow\!1.005$, and two for Max-Cut on
[100,128]. The canonical row uses the frozen
ESS-controlled path from the main experiment; alternatives change $K$ under
their specified stagewise schedules. Schedule design at fixed $K$ is examined
in Section~\ref{app:path-ablation}.

\begin{table}[H]
    \centering
    \caption{Sensitivity to the number of annealing stages. Paths are shown in
    the result rows (endpoints rounded to three decimals); lower lattice errors
    and higher Max-Cut ratios are better. The phase metric is $x_\uparrow$ JS
    for Ising and Mode $\ell_1$ for Potts.}
    \label{tab:stage-count-ablation}
    \scriptsize
    \setlength{\tabcolsep}{2.5pt}
    \resizebox{\textwidth}{!}{%
    \begin{tabular}{llcccc}
        \toprule
        Setting & $\{\theta_k\}_{k=0}^{K}$ & $K$
        & Mag. & Corr. agg. & Phase error \\
        \midrule
        Ising $0.28\!\to\!0.4407$
          & $\{.280,.329,.373,.409,.441\}$ & 4
          & $7.26{\times}10^{-3}$ & $5.38{\times}10^{-1}$
          & $6.37{\times}10^{-2}$ \\
          & $\{.280,.320,.356,.387,.413,.441\}$ & 5 (canonical)
          & $1.06{\times}10^{-2}$
          & $\mathbf{1.50{\times}10^{-1}}$ & $\mathbf{2.52{\times}10^{-2}}$ \\
          & $\{.280,.313,.344,.373,.398,.420,.441\}$ & 6
          & $\mathbf{5.42{\times}10^{-3}}$
          & $2.10{\times}10^{-1}$ & $3.75{\times}10^{-2}$ \\
        \midrule
        Potts $0.5\!\to\!1.005$
          & $\{.500,.637,.759,.865,.946,1.005\}$ & 5
          & $1.30{\times}10^{-1}$ & $6.32{\times}10^{-1}$
          & $3.91{\times}10^{-2}$ \\
          & $\{.500,.625,.738,.839,.920,.970,1.005\}$ & 6 (canonical)
          & $\mathbf{1.08{\times}10^{-1}}$
          & $\mathbf{1.23{\times}10^{-1}}$ & $\mathbf{1.09{\times}10^{-2}}$ \\
          & $\{.500,.598,.692,.775,.851,.914,.963,1.005\}$ & 7
          & $3.93{\times}10^{-1}$ & $4.91{\times}10^{-1}$
          & $9.62{\times}10^{-2}$ \\
        \bottomrule
    \end{tabular}
    }

    \vspace{0.6em}
    \begin{tabular}{llccc}
        \toprule
        Setting & $\{\theta_k\}_{k=0}^{K}$ & $K$
        & $R_{\max}$ & $R_{\mathrm{avg}}$ \\
        \midrule
        BA [100,128]
          & $\{.000,.544,1.261,2.256,3.444,5.000\}$
          & 5 & 0.884 & 0.838 \\
        & $\{.000,.350,.798,1.339,2.050,2.850,3.750,5.000\}$
          & 7 (canonical)
          & \textbf{0.915} & \textbf{0.871} \\
        \bottomrule
    \end{tabular}
\end{table}

Stage count is not monotone in accuracy. The canonical paths give the strongest
joint correlation and phase recovery, although the finer Ising path improves
Mag.; adding another Potts bridge weakens all three errors. Max-Cut instead
benefits from the denser path, showing that useful resolution depends on the
target rather than stage count alone.

\subsection{Does ESS-controlled scheduling help at fixed stage count?}
\label{app:path-ablation}

At the canonical stage count, we compare the frozen ESS-controlled path with a
uniform grid over the same endpoints and update budget. Let $K^\star$ be the
number of transitions in the ESS-controlled path for Ising
$0.28\!\rightarrow\!0.4407$ or Max-Cut. The uniform comparator uses the frozen grid
\begin{equation*}
    \theta^{\mathrm{uni}}_k
    =\theta_0+\frac{k}{K^\star}(\theta_{K^\star}-\theta_0),
    \qquad k=0,\ldots,K^\star.
\end{equation*}

\begin{table}[H]
    \centering
    \caption{ESS-controlled and uniform schedules at matched stage count and
    total training/evaluation budgets. Paths are rounded to three decimals.}
    \label{tab:path-ablation}
    \scriptsize
    \setlength{\tabcolsep}{2.5pt}
    \resizebox{\textwidth}{!}{%
    \begin{tabular}{lllcccc}
        \toprule
        Setting & Schedule & $\{\theta_k\}_{k=0}^{K}$ & $K$
        & Mag. & Corr. agg. & $x_\uparrow$ JS \\
        \midrule
        Ising $0.28\!\to\!0.4407$ & Matched-$K$ uniform
          & $\{.280,.312,.344,.376,.409,.441\}$ & 5
          & $\mathbf{5.58{\times}10^{-3}}$ & $3.17{\times}10^{-1}$
          & $2.61{\times}10^{-2}$ \\
        & ESS-controlled (canonical)
          & $\{.280,.320,.356,.387,.413,.441\}$ & 5
          & $1.06{\times}10^{-2}$ & $\mathbf{1.50{\times}10^{-1}}$
          & $\mathbf{2.52{\times}10^{-2}}$ \\
        \bottomrule
    \end{tabular}
    }

    \vspace{0.6em}
    \resizebox{\textwidth}{!}{%
    \begin{tabular}{lllccc}
        \toprule
        Setting & Schedule & $\{\theta_k\}_{k=0}^{K}$ & $K$
        & $R_{\max}$ & $R_{\mathrm{avg}}$ \\
        \midrule
        BA [100,128] & Matched-$K$ uniform
          & $\{.000,.714,1.429,2.143,2.857,3.571,4.286,5.000\}$
          & 7 & 0.903 & 0.864 \\
        & ESS-controlled (canonical)
          & $\{.000,.350,.798,1.339,2.050,2.850,3.750,5.000\}$
          & 7 & \textbf{0.915} & \textbf{0.871} \\
        \bottomrule
    \end{tabular}
    }
\end{table}

At matched stage count, uniform spacing gives the lower Ising Mag. error,
whereas ESS control gives better correlation and phase recovery. ESS control
also improves both Max-Cut ratios, supporting overlap-aware placement.

\subsection{Which practical components matter?}
\label{app:component-ablations}

\paragraph{Conditional reweighting.}
Table~\ref{tab:conditional-reweight-ablation} compares
$\lambda_{\mathrm{CR}}\in\{0.5,1.0,2.0\}$ on the Ising
$0.28\!\rightarrow\!0.4407$ and Potts $0.5\!\rightarrow\!1.005$ transitions.
The only configured change is the auxiliary-loss weight. $\lambda_{\mathrm{CR}}=1$ is canonical. We report the same $\gamma=1$, 4,096-sample protocol as in the main lattice tables.

\begin{table}[H]
    \centering
    \caption{Sensitivity to conditional-reweight strength. All other components,
    including analytic preconditioning and stagewise response normalization, remain
    enabled at their canonical settings.}
    \label{tab:conditional-reweight-ablation}
    \scriptsize
    \setlength{\tabcolsep}{4.5pt}
    \begin{tabular}{llccc}
        \toprule
        Setting & $\lambda_{\mathrm{CR}}$ & Mag. & Corr. agg. & Phase error \\
        \midrule
        Ising $0.28\!\to\!0.4407$ & 0.5 & $2.37{\times}10^{-2}$ & $3.14{\times}10^{-1}$ & $4.01{\times}10^{-2}$ \\
        & 1.0 (canonical) & $1.06{\times}10^{-2}$ & $\mathbf{1.50{\times}10^{-1}}$ & $2.52{\times}10^{-2}$ \\
        & 2.0 & $\mathbf{5.73{\times}10^{-3}}$ & $9.59{\times}10^{-1}$ & $\mathbf{1.89{\times}10^{-2}}$ \\
        \midrule
        Potts $0.5\!\to\!1.005$ & 0.5 & $1.99{\times}10^{-1}$ & $2.81{\times}10^{-1}$ & $1.24{\times}10^{-2}$ \\
        & 1.0 (canonical) & $\mathbf{1.08{\times}10^{-1}}$ & $\mathbf{1.23{\times}10^{-1}}$ & $\mathbf{1.09{\times}10^{-2}}$ \\
        & 2.0 & $3.31{\times}10^{-1}$ & $1.23$ & $1.53{\times}10^{-2}$ \\
        \bottomrule
    \end{tabular}
\end{table}

The canonical weight gives the best correlation recovery in both systems and
the best Potts phase balance. A larger weight improves the Ising phase error
but weakens correlations, while the weaker setting is less balanced.
$\lambda_{\mathrm{CR}}=1$ is a joint choice rather than a uniformly optimal
value for every statistic.

\paragraph{Analytic-preconditioner scale.}
We retain analytic preconditioning and vary only its strength on the same two
near-critical transitions. Table~\ref{tab:ratio-scale-ablation} compares
$\rho_{\mathrm{pre}}=\texttt{ratio\_scale}\in\{0.5,1.0\}$, with conditional
reweighting fixed to
$\lambda_{\mathrm{CR}}=1$.

\begin{table}[H]
    \centering
    \caption{Sensitivity to the analytic-preconditioner ratio scale in the
    near-critical regime; preconditioning remains enabled in every row.}
    \label{tab:ratio-scale-ablation}
    \scriptsize
    \setlength{\tabcolsep}{4.5pt}
    \begin{tabular}{llccc}
        \toprule
        Setting & $\rho_{\mathrm{pre}}$ & Mag. & Corr. agg. & Phase error \\
        \midrule
        Ising $0.28\!\to\!0.4407$ & 0.5 (canonical) & $\mathbf{1.06{\times}10^{-2}}$ & $\mathbf{1.50{\times}10^{-1}}$ & $\mathbf{2.52{\times}10^{-2}}$ \\
        & 1.0 & $1.73{\times}10^{-2}$ & $1.43$ & $8.92{\times}10^{-2}$ \\
        \midrule
        Potts $0.5\!\to\!1.005$ & 0.5 (canonical) & $\mathbf{1.08{\times}10^{-1}}$ & $\mathbf{1.23{\times}10^{-1}}$ & $\mathbf{1.09{\times}10^{-2}}$ \\
        & 1.0 & $3.38{\times}10^{-1}$ & $2.84$ & $4.26{\times}10^{-2}$ \\
        \bottomrule
    \end{tabular}
\end{table}

The half-scale setting gives lower errors across both transitions. This agrees
with the role of $\rho_{\mathrm{pre}}$ in Appendix~\ref{app:analytic-preconditioning}:
softening the local analytic field leaves the residual more freedom to recover
near-critical correlations and phase balance.

\subsection{What distributional errors remain?}
\label{app:supplemental-diagnostics}

\paragraph{Exact Ising distribution sweep.}
Table~\ref{tab:ising4-exact-sweep} reports the two Ising $4\times4$ conditions
omitted from the primary table. Together with Table~\ref{tab:ising4-exact}, it
gives the complete pre-specified regime sweep.

\begin{table}[H]
    \centering
    \caption{Full-distribution recovery in the remaining Ising $4\times4$
    regimes (lower is better). Exact MC is the equal-budget i.i.d.
    reference; UDNS is our DASBS-based reproduction and $\dagger$
    denotes values reported by MDNS. Bold marks the best neural sampler.}
    \label{tab:ising4-exact-sweep}
    \setlength{\tabcolsep}{7pt}
    \scriptsize
    \begin{tabular}{llccc}
        \toprule
        $\beta$ & Method & $\mathrm{TV}$
        & $\mathrm{KL}(\widehat p\Vert\pi)$
        & $\chi^2(\widehat p\Vert\pi)$ \\
        \midrule
        0.28 & Exact MC & $0.0664{\pm}0.00018$ & $0.0323{\pm}0.00021$
          & $0.0626{\pm}0.00064$ \\
        \cmidrule(lr){2-5}
        & One-shot DGM & 0.193 & 0.149 & 0.341 \\
        & UDNS (reproduced) & 0.0909 & 0.0412 & 0.0931 \\
        & MDNS ($F_{\rm WDCE}$)$^\dagger$ & \textbf{0.0799} & 0.0382 & 0.0868 \\
        & IEDG & 0.0860 & \textbf{0.0380} & \textbf{0.0823} \\
        \midrule
        0.4407 & Exact MC & $0.0221{\pm}0.00024$ & $0.0193{\pm}0.00010$
          & $0.0627{\pm}0.00285$ \\
        \cmidrule(lr){2-5}
        & One-shot DGM & 0.520 & 0.861 & 4.17 \\
        & UDNS (reproduced) & 0.249 & 0.416 & 22.0 \\
        & MDNS ($F_{\rm WDCE}$)$^\dagger$ & 0.0789 & 0.0375 & \textbf{0.0839} \\
        & IEDG & \textbf{0.0630} & \textbf{0.0300} & 0.116 \\
        \bottomrule
    \end{tabular}
\end{table}

\paragraph{Exact Ising observables.}
Table~\ref{tab:ising4-observables} tests whether the full-state gains translate
to familiar physical observables. It complements
Tables~\ref{tab:ising4-exact} and~\ref{tab:ising4-exact-sweep} with energy and
nearest-neighbor-correlation errors across the disordered, near-critical, and
ordered regimes; $C_{\rm nn}$
denotes the mean product of spins over undirected nearest-neighbor edges.

\begin{table}[H]
    \centering
    \caption{Observable errors on Ising $4\times4$ (lower is better). Exact MC
    is the equal-budget i.i.d. reference; bold marks the best neural sampler.}
    \label{tab:ising4-observables}
    \scriptsize
    \setlength{\tabcolsep}{6pt}
    \begin{tabular}{llcc}
        \toprule
        $\beta$ & Method & $|\Delta E|$ & $|\Delta C_{\rm nn}|$ \\
        \midrule
        0.28 & Exact MC & $0.00523{\pm}0.00609$ & $0.000150{\pm}0.000188$ \\
        \cmidrule(lr){2-4}
             & One-shot DGM & 3.53 & 0.108 \\
             & UDNS (reproduced) & 0.740 & 0.0224 \\
             & IEDG & \textbf{0.661} & \textbf{0.0203} \\
        \midrule
        0.4407 & Exact MC & $0.00940{\pm}0.00562$ & $0.000284{\pm}0.000177$ \\
        \cmidrule(lr){2-4}
               & One-shot DGM & 10.7 & 0.319 \\
               & UDNS (reproduced) & 1.32 & 0.0585 \\
               & IEDG & \textbf{0.876} & \textbf{0.0266} \\
        \midrule
        0.6 & Exact MC & $0.00355{\pm}0.00257$ & $0.000110{\pm}0.000056$ \\
        \cmidrule(lr){2-4}
            & One-shot DGM & 13.7 & 0.411 \\
            & UDNS (reproduced) & 0.470 & \textbf{0.00159} \\
            & IEDG & \textbf{0.298} & 0.00832 \\
        \bottomrule
    \end{tabular}
\end{table}

IEDG improves both observables over one-shot DGM throughout the sweep and over
UDNS in the disordered and near-critical regimes. In the ordered regime, IEDG
better matches energy while UDNS better matches nearest-neighbor correlation;
both remain above the Exact-MC floor.

\paragraph{Large-lattice residuals.}
The main table already measures local statistics and global phase coverage.
Here we report two residual diagnostics shared by both lattice families: energy
JS ($J_E$) tests the thermodynamic marginal, while Corr.-curve MAE tests spatial
dependence across separations. For Potts, fixed-grid CV JS additionally measures
the two-dimensional phase geometry. Tables~\ref{tab:large-lattice-distributions}
and~\ref{tab:potts-cv-js} evaluate these quantities on the same 4,096 generated
terminal samples per method; lower is better. Definitions and fixed evaluation
protocols are given in Appendix~\ref{app:mdns-metrics}.

\begin{table}[H]
    \centering
    \caption{Two complementary residual diagnostics for the $16\times16$ lattices:
    energy-marginal JS ($J_E$) and translational correlation-curve MAE
    (Corr.-curve; lower is better). Bold marks the best result.}
    \label{tab:large-lattice-distributions}
    \scriptsize
    \setlength{\tabcolsep}{3.8pt}
    \renewcommand{\arraystretch}{0.92}

    \textbf{(a) Ising $16\times16$}\par\vspace{0.25em}
    \begin{tabular}{llccccc}
        \toprule
        $\beta$ & Metric & Analytic precond. & MDNS & MetaDNS & One-shot DGM & IEDG \\
        \midrule
        0.28   & $J_E$       & 0.620 & \textbf{0.00230} & 0.00377 & 0.0123 & 0.00433 \\
               & Corr.-curve & 0.0462 & 0.00235 & \textbf{0.00125} & 0.0121 & 0.00682 \\
        \midrule
        0.4407 & $J_E$       & 0.693 & \textbf{0.00189} & 0.00548 & 0.693 & 0.0223 \\
               & Corr.-curve & 0.516 & 0.00467 & 0.00360 & 0.507 & \textbf{0.00275} \\
        \midrule
        0.6    & $J_E$       & 0.693 & \textbf{0.00117} & 0.00314 & 0.693 & 0.00445 \\
               & Corr.-curve & 0.848 & 0.00100 & 0.000908 & 0.803 & \textbf{0.000623} \\
        \bottomrule
    \end{tabular}

    \vspace{0.7em}
    \textbf{(b) Three-state Potts $16\times16$}\par\vspace{0.25em}
    \begin{tabular}{llccccc}
        \toprule
        $\beta$ & Metric & Analytic precond. & MDNS & MetaDNS & One-shot DGM & IEDG \\
        \midrule
        0.5   & $J_E$       & 0.0318 & 0.00334 & 0.00903 & 0.00246 & \textbf{0.00172} \\
              & Corr.-curve & 0.00390 & \textbf{0.000491} & 0.000973 & 0.000640 & 0.000627 \\
        \midrule
        1.005 & $J_E$       & 0.689 & \textbf{0.00330} & 0.00980 & 0.693 & 0.0413 \\
              & Corr.-curve & 0.325 & 0.00610 & 0.00379 & 0.339 & \textbf{0.00344} \\
        \midrule
        1.2   & $J_E$       & 0.693 & \textbf{0.00375} & 0.0146 & 0.693 & 0.0111 \\
              & Corr.-curve & 0.529 & \textbf{0.000261} & 0.00233 & 0.554 & 0.000699 \\
        \bottomrule
    \end{tabular}
\end{table}

\begin{table}[H]
    \centering
    \caption{Fixed-grid Potts CV JS across the disordered, near-critical, and
    ordered regimes (lower is better). Bold marks the best result.}
    \label{tab:potts-cv-js}
    \scriptsize
    \setlength{\tabcolsep}{5pt}
    \begin{tabular}{lccccc}
        \toprule
        $\beta$ & Analytic precond. & MDNS & MetaDNS & One-shot DGM & IEDG \\
        \midrule
        0.5   & 0.0143 & 0.0104 & 0.0119 & 0.00897 & \textbf{0.00868} \\
        1.005 & 0.646 & \textbf{0.0347} & 0.0548 & 0.658 & 0.0818 \\
        1.2   & 0.693 & \textbf{0.00332} & 0.0111 & 0.693 & 0.0329 \\
        \bottomrule
    \end{tabular}
\end{table}

Across these diagnostics, iteration largely repairs the near-critical
correlation errors of one-shot DGM and preserves accurate spatial structure in
near-critical and ordered regimes. Its remaining discrepancies lie mainly in the
near-critical and ordered energy marginals and Potts CV-plane geometry,
indicating residual phase-weight and within-mode errors rather than complete
mode collapse.

The following figures localize these residuals in the same frozen terminal
samples: Figure~\ref{fig:supplemental-lattice-distributions} shows marginal
errors, Figure~\ref{fig:supplemental-lattice-structure} resolves their spatial
dependence, Figure~\ref{fig:supplemental-potts-cv} shows Potts phase geometry,
and Figure~\ref{fig:ordered-endpoint-samples} gives a configuration-level check.

\begin{figure}[H]
    \centering
    \includegraphics[width=0.98\textwidth]{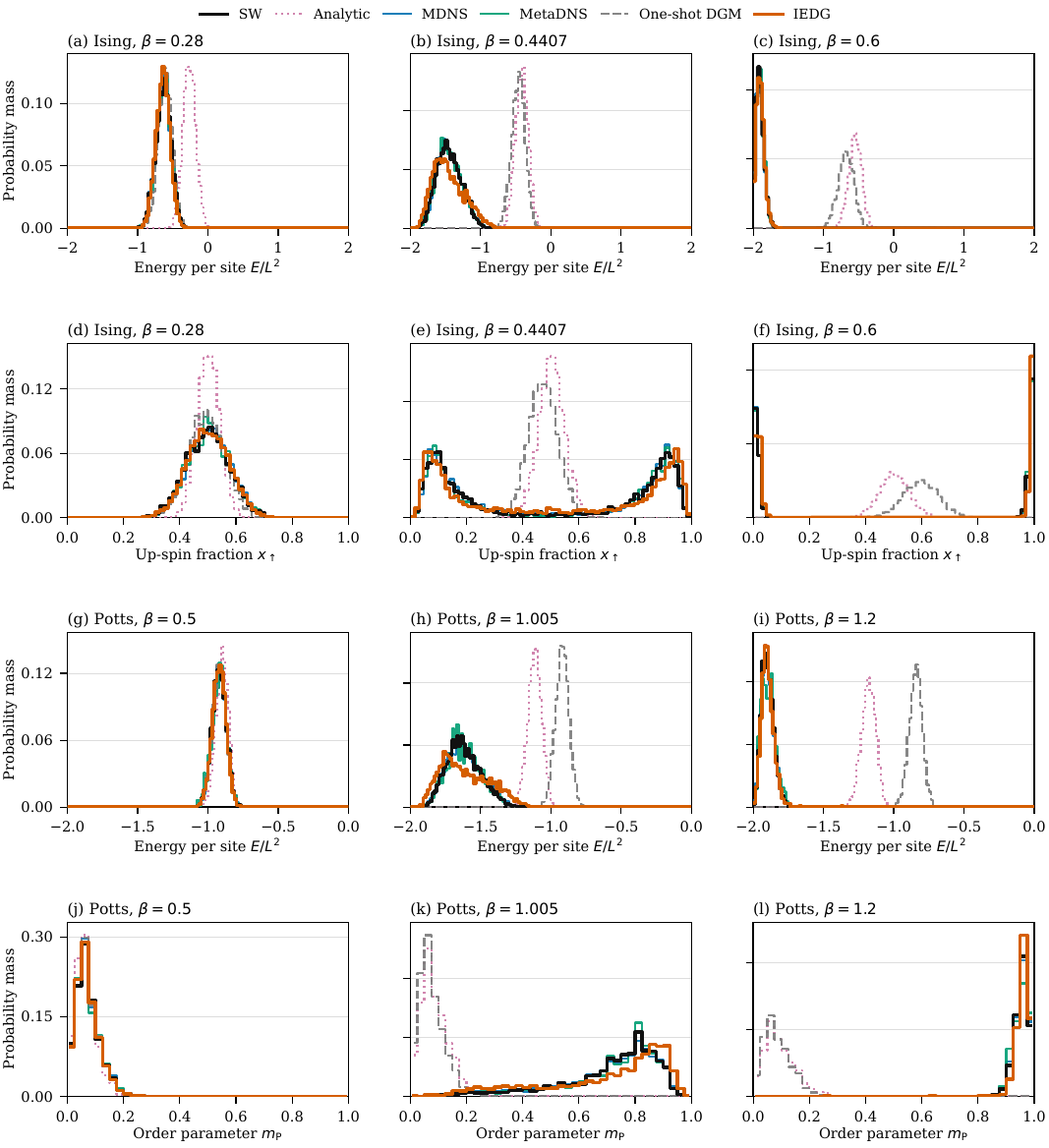}
    \caption{Regime-wide one-dimensional marginals from the frozen
    4,096-sample payloads. Columns traverse the disordered, near-critical, and
    ordered regimes; rows show Ising energy per site and $x_\uparrow$, followed
    by Potts energy per site and $m_{\mathrm P}$. All methods share histogram
    edges fixed from the analytic support, with SW (black) as the test
    reference.}
    \label{fig:supplemental-lattice-distributions}
\end{figure}

\begingroup
\setlength{\intextsep}{6pt}

\begin{figure}[H]
    \centering
    \includegraphics[width=0.90\textwidth]
        {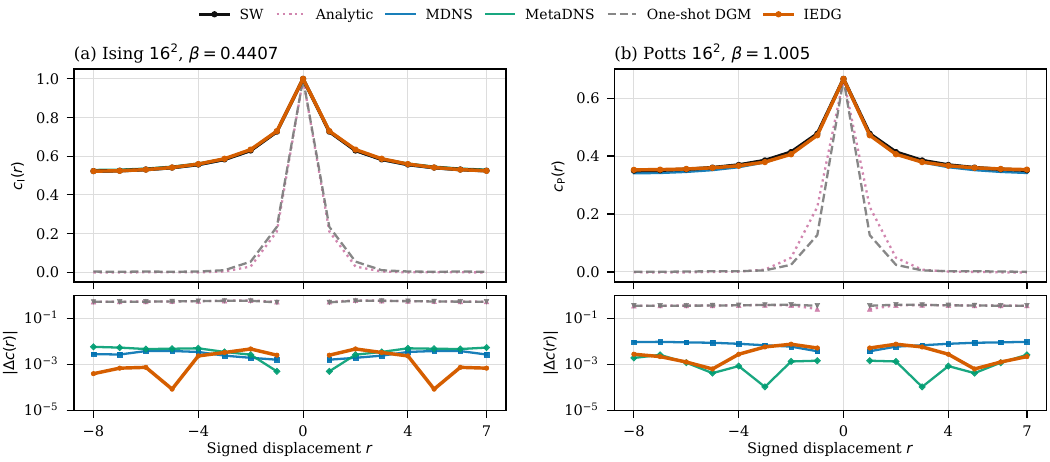}
    \vspace{-0.4em}
    \caption{Near-critical signed-displacement correlation profiles for Ising at $\beta=0.4407$ and Potts at $\beta=1.005$. Unlike Corr. agg., these profiles resolve spatial dependence at each separation. Upper panels compare each empirical curve with SW; lower panels show
    $|c_{\rm method}(r)-c_{\rm SW}(r)|$ on a logarithmic scale. The deterministic self-correlation at $r=0$ is omitted. Relative to one-shot DGM, the IEDG residual is reduced across separations rather than at a single displacement.}
    \label{fig:supplemental-lattice-structure}
\end{figure}
\vspace{-0.5em}

\begin{figure}[H]
    \centering
    \includegraphics[width=0.90\textwidth]
        {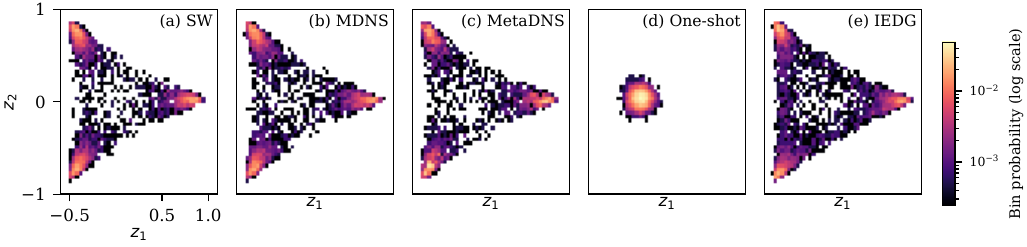}
    \vspace{-0.4em}
    \caption{Potts phase geometry at $\beta=1.005$. Unlike Mode $\ell_1$,
    which compares only ranked sector masses, the two-dimensional CV plane
    exposes within-sector geometry. IEDG recovers the three symmetry-related
    directions absent from one-shot DGM, while excess inter-sector probability
    explains its remaining CV JS gap. All methods use the same fixed
    $50\times50$ grid and logarithmic probability scale.}
    \label{fig:supplemental-potts-cv}
\end{figure}
\vspace{-0.5em}

\begin{figure}[H]
    \centering
    \includegraphics[width=0.90\textwidth]
        {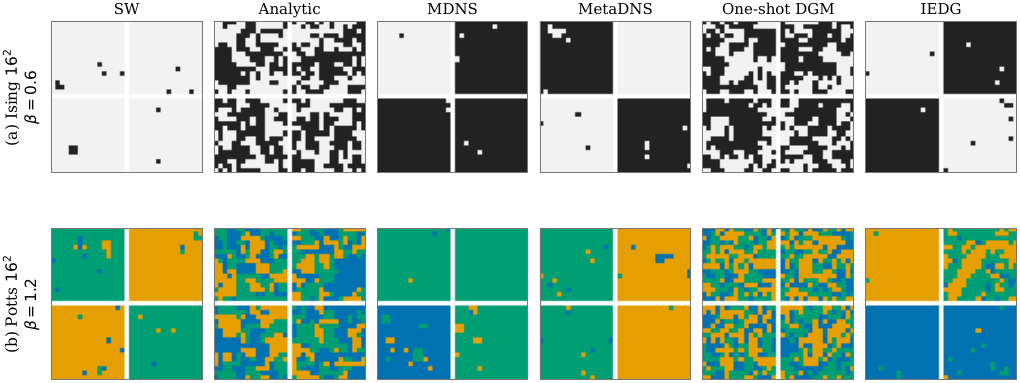}
    \vspace{-0.4em}
    \caption{Representative ordered-endpoint configurations. Columns denote
    methods; rows show Ising at $\beta=0.6$ and Potts at $\beta=1.2$.
    Indices $\{973,2172,2598,4081\}$ are drawn once with seed 2701 and reused
    across each method's independently generated 4,096-state payload. This qualitative check is
    not used for checkpoint selection; Potts colors are shared across methods.}
    \label{fig:ordered-endpoint-samples}
\end{figure}

\endgroup

\FloatBarrier

\end{document}